\def\eqref#1{equation~\ref{#1}}
\def\1{\bm{1}}
\DeclareMathAlphabet{\mathsfit}{\encodingdefault}{\sfdefault}{m}{sl}
\SetMathAlphabet{\mathsfit}{bold}{\encodingdefault}{\sfdefault}{bx}{n}
\newcommand{\R}{\mathbb{R}}
\DeclareMathOperator*{\argmin}{arg\,min}
\theoremstyle{plain}
\newtheorem{theorem}{Theorem}[section]
\newtheorem{lemma}[theorem]{Lemma}
\theoremstyle{remark}
\definecolor{Gray}{gray}{0.9}
\newcommand{\gc}{\cellcolor[gray]{0.85}}
\title{Distilling Dataset into Neural Field}
\author{
Donghyeok Shin\textsuperscript{\textmd{1}}, HeeSun Bae\textsuperscript{\textmd{1}}, Gyuwon Sim\textsuperscript{\textmd{1}}, Wanmo Kang\textsuperscript{\textmd{1}} \& Il-Chul Moon\textsuperscript{\textmd{1,2}} \\
\textsuperscript{\textmd{1}}Korea Advanced Institute of Science and Technology (KAIST), \textsuperscript{\textmd{2}}summary.ai \\
\texttt{\{tlsehdgur0,cat2507,gkwlaks4886,wanmo.kang,icmoon\}@kaist.ac.kr}
}
\begin{document}
\maketitle

\begin{abstract}
\vspace{-0.5ex}
Utilizing a large-scale dataset is essential for training high-performance deep learning models, but it also comes with substantial computation and storage costs. To overcome these challenges, dataset distillation has emerged as a promising solution by compressing the large-scale dataset into a smaller synthetic dataset that retains the essential information needed for training. This paper proposes a novel parameterization framework for dataset distillation, coined Distilling Dataset into Neural Field (DDiF), which leverages the neural field to store the necessary information of the large-scale dataset. Due to the unique nature of the neural field, which takes coordinates as input and output quantity, DDiF effectively preserves the information and easily generates various shapes of data. We theoretically confirm that DDiF exhibits greater expressiveness than some previous literature when the utilized budget for a single synthetic instance is the same. Through extensive experiments, we demonstrate that DDiF achieves superior performance on several benchmark datasets, extending beyond the image domain to include video, audio, and 3D voxel. We release the code at \url{https://github.com/aailab-kaist/DDiF}.
\end{abstract}

\vspace{-1.5ex}
\section{Introduction}
\vspace{-1ex}
High performances from recent deep learning models are mainly driven by scaling laws \citep{bengio2007scaling,kaplan2020scaling}, which heavily rely on a large-scale dataset. However, utilizing the large-scale dataset incurs significant computation and storage costs. \textit{Dataset distillation} has been proposed as a potential solution to address these challenges \citep{wang2018dataset}. The goal of dataset distillation is to synthesize a small-scale synthetic dataset that encapsulates the essential information needed to train a deep learning model on a large-scale dataset. Naturally, one of the research directions in dataset distillation is defining the essential information and developing efficient methods to learn it \citep{zhao2020dataset,zhao2023dataset,cazenavette2022dataset}.

In parallel, another crucial research direction is parameterizing the small-scale synthetic dataset under the limited storage budget. The naive parameterization method constructs the synthetic instance in the same structure as the original instance. Due to the limitations of scalability and redundancy in this approach, various parameterization methods have been proposed to improve efficiency within the limited storage budget. Generally speaking, parameterization methods commonly employ low-dimensional codes and decoding functions that transform a code in reduced dimensions into a data instance of the original input space. Conceptually, the decoding functions can be categorized into 1) static decoding \citep{kim2022dataset, shin2024frequency}; 2) parameterized decoding \citep{deng2022remember,sachdeva2023farzi,lee2022dataset,liu2022dataset,wei2024sparse}; and 3) deep generative prior \citep{cazenavette2023generalizing,su2024d,su2024generative,zhong2024hierarchical}. Although these methods have shown promising results, they still exhibit limitations in coding efficiency, expressiveness, and applicability to diverse data structures. Additionally, there has been limited exploration of the theoretical foundations underlying these methods.

This paper introduces a new parameterization framework for dataset distillation that stores information into \textit{synthetic neural fields} under the limited storage budget, coined \underline{\textbf{D}}istilling \underline{\textbf{D}}ataset \underline{\textbf{i}}nto Neural \underline{\textbf{F}}ield (DDiF). A \textit{field} is a function that takes a coordinate as an input and returns a corresponding quantity, and a \textit{neural field} parameterizes the field using a neural network. DDiF employs the synthetic neural field as a container of distilled information that utilizes a small budget, and it decodes a synthetic instance by inputting a set of coordinates corresponding to the original instance. We emphasize that the neural field has a structural difference compared to conventional decoding functions in parameterization methods, which map a low-dimensional space to an instance-level space. Thanks to the continuous and coordinate-oriented nature of the neural field, DDiF effectively encodes information from high-dimensional data, which is a crucial challenge in dataset distillation. Furthermore, due to the inherent flexibility of neural networks \citep{lu2017expressive, raghu2017expressive}, the synthetic instance decoded by DDiF exhibits high expressiveness. In addition, DDiF is capable of encoding grid-based data from various modalities and decoding a data instance of resolution that was not encountered during the distillation process. We present a theoretical analysis of parameterization methods by investigating the expressiveness of the decoded synthetic instance through the feasible space view. Based on this theoretical understanding, we demonstrate that DDiF possesses a larger feasible space compared to some previous literature under the same utilized budget for a single synthetic instance. Across various evaluation scenarios, DDiF consistently exhibits performance improvements, generalization, robustness, and adaptability on diverse modality datasets.

In summary, our contributions are as follows:
\vspace{-1ex}
\begin{itemize}
    \item We propose a new parameterization framework for dataset distillation, Distilling Dataset into Neural Field (DDiF), which employs a neural field to parameterize a synthetic instance.
    \item We theoretically analyze the expressiveness of DDiF by investigating the feasible space covered by its decoded synthetic instances. Furthermore, we confirm that DDiF exhibits high expressiveness by comparing it with previous methods under the same utilized budget for a single synthetic instance.
    \item Through extensive experiments, we demonstrate that DDiF consistently achieves high performance on diverse grid-based data, including image, video, audio, and 3D voxel. Moreover, we present a new experimental design, \textit{cross-resolution generalization}, capable of measuring generalization performance on resolutions not encountered during training.
\end{itemize}

\vspace{-2ex}
\section{Preliminary}
\vspace{-1ex}
\subsection{Problem formulation}
\vspace{-1ex}
This paper focuses on dataset distillation for classification tasks. We define a given large dataset that needs to be distilled as $\mathcal{T}\coloneqq(X_\mathcal{T}, Y_\mathcal{T})=\{(x_{i},y_{i})\}_{i=1}^{|\mathcal{T}|}$, where $X_{\mathcal{T}}\coloneqq\{x_{i}\}_{i=1}^{|\mathcal{T}|}$ denotes a set of $D$-dimensional input data $x_{i}\in\mathcal{X}\subseteq\R^{D}$, and $Y_{\mathcal{T}}\coloneqq\{y_{i}\}_{i=1}^{|\mathcal{T}|}$ denotes a set of corresponding labels among $C$-classes $y_{i}\in\mathcal{Y}=\{1,..., C\}$. We assume that each data pair $(x_{i},y_{i})$ is drawn i.i.d from the distribution $P$. Let a classifier $f_{\theta}:\mathcal{X}\rightarrow\mathcal{Y}$ be a neural network parameterized by $\theta\in\Theta$. We also define a loss function $\ell:\mathcal{Y}\times\mathcal{Y}\rightarrow\R$.

The main goal of dataset distillation is to synthesize a small dataset such that a model trained on this synthetic dataset can generalize well to a large dataset. Formally, given a synthetic dataset $\mathcal{S}\coloneqq(X_\mathcal{S},Y_\mathcal{S})=\{(\tilde{x}_{j},\tilde{y}_{j})\}_{j=1}^{|\mathcal{S}|}$ where $X_{\mathcal{S}}\coloneqq\{\tilde{x}_{j}\}_{j=1}^{|\mathcal{S}|}$, $Y_{\mathcal{S}}\coloneqq\{\tilde{y}_{j}\}_{j=1}^{|\mathcal{S}|}$, and $|\mathcal{S}| \ll |\mathcal{T}|$, the objective of dataset distillation is formulated as follows:
\begin{align}
    \min_{\mathcal{S}}\mathbb{E}_{(x,y) \sim P}\big[\ell(f_{\theta_{\mathcal{S}}}(x),y)\big] \,\,\, \text{subject to} \,\,\, \theta_{\mathcal{S}} = \argmin_{\theta} \frac{1}{|\mathcal{S}|}\sum_{(\tilde{x}_j,\tilde{y}_j) \in \mathcal{S}}\ell\big(f_\theta(\tilde{x}_{j}),\tilde{y}_{j}\big)
\label{eq: dataset distillation objective}
\end{align}
Nonetheless, the optimization of \cref{eq: dataset distillation objective} is both computationally intensive and lacks scalability, as it entails a bi-level optimization problem for both $\theta$ and $\mathcal{S}$ \citep{zhao2020dataset,borsos2020coresets}. To overcome these issues, several studies have suggested the surrogate objectives to effectively capture essential information needed for training the neural network on $\mathcal{T}$, such as matching gradient \citep{zhao2020dataset}, distribution \citep{zhao2023dataset}, and trajectory \citep{cazenavette2022dataset}. For the sake of brevity, we denote these objectives as $\mathcal{L}(\mathcal{T}, \mathcal{S})$ throughout this paper.

\subsection{Parameterization of Dataset Distillation}
Input-sized parameterization, which sets a synthetic instance in the same format as a real instance, suffers from scalability as the dimension of a given instance increases. Also, input-sized parameterization does not utilize the storage budget efficiently because it contains redundant or irrelevant information \citep{lei2023comprehensive,yu2023dataset,sachdeva2023data}.

Addressing these concerns, several studies have proposed parameterization methods to enhance the efficiency of synthetic dataset. In general, parameterization methods employ 1) codes $Z\coloneqq\{z_{j}\}_{j=1}^{|Z|}$ where $z_{j}\in\R^{D'}$ and 2) decoding function $g_{\phi}:\R^{D'}\rightarrow\R^{D}$ to construct the synthetic dataset.\footnote{Although some studies \citep{deng2022remember,moser2024latent} use both $\tilde{y}$ and $z$ as inputs for $g_{\phi}$, we expressed it as $g_{\phi}(z)$ for the sake of uniformity.} Under this framework, a decoded synthetic instance is represented by a combination of code and decoding function i.e. $\tilde{x}_{j}=g_{\phi}(z_{j})$. Also, a set of decoded synthetic instances become $X_{\mathcal{S}}=\{g_{\phi}(z)|z\in Z\}$. Typically, parameterization methods use $Z$ and/or $\phi$ as targets for optimization and storage. Therefore, the storage budget for the parameterization is calculated based on the total number of parameters comprising $Z$ and/or $g_{\phi}$, and it is adjusted to ensure the budget constraint.

Based on the structure of the decoding function, they can be broadly categorized into 1) static decoding, 2) parameterized decoding, and 3) deep generative prior. Static decoding employs a non-parameterized decoding function $g$, such as resizing \citep{kim2022dataset} and frequency transform \citep{shin2024frequency}. These methods are fast, easy to apply, and do not require a budget for the decoding function. However, since this decoding function is fixed, the structure of code $z$ becomes limited without the ability to adaptively transform $g$. Also, using a static decoding function inevitably reduces expressiveness from a generalization perspective, leading to information loss.

Parameterized decoding utilizes a linear combination with learnable coefficients \citep{deng2022remember,sachdeva2023farzi}, decoder \citep{lee2022dataset,zheng2023leveraging}, autoencoder \citep{liu2022dataset,duan2023dataset}, or transformer structure \citep{wei2024sparse} as the decoding function $g_{\phi}$. Although flexible decoding functions are employed, the optimized parameters of the decoding function also be stored within the limited budget, necessitating the use of a simple structure (i.e., linear combination) or a lightweight neural network. It results in limited flexibility and presents challenges when extending to complex data types, such as video, 3D, and medical images.

Deep generative prior leverages a pretrained deep generative model without additional training, focusing only on optimizing the latent vector \citep{cazenavette2023generalizing, su2024d,su2024generative,zhong2024hierarchical}. This framework encourages better generalization to unseen architecture and scale to the high-dimensional dataset. However, it assumes easy access to a well-trained generative model, which might restrict the range of applications. Also, since the deep generative model contains a large number of parameters, the decoding process and backward update process take a long time.

\vspace{-1ex}
\subsection{Neural Field Representation} \label{sec: neural field}
\vspace{-1ex}
In physics and mathematics, a \textit{field} $F$ is a function that takes a coordinate in space and outputs a quantity \citep{xie2022neural}. If we apply the concept of the field to data modeling, a single grid-based data can be regarded as a field. For example, an RGB image is a function that maps from pixel locations to pixel intensity. Similarly, a video datatype is a function that additionally takes time as input, and a 3D datatype is a function that outputs occupancy value on the 3D coordinate system.

According to the universal approximation theorem \citep{cybenko1989approximation}, any field can be parameterized by a neural network, which is referred to as a \textit{neural field} $F_\psi$. It implies that grid-based data can be expressed as a neural network. Let $\mathcal{C}\coloneqq\{c_{i}\}_{i\in\mathcal{I}}$ be a set of coordinates of grid-based data and $\mathcal{Q}\coloneqq\{q_{i}\}_{i\in\mathcal{I}}$ be a set of corresponding quantities. To encode a single grid-based data using a neural field $F_{\psi}$, we minimize a distortion measure, such as squared error, over all given coordinates as:
\begin{align}
    \min_{\psi} \sum_{i\in\mathcal{I}} \|F_{\psi}(c_{i}) - q_{i}\|_{2}^{2}
\label{eq: neural field objective}
\end{align}
Recently, the neural field has been adopted to many applications, such as representation learning \citep{park2019deepsdf,mildenhall2021nerf}, generative modeling \citep{skorokhodov2021adversarial,dupont2021generative}, medical imaging \citep{shen2022nerp,zang2021intratomo}, and robotics \citep{li20223d}.

\vspace{-2ex}
\section{Methodology} \label{sec: methodology}
\vspace{-1ex}
\begin{figure*}[t]
    \centering
    \includegraphics[width=0.95\textwidth]{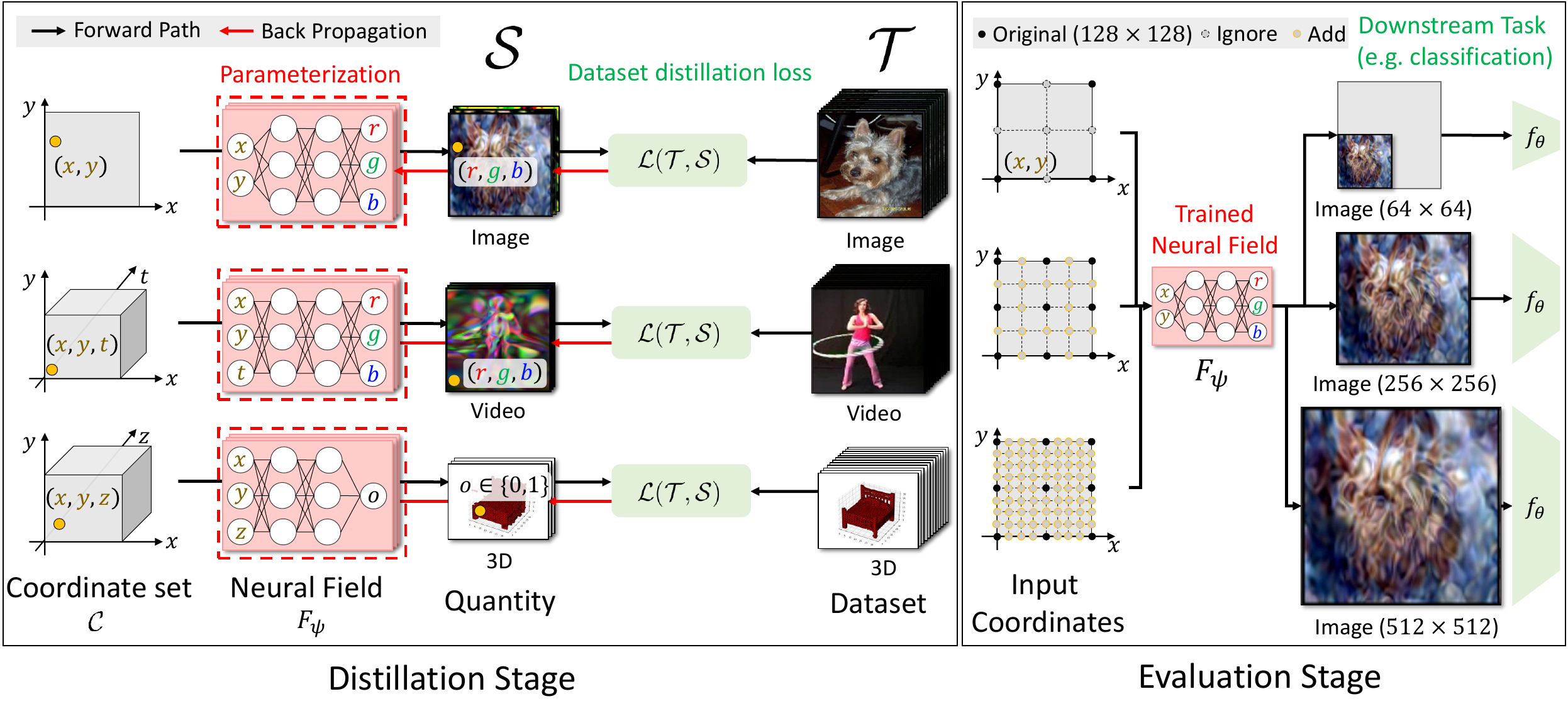}
    \vspace{-1ex}
    \caption{Overview of DDiF. Each decoded synthetic instance is constructed by the output of each synthetic neural field $F_\psi$ by inputting coordinate set $\mathcal{C}$ (left). DDiF optimizes only the parameters $\psi$, as coordinate set $\mathcal{C}$ does not require optimization or storage. Also, DDiF is capable of encoding grid-based data from various modalities. In the evaluation stage (right), DDiF can decode the data with sizes that were not encountered during the distillation stage by adjusting the input coordinates.}
    \vspace{-3ex}
    \label{fig: overview}
\end{figure*}
This section proposes a new parameterization framework for dataset distillation that stores information of a real dataset in \textit{synthetic neural fields} under a limited storage budget, coined Distilling Dataset into Neural Field (DDiF). The core idea of this paper is to store the distilled information in the synthetic function. Although there are several candidates for the form of synthetic function, we primarily focus on (neural) \textit{field}. Figure \ref{fig: overview} illustrates the overview of DDiF. In the following, we begin by explaining the reasons for choosing neural field as the form of synthetic function. Then, we introduce our framework, DDiF, which parameterizes a synthetic instance using a neural field. Finally, we provide a theoretical analysis for a better understanding of parameterization and DDiF.

\subsection{Why neural field in Dataset distillation?} \label{sec: why neural field}
Although the neural field is widely adopted, no research has yet been conducted on integrating it into dataset distillation. Herein, we provide several properties, which are beneficial for dataset distillation due to its unique structural characteristic.

\vspace{-1ex}
\begin{wrapfigure}{r}{0.38\textwidth}
    \vspace{-4ex}
    \centering
    \begin{subfigure}[t]{\linewidth}
        \includegraphics[width=0.9\textwidth]{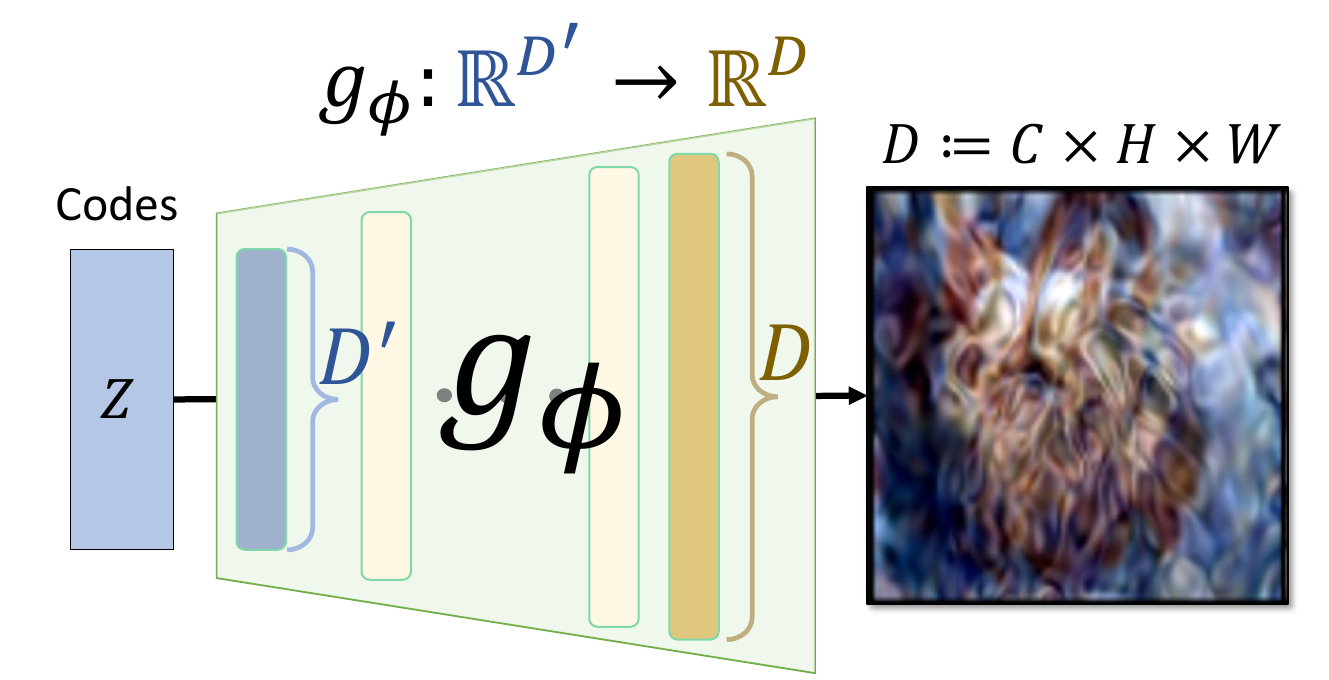}
        \vspace{-0.5ex}
        \caption{Previous decoding function}
    \end{subfigure}
    \begin{subfigure}[t]{\linewidth}
        \includegraphics[width=0.9\textwidth]{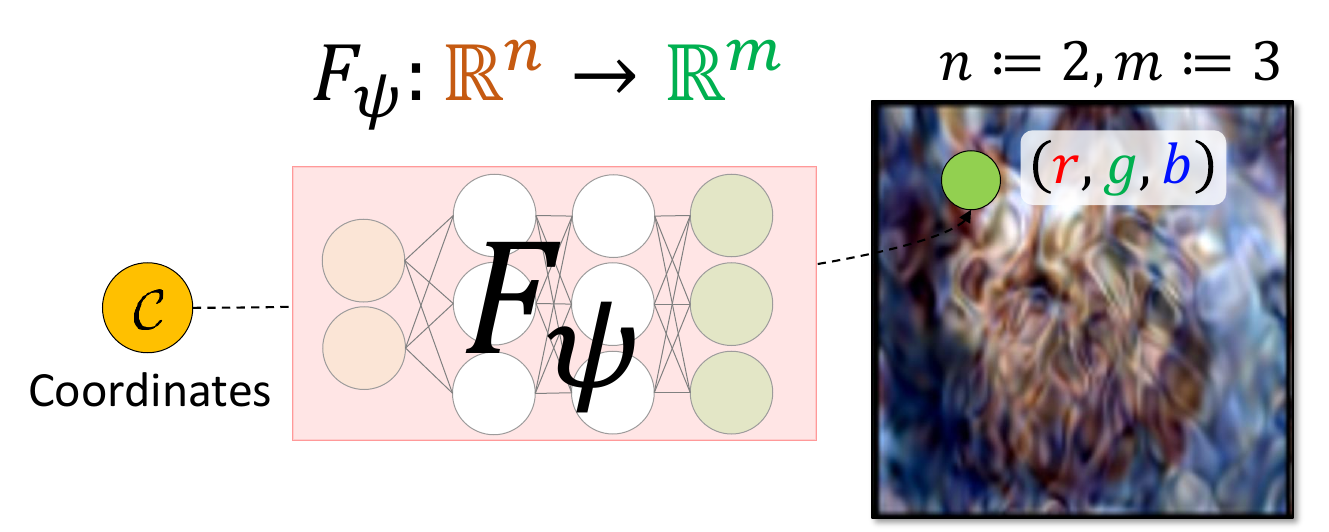}
        \vspace{-0.5ex}
        \caption{Neural field}
    \end{subfigure}
    \vspace{-2ex}
    \caption{Illustration of structural difference between conventional decoding function and neural field.}
    \vspace{-3ex}
    \label{fig: coding efficiency}
\end{wrapfigure}
\paragraph{Coding Efficiency.} The neural field efficiently encodes information of high-dimensional grid-based data. Distilling the high-dimensional dataset remains a crucial challenge in expanding the applicability of dataset distillation \citep{lei2023comprehensive}. Input-sized parameterization typically scales poorly with resolution due to the \textit{curse of discretization} \citep{mescheder2020stability}. While employing the decoding function $g_{\phi}:\R^{D'}\rightarrow\R^{D}$ could improve scalability, its output ultimately depends on the data dimension $D$. It implies that as the data dimension $D$ increases, a more complex decoding function $g_{\phi}$ is required, which becomes problematic given a limited storage budget. In contrast, the neural field stores information independently of the data dimension $D$. Its input and output dimensions are determined by the dimensions of the coordinate space and the quantity space, respectively, as depicted in Figure \ref{fig: coding efficiency}. For high-dimensional data, the neural field only requires a larger set of input coordinates. Moreover, we emphasize that the neural field can represent complex grid-based data, such as videos and 3D voxels, which are not limited to the image domain. Since such data are typically high-dimensional, the high coding efficiency of the neural field is particularly advantageous for complex grid-based dataset distillation. Therefore, the neural field provides a unified and efficient representation of high-dimensional grid-based data.

\vspace{-1ex}
\paragraph{Resolution Adaptiveness.} The neural field is robust to diverse resolution. In the real world, it is often necessary to resize data depending on downstream tasks \citep{wang2020deep,shorten2019survey}. Existing methods can only apply postprocessing on the optimized synthetic dataset, leading to insufficient or distorted information. Meanwhile, the neural field easily obtains various sizes of data by adjusting the set of input coordinates due to the continuous nature. Furthermore, since the neural field is a continuous function, it provides more accurate values for unseen coordinates. Please refer to Section \ref{sec: experimental results} for the empirical evidence of this claim by introducing a new experiment design, coined cross-resolution generalization.

\subsection{DDiF: Distilling Dataset into Neural Field}
In this section, we introduce a basic framework for integrating the neural field into dataset distillation. Specifically, DDiF parameterizes a single synthetic instance $\tilde{x}_{j}$ as a single neural field $F_{\psi_{j}}$. DDiF consists of two main components: 1) \textit{Coordinate set} $\mathcal{C}$ and 2) \textit{Synthetic neural fields} $F_{\Psi}$.

\paragraph{Coordinate Set $\mathcal{C}$.} To define the function, it is first necessary to define the input space of the function. Fundamentally, a decoded synthetic instance by the parameterization method is the same shape as the real instance. Therefore, our synthetic function must be defined in the space where the information of the real instances is stored. Suppose that the given real instance $x\in X_{\mathcal{T}}$ is $n$-dimensional grid representation with resolution $N_{k}, k=1,...,n$, and each element contains $m$ values, i.e. $m=3$ of RGB values. Formally, the real instance $x$ is element in  $\R^{m\times N_1\times\cdots\times N_n}$. Then, the coordinate set $\mathcal{C}$, where the values of $x$ are stored, is defined by a bounded set of lattice points:
\begin{eqnarray*}
    \mathcal{C}\coloneqq\bigg\{\big(i_1,i_2,...i_n\big)\Big|i_{k}\in\big\{0,1,\cdots,N_{k}\big\}, \,\,\forall k=1,...,n\bigg\}
\end{eqnarray*}
Note that there are several properties of coordinate set $\mathcal{C}$, which become advantages in dataset distillation. 
First, since every real instance $x\in X_{\mathcal{T}}$ is defined on the same coordinate set $\mathcal{C}$ (the only difference is the value on each coordinate), we do not need to consider the coordinate individually. Also, $\mathcal{C}$ is easily obtained if only the shape of the decoded instance is defined, without any additional information. For instance, assume that we want to get $N \times N$-shaped data instances. Then, $\mathcal{C}$ is a set of lattice points in $[0, N]\times[0, N]$. Due to these static and bounded characteristics, DDiF does not need to optimize or store the coordinate set $\mathcal{C}$. Based on this property, DDiF is a decoder-only parameterization framework that leverages a flexible decoding function $g_{\phi}$ without inferring codes $Z$, which structurally differs from previous methods. % and the traditional compressed codes become the parameters $\psi$ of the synthetic neural field $F_{\psi}$ that will be stored for distillation. 

\paragraph{Synthetic Neural Fields $F_{\Psi}$.} DDiF utilizes neural field $F_{\psi}:\R^{n}\rightarrow\R^{m}$ to obtain the decoded synthetic instance $\tilde{x}$ by inputting the coordinate set $\mathcal{C}$.\footnote{We defined $\mathcal{C}$ for dataset distillation training, but the domain of $F_{\psi}$ is the entire $n$-dimensional space $\R^{n}$.} Specifically, given a coordinate set $\mathcal{C}$, the decoded synthetic instance by DDiF is $\tilde{x}=F_{\psi}(\mathcal{C})\coloneqq[F_{\psi}(c)]_{c\in\mathcal{C}}$. In DDiF, since a decoded synthetic instance $\tilde{x}$ and a synthetic neural field $F_{\psi}$ have one-to-one correspondence, obtaining $K$ decoded synthetic instances requires $K$ synthetic neural fields. We denote the parameter set of synthetic neural fields as $\Psi\coloneqq\{\psi_{j}\}_{j=1}^{|\Psi|}$ and the set of synthetic neural fields as $F_{\Psi}\coloneqq\{F_{\psi_{j}}\}_{j=1}^{|\Psi|}$. For the structure of the synthetic neural field $F_{\psi}$, we follow the tradition of the neural field \citep{mildenhall2021nerf,tancik2020fourier}, which utilizes a simple $L$-layer neural network:
\begin{eqnarray*}
    F_{\psi}(c)=W^{(L)}(h^{(L-1)} \circ\cdots\circ h^{(0)})(c)+b^{(L)}, \,\,\,\,\,\, h^{(l)}(c)=\sigma^{(l)}(W^{(l)}c+b^{(l)})
\end{eqnarray*} where $W^{(l)}\in\R^{d_{l}\times d_{l-1}}$, $b^{(l)}\in\R^{d_{l}}$ are weights and bias at layer $l$. $\sigma^{(l)}$ denotes a nonlinear activation function. Under this formulation, $\psi$ becomes $\big\{W^{(l)},b^{(l)}\big\}_{l=0}^{L}$. To avoid the spectral bias \citep{rahaman2019spectral,xu2019frequency} that limits the expressiveness of neural fields, we employ a sine activation function \citep{sitzmann2020implicit} by default, which is widely used in neural fields.

\paragraph{Learning Framework.} Given a coordinate set $\mathcal{C}$ and a parameter set of synthetic neural fields $\Psi$, a set of decoded synthetic instances is represented by $X_{\mathcal{S}}=\{F_{\psi_{j}}(\mathcal{C})\}_{j=1}^{|\Psi|}$. Under the arbitrary dataset distillation loss $\mathcal{L}(\mathcal{T},\mathcal{S})$, the overall optimization of DDiF is formulated as follows:
\begin{align}
    \min_{\Psi} \mathcal{L}(\mathcal{T},\mathcal{S}) \,\,\, \text{where} \,\,\, \mathcal{S}=\big\{(F_{\psi_{j}}(\mathcal{C}),\tilde{y}_{j})\big\}_{j=1}^{|\Psi|}
\label{eq: DDiF objective}
\end{align}
DDiF has no limitations in applying an optimizable soft label \citep{sucholutsky2021soft,cui2023scaling}, but we utilize predefined one-hot labels to ensure consistency with previous parameterization i.e. $\tilde{y}_{j}=y_{j}\in Y_\mathcal{T}$. Please refer to Appendix \ref{appendix: soft label} for the compatibility with soft label. In practice, dataset distillation commonly utilizes randomly sampled real instance $x\in X_{\mathcal{T}}$ as the initialization of synthetic instance $\tilde{x}$. In the same context, we conduct the warm-up training for synthetic neural fields $F_{\Psi}$. Concretely, we train $F_{\Psi}$ using \cref{eq: neural field objective} with randomly selected $|\Psi|/C$ samples for each class. Appendix \ref{appendix: algorithm} specifies training procedure and decoding process of DDiF.

\paragraph{Budget calculation.} As \cref{eq: DDiF objective} demonstrates, the optimization target of DDiF is $\Psi=\{\psi_{j}\}_{j=1}^{|\Psi|}$. In detail, each synthetic neural field $F_{\psi}$ utilize $d_{0}(n+1)+\sum_{l=1}^{L-1} d_{l}(d_{l-1}+1)+m(d_{L-1}+1)\eqcolon b$ parameters. We emphasize that $b$ does not depend on the data dimension $D$, so high resolution does not necessarily increase the budget of $F_{\psi}$. 
When the storage budget is limited to $B$, we configure the structure of $F_{\psi}$, such as the width $d_{l}$ and the number of layers $L$, to satisfy $|\Psi|\times b \leq B$.

\vspace{-1ex}
\subsection{Theoretical Analysis}
\vspace{-1ex}
Even though several parameterization methods for dataset distillation are proposed, there has been little discussion regarding the theoretical understanding of their methods.
In this section, we provide a theoretical analysis of parameterization methods by investigating the expressiveness of the decoded synthetic instance. Next, we analyze the expressiveness of DDiF when employing the sine activation function. Lastly, we demonstrate that DDiF exhibits higher expressiveness than the previous work, FreD \citep{shin2024frequency}, under the same utilized budget for a single synthetic instance.

We conjecture that the expressiveness of the decoded synthetic instance corresponds to the coverage of its respective data space. Accordingly, the optimization feasibility of dataset distillation depends on the coverage or \textit{feasible space} of the decoded synthetic instance. Based on this conceptual idea, Proposition \ref{theory: proposition 1} characterizes the relationship between the feasible space of decoded synthetic instances and the optimal value of the dataset distillation objective, when the number of decoded synthetic instances is the same. Herein, we assume only synthetic inputs $X_{\mathcal{S}}$ as the optimization variable. Consequently, dataset distillation loss $\mathcal{L}(\mathcal{T},\mathcal{S})$ is simply expressed as a function of $X_\mathcal{S}$ i.e. $\mathcal{L}(\mathcal{T},\mathcal{S})$ is represented by $\mathcal{L}(X_{\mathcal{S}})$.
\vspace{0.5ex}
\begin{restatable}{proposition}{prop}
    Consider two functions $g_1, g_2$ where $g_i:\mathcal{Z}_i\rightarrow\R^{D}$ for $i=1,2$. Also, consider two matrix variables $Z_i\coloneqq\big[z_{i1},...,z_{iM}\big]$ where their columns $z_{ij}\in\mathcal{Z}_i$ for $i=1,2$ and $j=1,...,M$. We denotes $\widehat{g_i}(Z_i)\coloneqq\big[g_i(z_{i1}),...,g_i(z_{iM})\big]$ for $i=1,2$. Set $\mathcal{G}_{i}\coloneqq\{g | g : \mathcal{Z}_{i}\rightarrow\R^{D}\}$ for $i=1,2$.
    If $g_{1}(\mathcal{Z}_1) \subseteq g_{2}(\mathcal{Z}_2)$ for any $g_1 \in \mathcal{G}_1$ and $g_2 \in \mathcal{G}_2$, then $\min_{g_1\in\mathcal{G}_1, Z_1\in\mathcal{Z}_1^{M}} \mathcal{L}(\widehat{g_{1}}(Z_1)) \geq \min_{g_2\in\mathcal{G}_2, Z_2\in\mathcal{Z}_2^{M}} \mathcal{L}(\widehat{g_{2}}(Z_2))$.
\label{theory: proposition 1}
\end{restatable}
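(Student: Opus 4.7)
The plan is to prove the inequality by a direct feasibility-embedding argument: show that the set of decoded matrices realizable on the $\mathcal{G}_1$ side is contained in the set realizable on the $\mathcal{G}_2$ side, so the minimum over the latter, larger feasible set cannot exceed the minimum over the former. Since $\mathcal{L}$ only depends on its input through the decoded matrix $X_{\mathcal{S}} = \widehat{g_i}(Z_i)$, any such embedding transfers the achieved loss value exactly.

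Concretely, I would first fix an arbitrary pair $(g_1, Z_1)$ with $g_1 \in \mathcal{G}_1$ and $Z_1 = [z_{11}, \ldots, z_{1M}] \in \mathcal{Z}_1^M$, and look at the decoded matrix $\widehat{g_1}(Z_1) = [g_1(z_{11}), \ldots, g_1(z_{1M})] \in \R^{D\times M}$. Next, pick any (fixed) $g_2 \in \mathcal{G}_2$ and invoke the hypothesis $g_1(\mathcal{Z}_1) \subseteq g_2(\mathcal{Z}_2)$: column by column, each $g_1(z_{1j}) \in g_1(\mathcal{Z}_1)$ therefore lies in $g_2(\mathcal{Z}_2)$, so there exists $z_{2j} \in \mathcal{Z}_2$ with $g_2(z_{2j}) = g_1(z_{1j})$. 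Assembling $Z_2 \coloneqq [z_{21}, \ldots, z_{2M}] \in \mathcal{Z}_2^M$ yields $\widehat{g_2}(Z_2) = \widehat{g_1}(Z_1)$, hence $\mathcal{L}(\widehat{g_2}(Z_2)) = \mathcal{L}(\widehat{g_1}(Z_1))$.

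Finally, since $(g_2, Z_2)$ is a particular element of $\mathcal{G}_2 \times \mathcal{Z}_2^M$, we get $\min_{g_2 \in \mathcal{G}_2, Z_2 \in \mathcal{Z}_2^M} \mathcal{L}(\widehat{g_2}(Z_2)) \leq \mathcal{L}(\widehat{g_1}(Z_1))$. Taking the minimum over $(g_1, Z_1) \in \mathcal{G}_1 \times \mathcal{Z}_1^M$ on the right-hand side gives the claimed inequality. The whole proof is essentially a single set-inclusion followed by an axiom-of-choice style coordinate-wise selection of preimages.

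There is no real obstacle here, since the argument is structural; the only mild subtleties are (i) to be explicit that the hypothesis is universally quantified, so one can legitimately fix a single $g_2 \in \mathcal{G}_2$ and have every $g_1(z_{1j})$ fall inside its image, and (ii) to note that the ``$\min$'' should be read as infimum when attainment is not guaranteed, which causes no change to the inequality. The entire proof can thus be written in just a few lines.
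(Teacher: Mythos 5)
Your proposal is correct and is essentially the same feasibility-embedding argument as the paper's proof: the paper instantiates directly at the minimizer $(g_1^*, Z_1^*)$ and constructs a matching $(g_2, Z_2)$ column by column via the inclusion $g_1^*(\mathcal{Z}_1)\subseteq g_2(\mathcal{Z}_2)$, whereas you run the same construction for an arbitrary $(g_1, Z_1)$ and minimize at the end. Your remark about reading $\min$ as an infimum when attainment is not guaranteed is a minor refinement the paper glosses over, but it does not change the substance.
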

\vspace{-0.5ex}
Please refer to Appendix \ref{appendix: proofs: proposition 1} for proof. Proposition \ref{theory: proposition 1} demonstrates that the optimal value of dataset distillation loss becomes lower as the feasible space of decoded synthetic instances becomes larger, when the number of decoded synthetic instances is fixed. The previous experimental finding supports Proposition \ref{theory: proposition 1} when $\mathcal{G}_1=\mathcal{G}_2$: an increase in frequency dimension leads to improved dataset distillation performance \citep{shin2024frequency}. 

Now, we investigate the feasible space of DDiF. We consider a neural field $F_{\psi}:\R\rightarrow\R$ with two hidden layers, each having a width of $d$ and employing a sine activation function. Extending the result of \citep{novello2022understanding} using trigonometric identities, the output space of DDiF, which corresponds to the feasible space of the decoded synthetic instance, is represented as a sum of cosines:
\begin{align}
    F_\psi(x) = b^{(2)} + \sum_{k\in\mathbb{Z}^{d}} \sum_{i=1}^{d} A_{k,i}\cos{\big(\omega_k x + \varphi'_{k,i}\big)} \,\,\,\, \text{where} \,\,\,\, \varphi'_{k,i}=\varphi_{k,i}-\frac{\pi}{2}
\label{eq: DDiF feasible space}
\end{align} 
where $\omega_k\coloneqq\langle k, W^{(0)} \rangle$, $\varphi_{k,i}\coloneqq\langle k, b^{(0)} \rangle + b_i^{(1)}$, $\alpha_{k}\coloneqq\sum_{i=1}^{d} A_{k,i}\sin{(\varphi_{k,i})}$ and $\beta_{k}\coloneqq\sum_{i=1}^{d} A_{k,i}\cos{(\varphi_{k,i})}$. Also, $A_{k,i}\coloneqq W_i^{(2)}\prod_{j=1}^{d} J_{k_j}W_{ij}^{(1)}$ where $J_{k_{j}}$ denotes Bessel function of the first kind of order $k_{j}$. Please refer to Appendix \ref{appendix: proofs: DDiF feasible space} for derivation. According to \cref{eq: DDiF feasible space}, it should be noted that the amplitudes $A_{k,i}$, frequencies $\omega_k$, phases $\varphi'_{k,i}$, and shift $b^{(0)}$ are tunable as the combination of neural network parameters $\psi=\big\{W^{(l)},b^{(l)}\big\}_{l=0}^{L}$ in DDiF.

The feasible space of DDiF in \cref{eq: DDiF feasible space} has a similar form of the feasible space of previous work, FreD \citep{shin2024frequency}, one of the static parameterization method in dataset distillation. FreD optimizes frequency coefficients, which are selected by the explained variance ratio. They basically use the inverse discrete cosine transform (IDCT) to decode synthetic instances from the frequency domain. Suppose that FreD utilizes IDCT with $N$ equidistant locations on $\R$. Also, when $\mathcal{U}\subset\mathcal{C}_{N}\coloneqq\{0,..., N-1\}$ is the index set of selected frequency dimension; the optimized frequency coefficients is denoted as $\Gamma\coloneqq\{\gamma_u|u\in\mathcal{U}\}$. Then, the feasible space of decoded synthetic instance by FreD is expressed in the form of a function over $\mathcal{C}_{N}$:
\begin{align}
    g(x;\Gamma) = \sum_{u\in\mathcal{U}} \gamma_{u} \cos{\Big(\frac{\pi u}{N}x + \frac{\pi u}{2N}\Big)} \,\,\, \text{where} \,\,\, x\in\mathcal{C}_{N}
    \label{eq: FreD feasible space}
\end{align}

Now, we turn our attention to compare the feasible spaces of DDiF and FreD. Theorem \ref{theory: theorem 1} provides the relationship between the feasible spaces of DDiF and FreD when the same budget is utilized.
\begin{restatable}{theorem}{thm}
    Consider the truncation of \cref{eq: DDiF feasible space} over $\mathcal{K}_\zeta\coloneqq\{k | \|k\|_{\infty}\leq\zeta \}$ i.e. $\tilde{F}_{\psi}(x)\coloneqq b^{(2)} + \sum_{k\in\mathcal{K}_\zeta} \sum_{i=1}^{d} A_{k,i}\cos{\big(\omega_k x + \varphi'_{k,i}\big)}$. 
    Suppose that FreD and DDiF utilize the same number of parameters, i.e., $|\mathcal{U}|$ and the number of parameters in $\psi$ are fixed at a given value $B$.
    If $B\geq6$ and $\zeta \geq \frac{1}{2}\Big(\exp{\big(\frac{\log(2B+1)}{\lfloor \sqrt{3+B} - 2 \rfloor}\big)}-1\Big)$, then $g(x;\Gamma) \subsetneq \tilde{F}_{\psi}(x)$ for any $x\in\mathcal{C}_{N}$.
\label{theory: theorem 1}
\end{restatable}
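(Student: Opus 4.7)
The plan is to leverage the cosine decomposition in \cref{eq: DDiF feasible space}. Once $W^{(0)}$ is fixed, $\tilde{F}_\psi$ is a linear combination of cosines at the frequencies $\omega_k=\langle k,W^{(0)}\rangle$ indexed by $k\in\mathcal{K}_\zeta$, whose amplitudes and phases are determined through the Bessel products $A_{k,i}=W^{(2)}_i\prod_j J_{k_j}(W^{(1)}_{ij})$ and the affine phases $\varphi'_{k,i}=\langle k,b^{(0)}\rangle+b^{(1)}_i-\pi/2$. I would split the strict containment into two parts: (I) show that every $g(\cdot;\Gamma)$ is realised by some $\tilde{F}_\psi$ on $\mathcal{C}_N$, and (II) exhibit a $\tilde{F}_\psi$ lying outside the $B$-dimensional FreD span.

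\textbf{Stage (I): frequency and coefficient realisation.} The hypothesis $\zeta\geq\frac{1}{2}((2B+1)^{1/d}-1)$ is equivalent to $|\mathcal{K}_\zeta|=(2\zeta+1)^d\geq 2B+1$, giving room for $B$ antipodal pairs $\{\pm k^{(s)}\}_{s=1}^{B}$ of distinct nonzero indices together with the origin. I would first choose $W^{(0)}$ so that $\omega_{k^{(s)}}\equiv\pi u_s/N\pmod{2\pi}$ for each $s$; on the integer grid $\mathcal{C}_N$ a frequency shift by $2\pi$ is invisible, so it suffices to solve the congruence system $\sum_j k^{(s)}_j\alpha_j\equiv u_s\pmod{2N}$ under the ansatz $W^{(0)}_j=\pi\alpha_j/N$, selecting integer weights $\alpha_j$ and specific index vectors $k^{(s)}\in\mathcal{K}_\zeta$ so that the map $k\mapsto\sum_j k_j\alpha_j\pmod{2N}$ hits every $u_s$. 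Collapsing each $\pm k^{(s)}$ pair into a single cosine $R_s\cos(\omega_{k^{(s)}}x+\Phi_s)$ reduces the matching problem to the algebraic system $R_s=\gamma_{u_s}$, $\Phi_s=\pi u_s/(2N)$ for $s=1,\ldots,B$, together with the nulling conditions $\sum_i A_{k,i}\cos\varphi'_{k,i}=\sum_i A_{k,i}\sin\varphi'_{k,i}=0$ for every $k\notin\{0,\pm k^{(s)}\}$. I would satisfy these by placing the entries of each row of $W^{(1)}$ at zeros of the appropriate Bessel factors $J_{k_j}$, so that row $i$ activates only its intended pair, and then solving the resulting block-diagonal linear system in $W^{(2)},b^{(0)},b^{(1)}$ (with $b^{(2)}$ absorbing any constant).

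\textbf{Stage (II) and the main obstacle.} For strictness, any $\psi$ whose cosine decomposition carries a nonzero term at a frequency $\omega_{k^\star}\not\equiv\pi u/N\pmod{2\pi}$ for every $u\in\mathcal{U}$ is, by the linear independence of the $N$ DCT atoms on $\mathcal{C}_N$, outside the range of $\Gamma\mapsto g(\cdot;\Gamma)$; such a $\psi$ is easy to write down using one of the unused pairs in $\mathcal{K}_\zeta$. The principal difficulty lies in Stage (I): because $A_{k,i}$ is multilinear in $(W^{(1)},W^{(2)})$, simultaneously realising the $2B$ amplitude--phase constraints together with the $(2\zeta+1)^d-2B-1$ nulling constraints is not immediate from parameter counting and requires a delicate Bessel-zero placement to decouple the system. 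Verifying that the chosen zeros annihilate exactly the spurious orders without also killing the active ones is the crux of the argument; as a fallback, a local implicit-function argument at a generic $W^{(0)}$ where the Jacobian of $\psi\mapsto\tilde{F}_\psi|_{\mathcal{C}_N}$ has rank at least $B$ would close the inclusion without needing the explicit sparsifying construction.
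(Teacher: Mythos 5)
Your Stage (II) is essentially the paper's own strictness argument (use the bound on $\zeta$ to count representable frequencies, pick one that does not coincide with any $\tfrac{\pi u}{N}$, and invoke independence of distinct-frequency cosine atoms on $\mathcal{C}_N$), so that half is fine, modulo the same aliasing caveat the paper also glosses over. The genuine gap is Stage (I), i.e.\ the containment $g(\cdot;\Gamma)\subseteq\tilde F_\psi$, and the mechanism you propose for it cannot be made to work. In $A_{k,i}=W^{(2)}_i\prod_j J_{k_j}(W^{(1)}_{ij})$ the single scalar $W^{(1)}_{ij}$ appears in the Bessel factor of \emph{every} order $k_j\in\{-\zeta,\dots,\zeta\}$ simultaneously; Bessel functions of distinct integer orders have no common positive zeros (Bourget's hypothesis, proved by Siegel), and at the origin every $J_m$ with $m\neq0$ vanishes while $J_0(0)=1$. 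Hence the only exact sparsification available through $W^{(1)}$ is setting entries to zero, which restricts the support of $k$ but retains \emph{all} orders along the surviving coordinates: a row that keeps the pair $\pm k^{(s)}$ alive necessarily also emits its harmonics $\pm 2k^{(s)},\pm 3k^{(s)},\dots$ within $\mathcal{K}_\zeta$ with generically nonzero amplitude. So the $(2\zeta+1)^d-2B-1$ nulling equations cannot be satisfied exactly and your block-diagonal reduction never materializes --- which you yourself flag as the unresolved crux. The fallback does not close it either: a Jacobian of rank at least $B$ for $\psi\mapsto\tilde F_\psi|_{\mathcal{C}_N}$ only gives a locally $B$-dimensional image, not that this image contains the specific linear span $\{g(\cdot;\Gamma):\Gamma\}$, which is what the containment half of the theorem requires. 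As it stands, the proposal therefore does not prove the statement.

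For contrast, the paper's proof never tries to silence the spurious indices through $W^{(1)}$. It sets $W^{(0)}=\tfrac{\pi}{N}1_N$, $b^{(0)}=\tfrac{\pi}{2N}1_N$, $b^{(1)}=\tfrac{\pi}{2}1_N$, so that every index $k$ produces exactly a DCT atom of \cref{eq: FreD feasible space} with $u=\sum_j k_j$ (frequency $\tfrac{\pi u}{N}$ and phase $\tfrac{\pi u}{2N}$); it then groups indices by their coordinate sum, uses Lemma \ref{appendix:lemma1} --- non-vanishing of the Bessel products, i.e.\ the opposite direction from your zero placement --- to tune the aggregated amplitude on each selected sum to $\gamma_u$, and disposes of the terms whose coordinate sum lies outside $\mathcal{U}$ additively, by absorbing them into the choice of $b^{(2)}$, rather than by annihilating their coefficients. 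No congruence system for $W^{(0)}$ and no exact nulling is needed, because the unwanted frequencies are arranged from the start to be DCT frequencies and are then handled as an additive correction. If you want to salvage your route, you would need either an aggregation-plus-cancellation device of this kind or an approximation/closure statement in place of exact nulling; the decoupling step you rely on is precisely the missing idea.
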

\vspace{-1ex}
Please refer to Appendix \ref{appendix: proofs: theorem 1} for proof. Theorem \ref{theory: theorem 1} implies that the feasible space of decoded values in DDiF is larger than that in FreD when the parameters used for a single synthetic instance are fixed. Consequently, according to Proposition \ref{theory: proposition 1}, when the number of decoded synthetic instances is equal, DDiF achieves a lower optimal value for the dataset distillation loss than FreD.

\begin{wrapfigure}{r}{0.55\textwidth}
    \vspace{-2.3ex}
    \centering
    \begin{subfigure}[t]{0.49\linewidth}
        \centering
        \includegraphics[width=\textwidth]{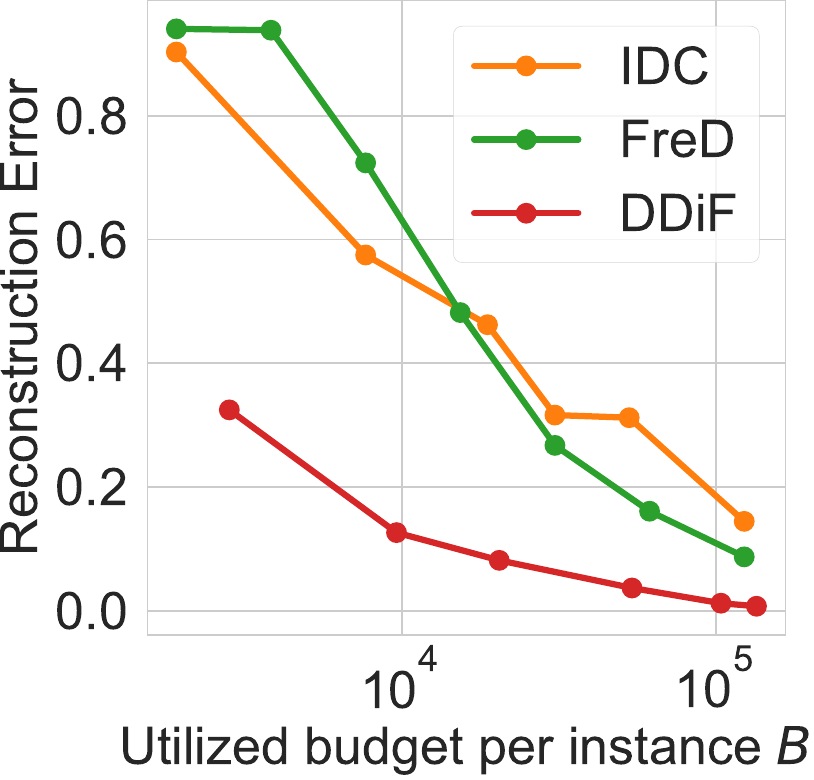}
    \caption{Reconstruction task}
    \end{subfigure}
    \hfill
    \begin{subfigure}[t]{0.49\linewidth}
        \centering
        \includegraphics[width=\textwidth]{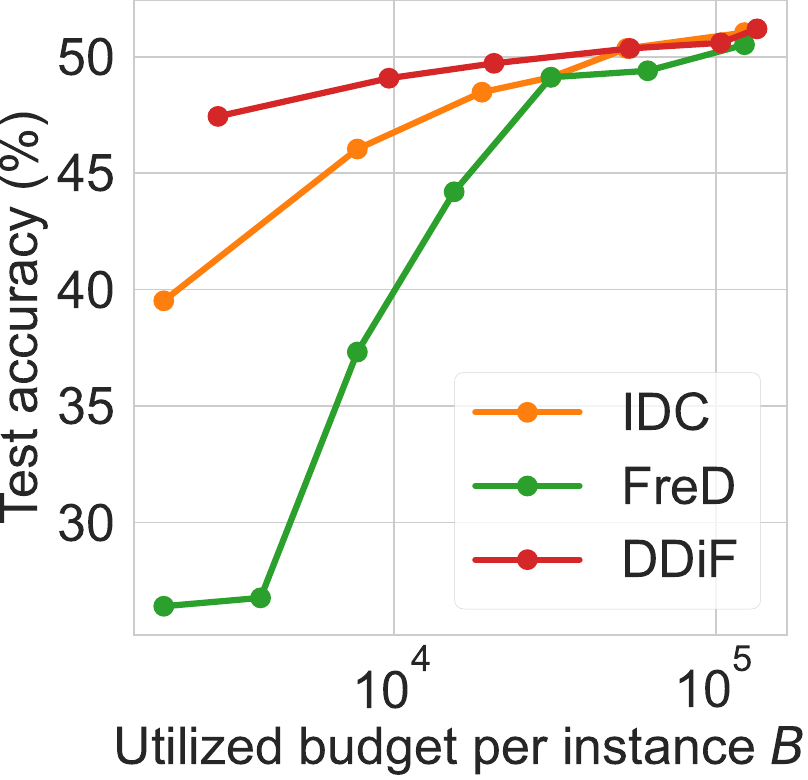}
    \caption{Dataset distillation}
    \end{subfigure}
    \vspace{-1.3ex}
    \caption{Performance curve on (a) reconstruction task and (b) dataset distillation under the same utilized budget. Pearson correlation coefficient of reconstruction error and test accuracy is $-0.89$.}
    \label{fig: theorem 1 evidences}
    \vspace{-2ex}
\end{wrapfigure}
To support Theorem \ref{theory: theorem 1}, we investigate the performance of the reconstruction task by varying the utilized budget. We speculate that the low reconstruction error is due to the large feasible space.
For the reconstruction task, we utilize 10 randomly selected images per class on ImageNet-Subset \citep{cazenavette2022dataset,cazenavette2023generalizing} with 128 resolution. For the dataset distillation task, we employ TM \citep{cazenavette2022dataset} under one image per class, and the same dataset with the reconstruction task. As shown in Figure \ref{fig: theorem 1 evidences}, DDiF achieves higher performance than FreD in both tasks, which indicates greater expressiveness under the same utilized budget. Furthermore, we emphasize that DDiF maintains high performance even with a small budget, whereas FreD shows significant performance degradation as the allocated budget for each instance decreases. These experimental results provide further evidence that DDiF exhibits greater expressiveness. In addition, although we do not provide a theoretical comparison with IDC \citep{kim2022dataset}, Figure \ref{fig: theorem 1 evidences} presents empirical evidence demonstrating that DDiF exhibits greater expressiveness than IDC. We believe that the proposed theoretical analysis will serve as a starting point for future theoretical comparisons of parameterization methods in the dataset distillation domain.

\vspace{-2ex}
\section{Experiments}
\vspace{-1ex}
This section presents various empirical results that validate the effectiveness of the proposed method, DDiF. We primarily focus on high-resolution image datasets, such as ImageNet-Subset \citep{cazenavette2022dataset,cazenavette2023generalizing} with resolutions of 128 and 256, since high-dimensional dataset distillation remains a challenging task. In addition, we conduct several experiments to verify the applicability of DDiF on diverse modalities. We utilize miniUCF for video \citep{wang2024dancing,khurram2012ucf101}, Mini Speech Commands for audio \citep{kim2022dataset,warden2018speech}, and ModelNet \citep{wu20153d}, ShapeNet \citep{chang2015shapenet} for 3D. Specific details regarding the dataset, baselines, configuration of DDiF, and experimental settings are in Appendix \ref{appendix: experimental details}.

\vspace{-1ex}
\subsection{Performance Comparison} \label{sec: experimental results}
\vspace{-0.5ex}
\begin{table*}
    \centering
    \caption{Test accuracies (\%) on ImageNet-Subset for parameterization methods under IPC=1. ``IPC'' denotes instances per class, which implies the budget constraint. \textbf{Bold} and \underline{Underline} mean the best and second-best performance of each column, respectively. ``\textdagger'' indicates our implementation results. ``$-$'' indicates no reported results. ``$\Delta$'' represents the performance improvement over Vanilla, which denotes input-sized parameterization. The full table with standard deviations is in Appendix \ref{appendix: additional results: full table}.}
    \vspace{-1ex}
    \label{tab: imagenet128_256_ipc1}
    \resizebox{\textwidth}{!}{%
    \begin{tabular}{cl cccccc cccccc}
        \toprule
        & Resolution & \multicolumn{6}{c}{$128\times128$} & \multicolumn{6}{c}{$256\times256$} \\ 
        \midrule
        & Subset & Nette & Woof & Fruit & Yellow & Meow & Squawk & Nette & Woof & Fruit & Yellow & Meow & Squawk \\ 
        \midrule
        \multirow{2}{*}{Input-sized} & Vanilla & 51.4\textsuperscript{\textdagger} & 29.7\textsuperscript{\textdagger} & 28.8\textsuperscript{\textdagger} & 47.5\textsuperscript{\textdagger} & 33.3\textsuperscript{\textdagger} & 41.0\textsuperscript{\textdagger} & 32.1 & 20.0 & 19.5 & 33.4 & 21.2 & 27.6 \\ 
        & FRePo & 48.1 & 29.7 & $-$ & $-$ & $-$ & $-$ & $-$ & $-$ & $-$ & $-$ & $-$ & $-$\\ 
        \midrule
        \multirow{2}{*}{Static} & IDC & 61.4 & 34.5 & 38.0 & 56.5 & 39.5 & 50.2 & 53.7\textsuperscript{\textdagger} & 30.2\textsuperscript{\textdagger} & \underline{33.1}\textsuperscript{\textdagger} & \underline{52.2}\textsuperscript{\textdagger} & 34.6\textsuperscript{\textdagger} & 47.0\textsuperscript{\textdagger} \\
        & FreD & 66.8 & 38.3 & 43.7 & 63.2 & 43.2 & 57.0 & 54.2\textsuperscript{\textdagger} & \underline{31.2}\textsuperscript{\textdagger} & 32.5\textsuperscript{\textdagger} & 49.1\textsuperscript{\textdagger} & 34.0\textsuperscript{\textdagger} & 43.1\textsuperscript{\textdagger} \\
        \midrule
        \multirow{4}{*}{Parameterized}
        & HaBa & 51.9 & 32.4 & 34.7 & 50.4 & 36.9 & 41.9 & $-$ & $-$ & $-$ & $-$ & $-$ & $-$ \\
        & SPEED & 66.9 & 38.0 & 43.4 & 62.6 & 43.6 & 60.9 & \underline{57.7} & $-$ & $-$ & $-$ & $-$ & $-$ \\
        & Vanilla+RTP & \underline{69.6}\textsuperscript{\textdagger} & \underline{38.8}\textsuperscript{\textdagger} & \underline{45.2}\textsuperscript{\textdagger} & \underline{66.4}\textsuperscript{\textdagger} & \underline{46.5}\textsuperscript{\textdagger} & \underline{63.2}\textsuperscript{\textdagger} & $-$ & $-$ & $-$ & $-$ & $-$ & $-$ \\
        & LatentDD & $-$ & $-$ & $-$ & $-$ & $-$ & $-$ & 56.1 & 28.0 & 30.7 & $-$ & \underline{36.3} & \underline{47.1} \\
        & NSD & 68.6 & 35.2 & 39.8 & 61.0 & 45.2 & 52.9 & $-$ & $-$ & $-$ & $-$ & $-$ & $-$\\
        \midrule
        \multirow{2}{*}{DGM Prior} & GLaD & 38.7 & 23.4 & 23.1 & $-$ & 26.0 & 35.8 & $-$ & $-$ & $-$ & $-$ & $-$ & $-$ \\
        & H-GLaD & 45.4 & 28.3 & 25.6 & $-$ & 29.6 & 39.7 & $-$ & $-$ & $-$ & $-$ & $-$ & $-$ \\
        \midrule
        \multirow{2}{*}{Function} & \gc DDiF & \gc \textbf{72.0} & \gc \textbf{42.9} & \gc \textbf{48.2} & \gc \textbf{69.0} & \gc \textbf{47.4} & \gc \textbf{67.0} & \gc \textbf{67.8} & \gc \textbf{39.6} & \gc \textbf{43.2} & \gc \textbf{63.1} & \gc \textbf{44.8} & \gc \textbf{67.0} \\
        & \multicolumn{1}{r}{$\Delta (\%p)$} & +20.6 & +13.2 & +19.4 & +21.5 & +14.1 & +26.0 & +35.7 & +19.6 & +23.7 & +29.7 & +23.6 & +39.4 \\
        \midrule
        \multicolumn{2}{c}{Entire dataset $\mathcal{T}$} & \multicolumn{1}{c}{87.4} & \multicolumn{1}{c}{67.0} & \multicolumn{1}{c}{63.9} & \multicolumn{1}{c}{84.4} & \multicolumn{1}{c}{66.7} & \multicolumn{1}{c}{87.5} & 92.5 & 80.1 & 70.2 & 90.5 & 72.2 & 93.2 \\ 
        \bottomrule
    \end{tabular}}
\end{table*}
\begin{table*}
    \centering
    \vspace{-1ex}
    \begin{minipage}[t]{0.49\textwidth}
        \caption{Test accuracies (\%) on ImageNet-Subset ($128\times128$) under IPC=10.}
        \vspace{-1ex}
        \label{tab: imagenet128_ipc10}
        \resizebox{\textwidth}{!}{%
        \begin{tabular}{l cccccc}
            \toprule
            Method & Nette & Woof & Fruit & Yellow & Meow & Squawk \\
            \midrule
            TM & 63.0 & 35.8 & 40.3 & 60.0 & 40.4 & 52.3 \\ 
            FRePo & 66.5 & 42.2 &$-$&$-$&$-$&$-$ \\
            IDC & 70.8 & 39.8 & 46.4 & 68.7 & 47.9 & 65.4 \\
            FreD & 72.0 & 41.3 & 47.0 & 69.2 & 48.6 & 67.3 \\
            HaBa & 64.7 & 38.6 & 42.5 & 63.0 & 42.9 & 56.8 \\
            SPEED & \underline{72.9} & \underline{44.1} & \textbf{50.0} & \textbf{70.5} & \textbf{52.0} & \underline{71.8} \\
            \gc DDiF & \gc \textbf{74.6} & \gc \textbf{44.9} & \gc \underline{49.8} & \gc \textbf{70.5} & \gc \underline{50.6} & \gc \textbf{72.3} \\
            \midrule
            Entire $\mathcal{T}$&\multicolumn{1}{c}{87.4}&\multicolumn{1}{c}{67.0}&\multicolumn{1}{c}{63.9}&\multicolumn{1}{c}{84.4}&\multicolumn{1}{c}{66.7}&\multicolumn{1}{c}{87.5} \\ 
            \bottomrule
        \end{tabular}}
    \end{minipage}
    \hfill
    \begin{minipage}[t]{0.5\textwidth}
        \caption{Average test accuracies (\%) on ImageNet-Subset ($128\times128$) across AlexNet, VGG11, ResNet18, and ViT, under IPC=1.}
        \vspace{-1ex}
        \resizebox{\textwidth}{!}{%
        \begin{tabular}{l cccccc}
            \toprule
            Method & Nette & Woof & Fruit & Yellow & Meow & Squawk \\
            \midrule
            TM & 22.0 & 14.8 & 17.1 & 22.3 & 16.2 & 25.5 \\
            IDC & 27.9 & 19.5 & \underline{23.9} & 28.0 & 19.8 & 29.9 \\
            FreD & \underline{36.2} & \underline{23.7} & 23.6 & \underline{31.2} & 19.1 & 37.4 \\
            GLaD & 30.4 & 17.1 & 21.1 & $-$ & 19.6 & 28.2 \\
            H-GLaD & 30.8 & 17.4 & 21.5 & $-$ & 20.1 & 28.8 \\
            LD3M & 32.0 & 19.9 & 21.4 & $-$ & \underline{22.1} & \underline{30.4} \\
            \gc DDiF & \gc \textbf{59.3} & \gc \textbf{34.1} & \gc \textbf{39.3} & \gc \textbf{51.1} & \gc \textbf{33.8} & \gc \textbf{54.0} \\
            \bottomrule
        \end{tabular}}
        \label{tab: imagenet128_cross_architecture}
    \end{minipage}
    \vspace{-3ex}
\end{table*}
\paragraph{Main Results.} Table \ref{tab: imagenet128_256_ipc1} shows the overall performance for ImageNet-Subset with resolution 128 and 256 under IPC=1. We utilize trajectory matching (TM) for 128 resolution and distribution matching (DM) for 256 resolution. DDiF achieves the best performances in all experimental settings. We highlight that the performance improvement from Vanilla to DDiF is significant, supporting the effectiveness of the proposed method (see the $\Delta$ row in  Table \ref{tab: imagenet128_256_ipc1}). In addition, DDiF exhibits a substantial performance margin compared to the second-best performer: up to $4.1\%p$ at 128 resolution, and from $8.4\%p$ to $19.9\%p$ at 256 resolution. Moreover, when a larger budget is available, DDiF shows consistent improvement and highly competitive performance with baseline, as shown in Table \ref{tab: imagenet128_ipc10}. These results demonstrate that neural field-based parameterization significantly enhances the dataset distillation performance, particularly when the budget is very limited. Please refer to Appendix \ref{appendix: low dimensional} for the additional experimental results such as low-resolution datasets.

\paragraph{Cross-architecture Generalization.} The network structure used for training with the distilled dataset may differ from the one used for distillation. Accordingly, parameterization methods should achieve consistent performance enhancements across various test network architectures. In this study, we employ AlexNet \citep{krizhevsky2012imagenet}, VGG11 \citep{simonyan2014very}, ResNet18 \citep{he2016deep}, and ViT \citep{dosovitskiy2020image} while ConvNetD5 is utilized in training. Table \ref{tab: imagenet128_cross_architecture} presents that DDiF consistently outperforms. Notably, DDiF achieves a remarkable performance gap over the second-best performer from $10.4\%p$ to $23.1\%p$. These results indicate that DDiF effectively encodes important task-relevant information regardless of the training network.

\begin{wraptable}{r}{0.5\textwidth}
    \centering
    \vspace{-3ex}
    \caption{Test accuracies (\%) on ImageNet-Subset ($128\times128$) across DC and DM under IPC=1.}
    \vspace{-1ex}
    \resizebox{0.5\textwidth}{!}{%
    \begin{tabular}{cl ccccc}
        \toprule
        $\mathcal{L}$ & Method & Nette & Woof & Fruit & Meow & Squawk \\
        \midrule
        \multirow{6}{*}{DC} & Vanilla & 34.2 & 22.5 & 21.0 & 22.0 & 32.0 \\
        & IDC & 45.4 & 25.5 & 26.8 & 25.3 & 34.6 \\
        & FreD & \underline{49.1} & \underline{26.1} & \underline{30.0} & \underline{28.7} & \underline{39.7} \\
        & GLaD & 35.4 & 22.3 & 20.7 & 22.6 & 33.8 \\
        & H-GLaD & 36.9 & 24.0 & 22.4 & 24.1 & 35.3 \\
        & \gc DDiF & \gc \textbf{61.2} & \gc \textbf{35.2} & \gc \textbf{37.8} & \gc \textbf{39.1} & \gc \textbf{54.3} \\
        \midrule
        \multirow{6}{*}{DM} & Vanilla & 30.4 & 20.7 & 20.4 & 20.1 & 26.6 \\
        & IDC & 48.3 & 27.0 & 29.9 & 30.5 & 38.8 \\
        & FreD & \underline{56.2} & \underline{31.0} & \underline{33.4} & \underline{33.3} & \underline{42.7} \\
        & GLaD & 32.2 & 21.2 & 21.8 & 22.3 & 27.6 \\
        & H-GLaD & 34.8 & 23.9 & 24.4 & 24.2 & 29.5 \\
        & \gc DDiF & \gc \textbf{69.2} & \gc \textbf{42.0} & \gc \textbf{45.3} & \gc \textbf{45.8} & \gc \textbf{64.6} \\
        \bottomrule
    \end{tabular}}
    \vspace{-3ex}
    \label{tab: imagenet128_compatibility}
\end{wraptable}
\vspace{-1.3ex}
\paragraph{Universality to Distillation Loss.} Another important evaluation criterion for parameterization is whether it constantly improves the performance across various dataset distillation losses. We examine the performance of DDiF using gradient matching (DC) and distribution matching (DM) to evaluate its steady improvements. In Table \ref{tab: imagenet128_compatibility}, DDiF achieves the best performances in both distillation losses. These results confirm the universality of DDiF to distillation loss, representing its wide adaptiveness.

\vspace{-1.3ex}
\paragraph{Various Modality.} Conventional parameterization methods have primarily been developed within the image domain, and their applicability to other modalities has not been sufficiently explored. Since the neural field provides a generalized representation of diverse grid-based data structures, DDiF can be naturally applied to various modalities beyond the image domain. We conduct experiments in the video, audio, and 3D voxel domains to evaluate its efficacy. First, Figure \ref{fig: performance_video} shows the test performances on the video domain with regard to the required budget for running each method. SDD \citep{wang2024dancing}, which focuses on the video domain, achieves higher performance than coreset selections and DM; however, it still requires a large storage budget. On the other hand, DDiF achieves competitive performance even with a budget equivalent to only $1.7\%$ of SDD. In addition, Tables \ref{tab: performance_audio} and \ref{tab: performance_3d} show the test performances of the audio and 3D voxel domains, respectively. Both tables demonstrate that DDiF achieves the highest performance. These results confirm the efficacy of DDiF across various modality datasets.
\begin{table*}
    \centering
    \parbox{0.30\textwidth}{
        \centering
        \includegraphics[width=\linewidth]{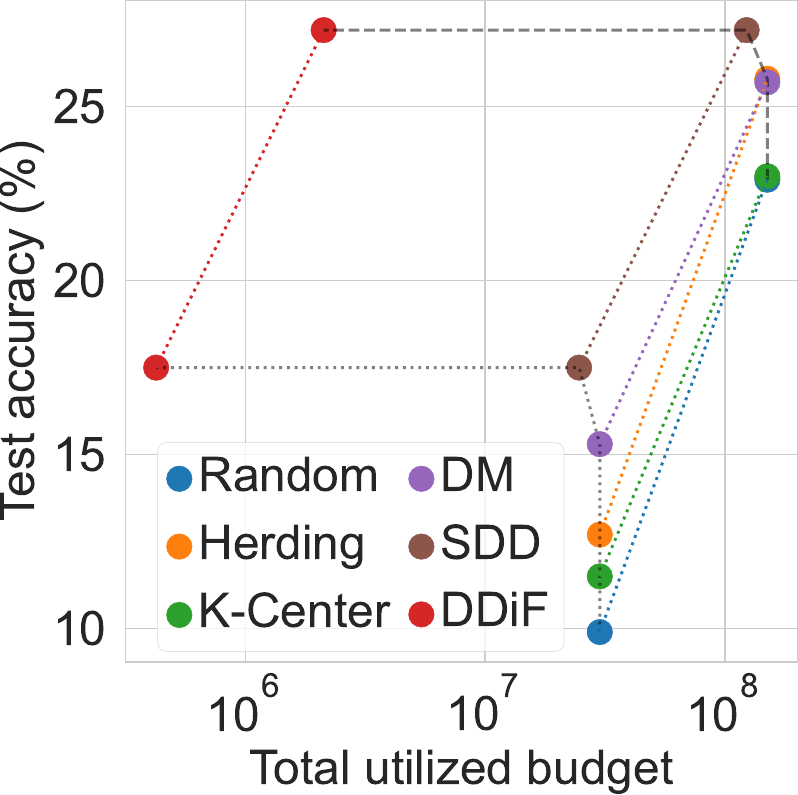}
        \vspace{-3.5ex}
        \captionof{figure}{Test accuracies (\%) on Video domain. Each black line denotes the same number of decoded instances per class, 1 and 5, respectively.}
        \label{fig: performance_video}
    }
    \hfill
    \parbox{0.31\textwidth}{
        \centering
        \caption{Test accuracies (\%) on Audio domain.}
        %\vspace{-1ex}
        \label{tab: performance_audio}
        \resizebox{\linewidth}{!}{%
        \begin{tabular}{l cc}
            \toprule
            Spec. / class & 10 & 20 \\ 
            \midrule
            Random & 42.6 & 57.0 \\
            Herding & 56.2 & 72.9 \\
            DSA & 65.0 & 74.0 \\
            DM & 69.1 & 77.2 \\
            IDC-I & 73.3 & 83.0 \\
            IDC-I + HaBa & 74.5 & 84.3 \\
            IDC & \underline{82.9} &\underline{86.6} \\
            \gc DDiF & \gc \textbf{90.5} & \gc \textbf{92.7} \\ 
            \midrule
            Entire $\mathcal{T}$ & \multicolumn{2}{c}{93.4} \\
            \bottomrule
        \end{tabular}}
    }
    \hfill
    \parbox{0.37\textwidth}{
        \centering
        \caption{Test accuracies (\%) on 3D voxel domain under IPC=1.}
        %\vspace{-1ex}
        \label{tab: performance_3d}
        \resizebox{\linewidth}{!}{%
        \begin{tabular}{cl cc}
            \toprule
            $\mathcal{L}$ & Method & ModelNet & ShapeNet \\
            \midrule
            $-$ & Random & 60.9 & 68.5 \\
            \midrule
            \multirow{4}{*}{DC} & Vanilla & 56.3 & 48.6 \\
            & IDC & 78.7 & 79.9 \\
            & FreD & \underline{85.6} & \underline{88.2} \\
            & \gc DDiF & \gc \textbf{87.1} & \gc \textbf{89.6} \\
            \midrule
            \multirow{4}{*}{DM} & Vanilla & 75.3 & 80.4 \\
            & IDC & 85.6 & 85.3 \\
            & FreD & \underline{87.3} & \underline{90.6} \\
            & \gc DDiF & \gc \textbf{88.4} & \gc \textbf{93.1} \\
            \midrule
            \multicolumn{2}{c}{Entire $\mathcal{T}$} & 91.6 & 98.3 \\
            \bottomrule
        \end{tabular}}
    }
    \vspace{-3ex}
\end{table*}

\begin{wrapfigure}{r}{0.55\textwidth}
    \vspace{-3.8ex}
    \hspace{-1.5em}
    \centering
    \begin{subfigure}[t]{0.48\linewidth}
        \centering
        \includegraphics[width=\textwidth]{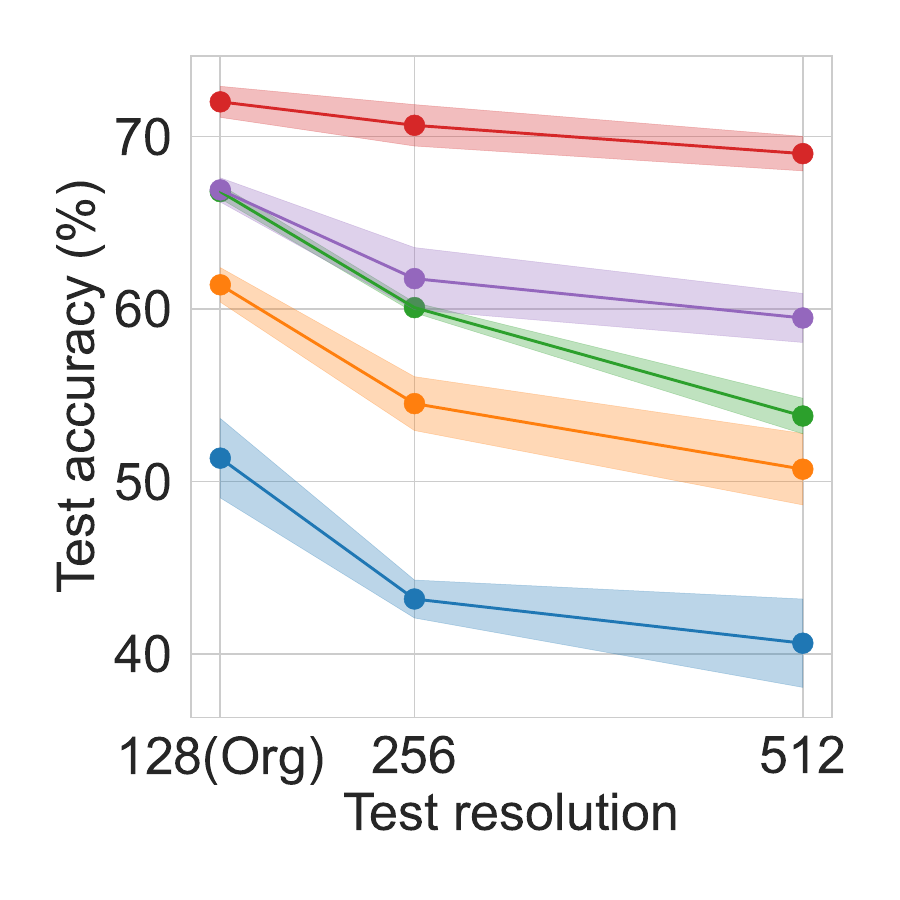}
        \vspace{-4ex}
        \caption{Accuracy}
        \label{fig: cross_resolution_accuracy}
    \end{subfigure}
    \hspace{-1.15em}
    \begin{subfigure}[t]{0.48\linewidth}
        \centering
        \includegraphics[width=\textwidth]{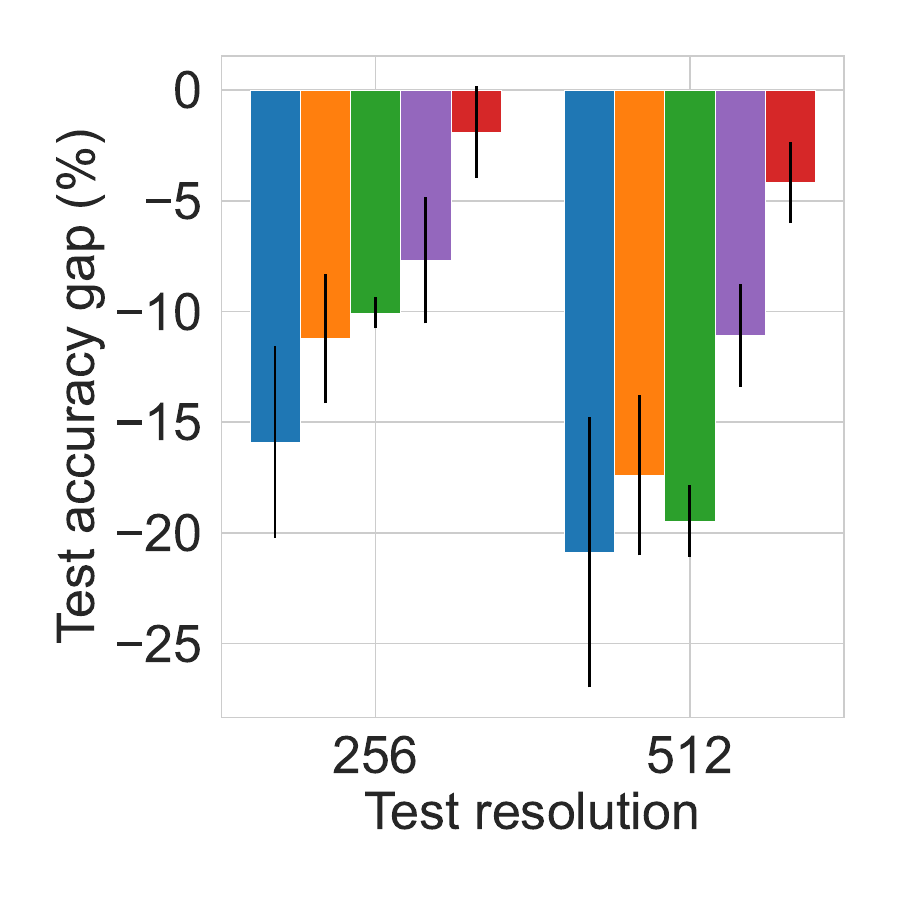}
        \vspace{-4ex}
        \caption{Accuracy degradation}
        \label{fig: cross_resolution_accuracy_degradation}
    \end{subfigure}
    \hspace{-1em}
    \begin{subfigure}[t]{0.1\linewidth}
        \vspace{-14.8ex}
        \centering
        \includegraphics[width=\textwidth]{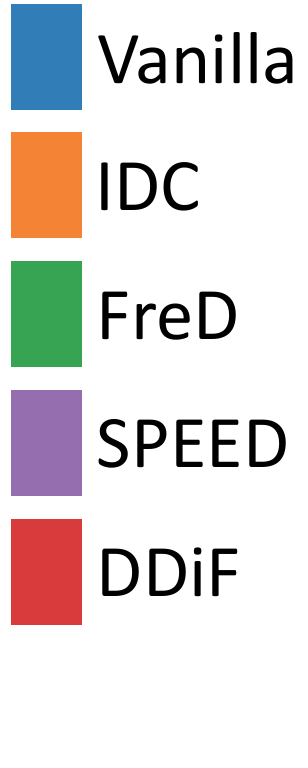}
    \end{subfigure}
    \vspace{-1ex}
    \caption{(a) Test accuracies (\%) with different image resolutions. The original resolution is $128\times128$. (b) Test accuracy gap (\%) from original resolution. We use bilinear interpolation for previous studies.}
    \label{fig:cross_resolution}
    \vspace{-2ex}
\end{wrapfigure}
\vspace{-2ex}
\paragraph{Cross-resolution Generalization.} As mentioned in Section \ref{sec: why neural field}, previous studies can only perform postprocessing to resize the optimized synthetic datasets, resulting in information distortion. On the contrary, DDiF can easily decode data of various sizes by adjusting the coordinate set, due to the continuous nature of the neural field.  We introduce a novel experiment in the dataset distillation community that assesses generalization performance when the evaluation data size differs from that used for distillation. We define this experiment as a \textit{cross-resolution generalization}. We apply the interpolation techniques, such as nearest, bilinear, and bicubic, for the previous parameterization methods.

Figure \ref{fig: cross_resolution_accuracy} shows test accuracies on each test resolution when utilizing the corresponding network architecture, ConvNetD6 for 256 and ConvNetD7 for 512. DDiF shows the best performance over all test resolutions. Figure \ref{fig: cross_resolution_accuracy_degradation} shows the percentage of decrease in test accuracy with the resolution change i.e. $(ACC_{org}-ACC_{test})/ACC_{org}$. DDiF shows the least decrease with regard to resolution difference, being evidently robust to resolution change. Its robustness opens a new adaptability of dataset distillation methods to more wide-range situations. Please refer to Appendix \ref{appendix: additional results: cross resolution} for the experimental results on different resizing techniques and the same test architecture.

\vspace{-2ex}
\subsection{Additional Analysis and Ablation Study} \label{Experiments:more analysis}
\vspace{-1ex}
\paragraph{Fixed Number of Decoded Instances.} To further investigate the coding efficiency and expressiveness, we hold the number of decoded synthetic instances constant while varying the budget allocated to each instance, and then evaluate performance. Under the IPC=1 setting, FreD, SPEED, and DDiF decode 8, 15, and 51 synthetic instances per class at a resolution of 128, respectively. We consider two settings in which the number of decoded synthetic instances is determined by either baselines or DDiF. Table \ref{tab: fixed_DIPC} shows that DDiF achieves strong performance while using less budget than the baselines given the same number of decoded instances. It is also worth noting that, when using the same budget, DDiF decodes more synthetic instances and achieves significantly higher performance. These results support the high coding efficiency and expressiveness of DDiF, demonstrating that its superiority arises from improvements in both quality and diversity.

\begin{table*}
    \centering
    \parbox{0.69\textwidth}{
        \centering
        \caption{Test accuracies (\%) on ImageNet-Subset ($128\times128$) under several DIPCs. ``DIPC'' indicates the number of decoded synthetic instances per class. Note that the total number of budget parameters for IPC=1 is 491.52k.}
        \label{tab: fixed_DIPC}
        \resizebox{\linewidth}{!}{%
        \begin{tabular}{c lcc ccccc}
        \toprule
        DIPC & Method & Utilized budget & Nette & Woof & Fruit & Yellow & Meow & Squawk \\ 
        \midrule
        \multirow{2}{*}{1} & Vanilla & 491.52k & \textbf{51.4} & \textbf{29.7} & \textbf{28.8} & \textbf{47.5} & \textbf{33.3} & \textbf{41.0} \\ 
         & \gc DDiF & \gc 9.63k & \gc 49.1& \gc 29.4 & \gc 27.3 & \gc 44.8 & \gc 31.9 & \gc 39.0 \\
        \midrule
        \multirow{2}{*}{8} & FreD & 491.52k & 66.8 & \textbf{38.3} & \textbf{43.7} & \textbf{63.2} & 43.2 & 57.0 \\ 
         & \gc DDiF & \gc 77.04k & \gc \textbf{67.1} & \gc 37.8 & \gc 43.6 & \gc 61.5 & \gc \textbf{44.5} & \gc \textbf{58.3} \\
        \midrule
        \multirow{2}{*}{15} & SPEED & 491.52k & 66.9 & 38.0 & 43.4 & 62.6 & 43.6 & 60.9 \\ 
         & \gc DDiF & \gc 144.45k & \gc \textbf{68.3} & \gc \textbf{39.7} & \gc \textbf{45.9} & \gc \textbf{65.8} & \gc \textbf{45.4} & \gc \textbf{61.1} \\
        \midrule
        \multirow{2}{*}{51} & Vanilla & 25067.52k & \textbf{73.0} & 42.8 & \textbf{48.2} & \textbf{69.1} & 47.2 & \textbf{69.0} \\
         & \gc DDiF & \gc 491.52k & \gc 72.0 & \gc \textbf{42.9} & \gc \textbf{48.2} & \gc 69.0 & \gc \textbf{47.4} & \gc 67.0 \\ 
        \bottomrule
    \end{tabular}}
    }
    \parbox{0.3\textwidth}{
        \centering
        \includegraphics[width=\linewidth]{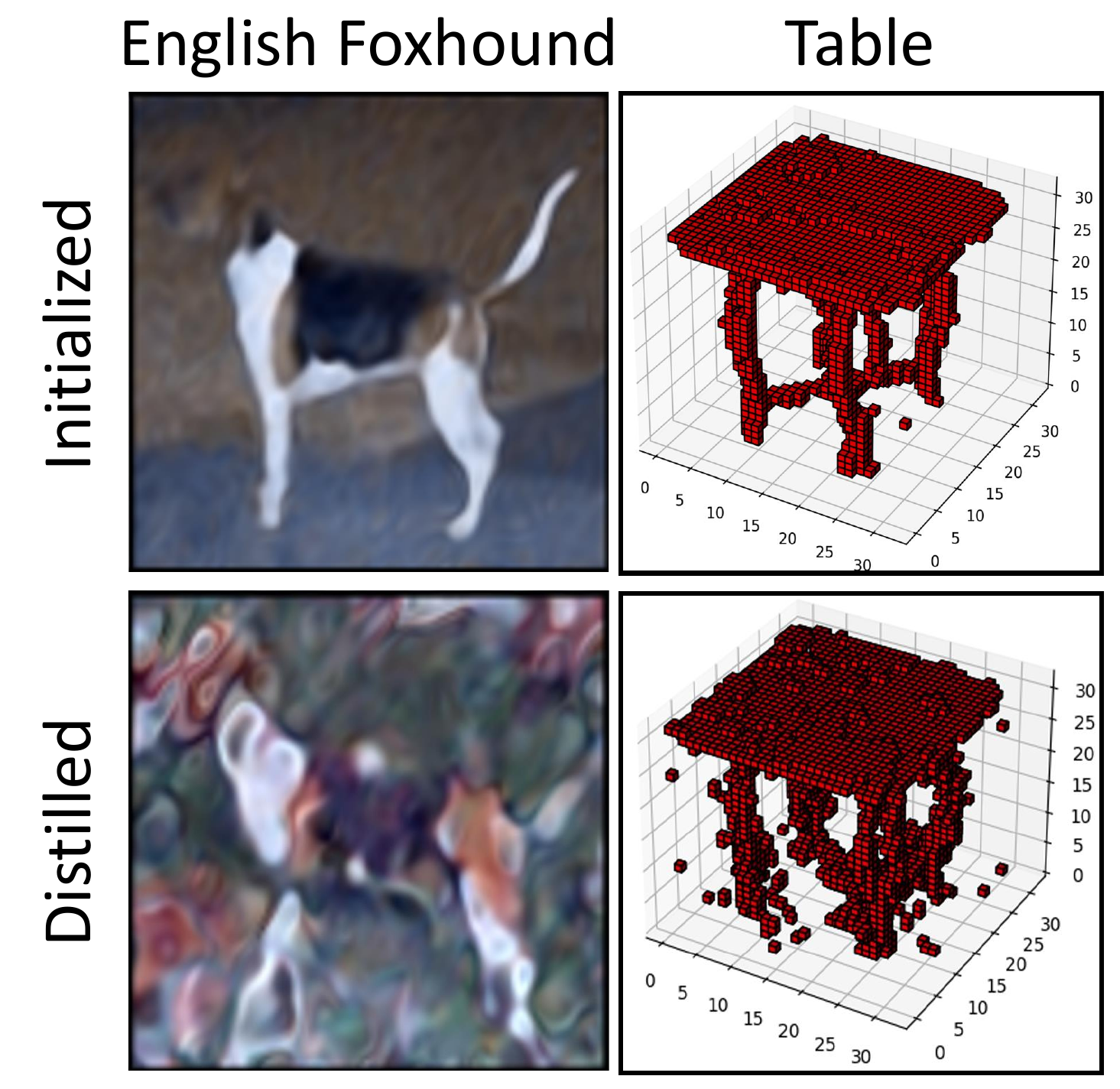}
        \captionof{figure}{Visualization of the decoded synthetic instances from DDiF on image (left) and 3D voxel (right).}
        \label{fig: qualitative}
    }
    \vspace{-3ex}
\end{table*}
\paragraph{Qualitative Analysis.} Figure \ref{fig: qualitative} visualizes the decoded synthetic instances by DDiF on various modalities. As shown in the first row in Figure \ref{fig: qualitative}, DDiF effectively encodes high-dimensional data even with a very small budget, regardless of the modality. For instance, each synthetic neural field utilizes a budget that is $1.96\%$ of the original data size for images and $2.87\%$ for 3D voxel. After the distillation stage, each decoded synthetic instance involves class-discriminative features, even with significant budget reductions (see the second row). Notably, since the synthetic neural field is a continuous function, the quantity changes smoothly as the position changes. Please refer to Appendix \ref{appendix: additional results: full table} for additional visualizations of images, videos, and 3D voxels.

\begin{wrapfigure}{r}{0.60\textwidth}
    \vspace{-3ex}
    \centering
    \begin{subfigure}[t]{0.363\linewidth}
        \includegraphics[width=\textwidth]{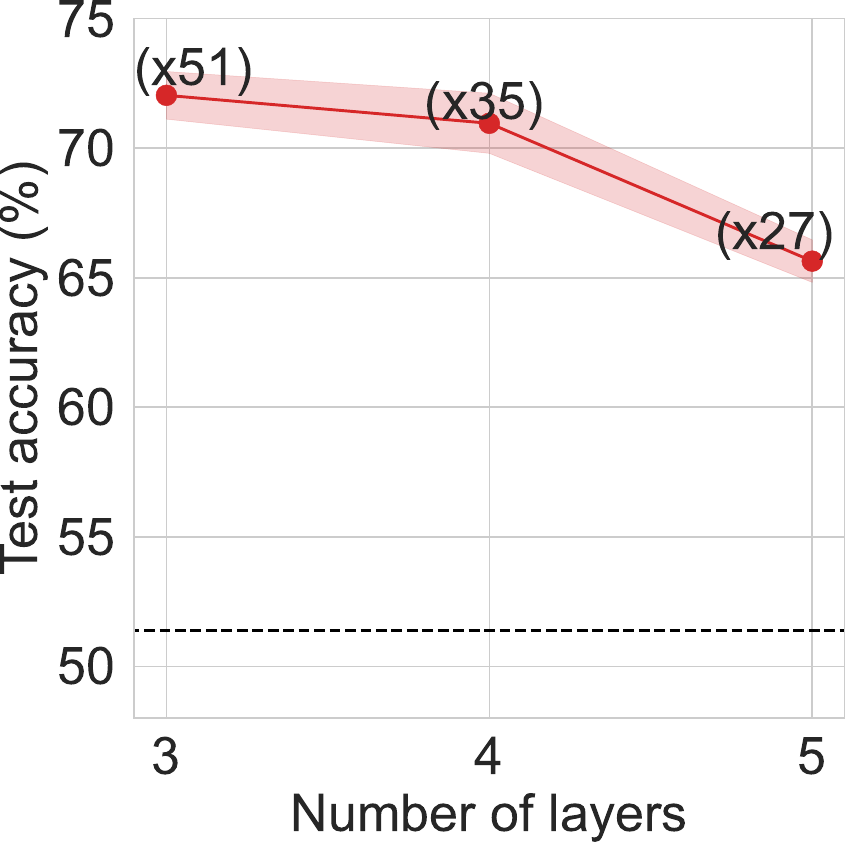}
        \caption{Layer $L$}
        \label{fig: ablation_layer}
    \end{subfigure}
    \begin{subfigure}[t]{0.363\linewidth}
        \includegraphics[width=\textwidth]{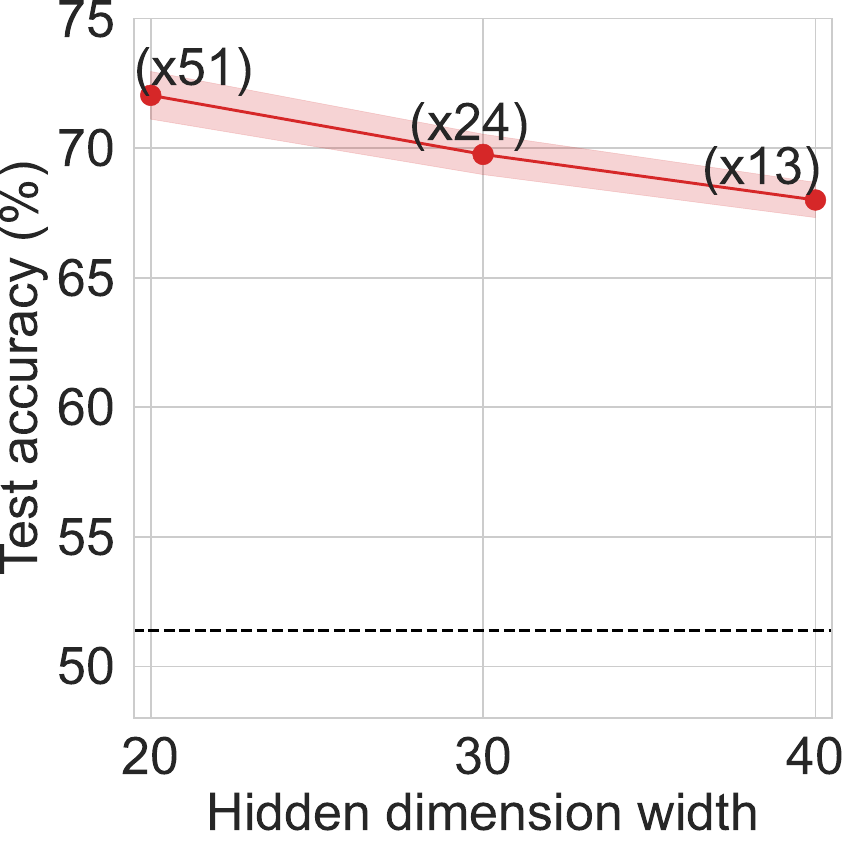}
        \caption{Width $d$}
        \label{fig: ablation_width}
    \end{subfigure}
    \begin{subfigure}[t]{0.25\linewidth}
        \includegraphics[width=\textwidth]{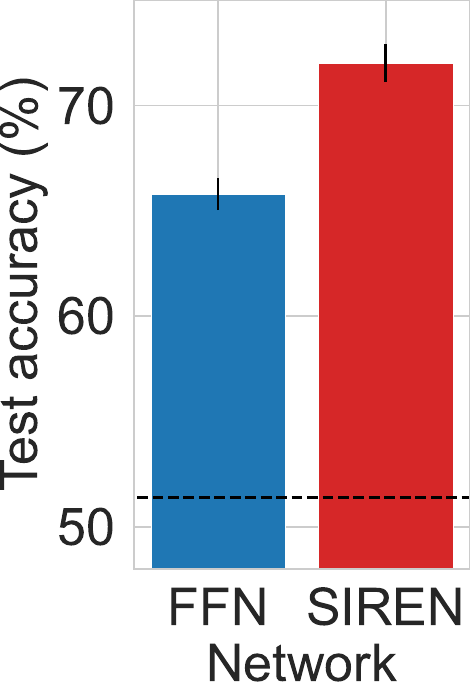}
        \caption{Structure}
        \label{fig: ablation_structure}
    \end{subfigure}
    \caption{Ablation studies on (a) layer $L$, (b) width $d$, and (c) structure of neural field. The bottom black dashed line indicates the performance of Vanilla (TM).}
    \vspace{-3ex}
    \label{fig:ablation}
\end{wrapfigure}
\paragraph{Ablation Studies.} DDiF employs a neural field, which is a neural network, to store the distilled information. We conduct several ablation studies to investigate the effect of different components of the neural network. Figures \ref{fig: ablation_layer} and \ref{fig: ablation_width} demonstrate that DDiF consistently enhances performance, regardless of changes in the number of layers or network width. As the number of layers and the network width increase, the number of decoded synthetic instances decreases because the number of parameters in each synthetic neural field increases. By comparing the layer and width, the width has fewer decoded synthetic instances, but it has a more gradual performance change than the layer. To explain the rationale, as shown in \cref{eq: DDiF feasible space}, width $d$ directly affects the number of basis functions $k$, while layer $L$ only affects other factors. Thus, increasing $d$ leads to a modest change, as the increase in expressiveness offsets the reduction in quantity. In contrast, increasing $L$ results in a relatively larger change, as the expressiveness remains similar while the quantity decreases. Even when the structure of the neural field is changed as FFN \citep{tancik2020fourier}, DDiF consistently shows performance improvement (see Figure \ref{fig: ablation_structure}).

\section{Conclusion}
This paper introduces DDiF, a novel parameterization framework for dataset distillation that encodes information from the large-scale dataset into synthetic neural fields under a constrained storage budget. By utilizing neural fields, DDiF efficiently encapsulates distilled information from high-dimensional grid-based data and easily decodes data of various sizes. We theoretically analyze the expressiveness of DDiF by investigating the feasible space of decoded synthetic instances and demonstrate that DDiF possesses greater expressiveness than the previous method. Through extensive experiments, we demonstrate that DDiF consistently exhibits performance improvements, high generalization and robustness, and broad adaptability across diverse modality datasets.

\subsubsection*{Acknowledgments}
This work was supported by Samsung Electronics Co., Ltd (No. IO231011-07374-01). This work was also supported by the IITP(Institute of Information \& Coummunications Technology Planning \& Evaluation)-ITRC(Information Technology Research Center) grant funded by the Korea government(Ministry of Science and ICT)(IITP-2025-RS-2024-00437268). We thank Wonjun Choi (wonjun4.choi@samsung.com) and Seongeun Kim (se91.kim@samsung.com) from the Core Algorithm Lab, AI Center, for their warm and keen discussions on our research.

\bibliography{main}

\begin{thebibliography}{71}
\providecommand{\natexlab}[1]{#1}
\providecommand{\url}[1]{\texttt{#1}}
\expandafter\ifx\csname urlstyle\endcsname\relax
  \providecommand{\doi}[1]{doi: #1}\else
  \providecommand{\doi}{doi: \begingroup \urlstyle{rm}\Url}\fi

\bibitem[Abramowitz \& Stegun(1948)Abramowitz and Stegun]{abramowitz1948handbook}
Milton Abramowitz and Irene~A Stegun.
\newblock \emph{Handbook of mathematical functions with formulas, graphs, and mathematical tables}, volume~55.
\newblock US Government printing office, 1948.

\bibitem[Bengio et~al.(2007)Bengio, LeCun, et~al.]{bengio2007scaling}
Yoshua Bengio, Yann LeCun, et~al.
\newblock Scaling learning algorithms towards ai.
\newblock \emph{Large-scale kernel machines}, 34\penalty0 (5):\penalty0 1--41, 2007.

\bibitem[Borsos et~al.(2020)Borsos, Mutny, and Krause]{borsos2020coresets}
Zal{\'a}n Borsos, Mojmir Mutny, and Andreas Krause.
\newblock Coresets via bilevel optimization for continual learning and streaming.
\newblock \emph{Advances in neural information processing systems}, 33:\penalty0 14879--14890, 2020.

\bibitem[Cazenavette et~al.(2022)Cazenavette, Wang, Torralba, Efros, and Zhu]{cazenavette2022dataset}
George Cazenavette, Tongzhou Wang, Antonio Torralba, Alexei~A Efros, and Jun-Yan Zhu.
\newblock Dataset distillation by matching training trajectories.
\newblock In \emph{Proceedings of the IEEE/CVF Conference on Computer Vision and Pattern Recognition}, pp.\  4750--4759, 2022.

\bibitem[Cazenavette et~al.(2023)Cazenavette, Wang, Torralba, Efros, and Zhu]{cazenavette2023generalizing}
George Cazenavette, Tongzhou Wang, Antonio Torralba, Alexei~A Efros, and Jun-Yan Zhu.
\newblock Generalizing dataset distillation via deep generative prior.
\newblock In \emph{Proceedings of the IEEE/CVF Conference on Computer Vision and Pattern Recognition}, pp.\  3739--3748, 2023.

\bibitem[Chang et~al.(2015)Chang, Funkhouser, Guibas, Hanrahan, Huang, Li, Savarese, Savva, Song, Su, et~al.]{chang2015shapenet}
Angel~X Chang, Thomas Funkhouser, Leonidas Guibas, Pat Hanrahan, Qixing Huang, Zimo Li, Silvio Savarese, Manolis Savva, Shuran Song, Hao Su, et~al.
\newblock Shapenet: An information-rich 3d model repository.
\newblock \emph{arXiv preprint arXiv:1512.03012}, 2015.

\bibitem[Cui et~al.(2023)Cui, Wang, Si, and Hsieh]{cui2023scaling}
Justin Cui, Ruochen Wang, Si~Si, and Cho-Jui Hsieh.
\newblock Scaling up dataset distillation to imagenet-1k with constant memory.
\newblock In \emph{International Conference on Machine Learning}, pp.\  6565--6590. PMLR, 2023.

\bibitem[Cybenko(1989)]{cybenko1989approximation}
George Cybenko.
\newblock Approximation by superpositions of a sigmoidal function.
\newblock \emph{Mathematics of control, signals and systems}, 2\penalty0 (4):\penalty0 303--314, 1989.

\bibitem[Deng \& Russakovsky(2022)Deng and Russakovsky]{deng2022remember}
Zhiwei Deng and Olga Russakovsky.
\newblock Remember the past: Distilling datasets into addressable memories for neural networks.
\newblock \emph{Advances in Neural Information Processing Systems}, 35:\penalty0 34391--34404, 2022.

\bibitem[Dosovitskiy(2020)]{dosovitskiy2020image}
Alexey Dosovitskiy.
\newblock An image is worth 16x16 words: Transformers for image recognition at scale.
\newblock \emph{arXiv preprint arXiv:2010.11929}, 2020.

\bibitem[Duan et~al.(2023)Duan, Zhang, and Zhang]{duan2023dataset}
Yuxuan Duan, Jianfu Zhang, and Liqing Zhang.
\newblock Dataset distillation in latent space.
\newblock \emph{arXiv preprint arXiv:2311.15547}, 2023.

\bibitem[Dugad \& Ahuja(2001)Dugad and Ahuja]{dugad2001fast}
Rakesh Dugad and Narendra Ahuja.
\newblock A fast scheme for image size change in the compressed domain.
\newblock \emph{IEEE Transactions on Circuits and Systems for Video Technology}, 11\penalty0 (4):\penalty0 461--474, 2001.

\bibitem[Dupont et~al.(2021)Dupont, Teh, and Doucet]{dupont2021generative}
Emilien Dupont, Yee~Whye Teh, and Arnaud Doucet.
\newblock Generative models as distributions of functions.
\newblock \emph{arXiv preprint arXiv:2102.04776}, 2021.

\bibitem[Guo et~al.(2023)Guo, Wang, Cazenavette, Li, Zhang, and You]{guo2023towards}
Ziyao Guo, Kai Wang, George Cazenavette, Hui Li, Kaipeng Zhang, and Yang You.
\newblock Towards lossless dataset distillation via difficulty-aligned trajectory matching.
\newblock \emph{arXiv preprint arXiv:2310.05773}, 2023.

\bibitem[He et~al.(2016)He, Zhang, Ren, and Sun]{he2016deep}
Kaiming He, Xiangyu Zhang, Shaoqing Ren, and Jian Sun.
\newblock Deep residual learning for image recognition.
\newblock In \emph{Proceedings of the IEEE conference on computer vision and pattern recognition}, pp.\  770--778, 2016.

\bibitem[Howard(2019)]{howard2019smaller}
Jeremy Howard.
\newblock A smaller subset of 10 easily classified classes from imagenet and a little more french.
\newblock \emph{URL https://github. com/fastai/imagenette}, 2019.

\bibitem[Kaplan et~al.(2020)Kaplan, McCandlish, Henighan, Brown, Chess, Child, Gray, Radford, Wu, and Amodei]{kaplan2020scaling}
Jared Kaplan, Sam McCandlish, Tom Henighan, Tom~B Brown, Benjamin Chess, Rewon Child, Scott Gray, Alec Radford, Jeffrey Wu, and Dario Amodei.
\newblock Scaling laws for neural language models.
\newblock \emph{arXiv preprint arXiv:2001.08361}, 2020.

\bibitem[Kerimov(2014)]{kerimov2014studies}
Movlud~Kerimovich Kerimov.
\newblock Studies on the zeros of bessel functions and methods for their computation.
\newblock \emph{Computational mathematics and mathematical physics}, 54:\penalty0 1337--1388, 2014.

\bibitem[Khurram(2012)]{khurram2012ucf101}
Soomro Khurram.
\newblock Ucf101: A dataset of 101 human actions classes from videos in the wild.
\newblock \emph{arXiv preprint arXiv: 1212.0402}, 2012.

\bibitem[Kim et~al.(2022)Kim, Kim, Oh, Yun, Song, Jeong, Ha, and Song]{kim2022dataset}
Jang-Hyun Kim, Jinuk Kim, Seong~Joon Oh, Sangdoo Yun, Hwanjun Song, Joonhyun Jeong, Jung-Woo Ha, and Hyun~Oh Song.
\newblock Dataset condensation via efficient synthetic-data parameterization.
\newblock In \emph{International Conference on Machine Learning}, pp.\  11102--11118. PMLR, 2022.

\bibitem[Kingma \& Ba(2017)Kingma and Ba]{kingma2017adammethodstochasticoptimization}
Diederik~P. Kingma and Jimmy Ba.
\newblock Adam: A method for stochastic optimization, 2017.
\newblock URL \url{https://arxiv.org/abs/1412.6980}.

\bibitem[Krizhevsky et~al.(2009)Krizhevsky, Hinton, et~al.]{krizhevsky2009learning}
Alex Krizhevsky, Geoffrey Hinton, et~al.
\newblock Learning multiple layers of features from tiny images.
\newblock 2009.

\bibitem[Krizhevsky et~al.(2012)Krizhevsky, Sutskever, and Hinton]{krizhevsky2012imagenet}
Alex Krizhevsky, Ilya Sutskever, and Geoffrey~E Hinton.
\newblock Imagenet classification with deep convolutional neural networks.
\newblock \emph{Advances in neural information processing systems}, 25, 2012.

\bibitem[Lee et~al.(2022)Lee, Lee, and Hwang]{lee2022dataset}
Hae~Beom Lee, Dong~Bok Lee, and Sung~Ju Hwang.
\newblock Dataset condensation with latent space knowledge factorization and sharing.
\newblock \emph{arXiv preprint arXiv:2208.10494}, 2022.

\bibitem[Lei \& Tao(2023)Lei and Tao]{lei2023comprehensive}
Shiye Lei and Dacheng Tao.
\newblock A comprehensive survey of dataset distillation.
\newblock \emph{IEEE Transactions on Pattern Analysis and Machine Intelligence}, 2023.

\bibitem[Li et~al.(2022)Li, Li, Sitzmann, Agrawal, and Torralba]{li20223d}
Yunzhu Li, Shuang Li, Vincent Sitzmann, Pulkit Agrawal, and Antonio Torralba.
\newblock 3d neural scene representations for visuomotor control.
\newblock In \emph{Conference on Robot Learning}, pp.\  112--123. PMLR, 2022.

\bibitem[Liu et~al.(2022)Liu, Wang, Yang, Ye, and Wang]{liu2022dataset}
Songhua Liu, Kai Wang, Xingyi Yang, Jingwen Ye, and Xinchao Wang.
\newblock Dataset distillation via factorization.
\newblock \emph{arXiv preprint arXiv:2210.16774}, 2022.

\bibitem[Lu et~al.(2017)Lu, Pu, Wang, Hu, and Wang]{lu2017expressive}
Zhou Lu, Hongming Pu, Feicheng Wang, Zhiqiang Hu, and Liwei Wang.
\newblock The expressive power of neural networks: A view from the width.
\newblock \emph{Advances in neural information processing systems}, 30, 2017.

\bibitem[Mescheder(2020)]{mescheder2020stability}
Lars~Morten Mescheder.
\newblock \emph{Stability and expressiveness of deep generative models}.
\newblock PhD thesis, Universit{\"a}t T{\"u}bingen, 2020.

\bibitem[Mildenhall et~al.(2021)Mildenhall, Srinivasan, Tancik, Barron, Ramamoorthi, and Ng]{mildenhall2021nerf}
Ben Mildenhall, Pratul~P Srinivasan, Matthew Tancik, Jonathan~T Barron, Ravi Ramamoorthi, and Ren Ng.
\newblock Nerf: Representing scenes as neural radiance fields for view synthesis.
\newblock \emph{Communications of the ACM}, 65\penalty0 (1):\penalty0 99--106, 2021.

\bibitem[Moser et~al.(2024)Moser, Raue, Palacio, Frolov, and Dengel]{moser2024latent}
Brian~B Moser, Federico Raue, Sebastian Palacio, Stanislav Frolov, and Andreas Dengel.
\newblock Latent dataset distillation with diffusion models.
\newblock \emph{arXiv preprint arXiv:2403.03881}, 2024.

\bibitem[Novello(2022)]{novello2022understanding}
Tiago Novello.
\newblock Understanding sinusoidal neural networks.
\newblock \emph{arXiv preprint arXiv:2212.01833}, 2022.

\bibitem[Paris(1984)]{paris1984inequality}
RB~Paris.
\newblock An inequality for the bessel function j\_$\nu$($\nu$x).
\newblock \emph{SIAM journal on mathematical analysis}, 15\penalty0 (1):\penalty0 203--205, 1984.

\bibitem[Park et~al.(2019)Park, Florence, Straub, Newcombe, and Lovegrove]{park2019deepsdf}
Jeong~Joon Park, Peter Florence, Julian Straub, Richard Newcombe, and Steven Lovegrove.
\newblock Deepsdf: Learning continuous signed distance functions for shape representation.
\newblock In \emph{Proceedings of the IEEE/CVF conference on computer vision and pattern recognition}, pp.\  165--174, 2019.

\bibitem[Qin et~al.(2024)Qin, Deng, and Alvarez-Melis]{qin2024label}
Tian Qin, Zhiwei Deng, and David Alvarez-Melis.
\newblock A label is worth a thousand images in dataset distillation.
\newblock \emph{arXiv preprint arXiv:2406.10485}, 2024.

\bibitem[Raghu et~al.(2017)Raghu, Poole, Kleinberg, Ganguli, and Sohl-Dickstein]{raghu2017expressive}
Maithra Raghu, Ben Poole, Jon Kleinberg, Surya Ganguli, and Jascha Sohl-Dickstein.
\newblock On the expressive power of deep neural networks.
\newblock In \emph{international conference on machine learning}, pp.\  2847--2854. PMLR, 2017.

\bibitem[Rahaman et~al.(2019)Rahaman, Baratin, Arpit, Draxler, Lin, Hamprecht, Bengio, and Courville]{rahaman2019spectral}
Nasim Rahaman, Aristide Baratin, Devansh Arpit, Felix Draxler, Min Lin, Fred Hamprecht, Yoshua Bengio, and Aaron Courville.
\newblock On the spectral bias of neural networks.
\newblock In \emph{International Conference on Machine Learning}, pp.\  5301--5310. PMLR, 2019.

\bibitem[Riba et~al.(2020)Riba, Mishkin, Ponsa, Rublee, and Bradski]{riba2020kornia}
Edgar Riba, Dmytro Mishkin, Daniel Ponsa, Ethan Rublee, and Gary Bradski.
\newblock Kornia: an open source differentiable computer vision library for pytorch.
\newblock In \emph{Proceedings of the IEEE/CVF Winter Conference on Applications of Computer Vision}, pp.\  3674--3683, 2020.

\bibitem[Sachdeva \& McAuley(2023)Sachdeva and McAuley]{sachdeva2023data}
Noveen Sachdeva and Julian McAuley.
\newblock Data distillation: A survey.
\newblock \emph{arXiv preprint arXiv:2301.04272}, 2023.

\bibitem[Sachdeva et~al.(2023)Sachdeva, He, Kang, Ni, Cheng, and McAuley]{sachdeva2023farzi}
Noveen Sachdeva, Zexue He, Wang-Cheng Kang, Jianmo Ni, Derek~Zhiyuan Cheng, and Julian McAuley.
\newblock Farzi data: Autoregressive data distillation.
\newblock \emph{arXiv preprint arXiv:2310.09983}, 2023.

\bibitem[Shen et~al.(2022)Shen, Pauly, and Xing]{shen2022nerp}
Liyue Shen, John Pauly, and Lei Xing.
\newblock Nerp: implicit neural representation learning with prior embedding for sparsely sampled image reconstruction.
\newblock \emph{IEEE Transactions on Neural Networks and Learning Systems}, 35\penalty0 (1):\penalty0 770--782, 2022.

\bibitem[Shin et~al.(2024)Shin, Shin, and Moon]{shin2024frequency}
Donghyeok Shin, Seungjae Shin, and Il-Chul Moon.
\newblock Frequency domain-based dataset distillation.
\newblock \emph{Advances in Neural Information Processing Systems}, 36, 2024.

\bibitem[Shorten \& Khoshgoftaar(2019)Shorten and Khoshgoftaar]{shorten2019survey}
Connor Shorten and Taghi~M Khoshgoftaar.
\newblock A survey on image data augmentation for deep learning.
\newblock \emph{Journal of big data}, 6\penalty0 (1):\penalty0 1--48, 2019.

\bibitem[Simonyan \& Zisserman(2014)Simonyan and Zisserman]{simonyan2014very}
Karen Simonyan and Andrew Zisserman.
\newblock Very deep convolutional networks for large-scale image recognition.
\newblock \emph{arXiv preprint arXiv:1409.1556}, 2014.

\bibitem[Sitzmann et~al.(2020)Sitzmann, Martel, Bergman, Lindell, and Wetzstein]{sitzmann2020implicit}
Vincent Sitzmann, Julien Martel, Alexander Bergman, David Lindell, and Gordon Wetzstein.
\newblock Implicit neural representations with periodic activation functions.
\newblock \emph{Advances in neural information processing systems}, 33:\penalty0 7462--7473, 2020.

\bibitem[Skorokhodov et~al.(2021)Skorokhodov, Ignatyev, and Elhoseiny]{skorokhodov2021adversarial}
Ivan Skorokhodov, Savva Ignatyev, and Mohamed Elhoseiny.
\newblock Adversarial generation of continuous images.
\newblock In \emph{Proceedings of the IEEE/CVF conference on computer vision and pattern recognition}, pp.\  10753--10764, 2021.

\bibitem[Soomro(2012)]{soomro2012ucf101}
K~Soomro.
\newblock Ucf101: A dataset of 101 human actions classes from videos in the wild.
\newblock \emph{arXiv preprint arXiv:1212.0402}, 2012.

\bibitem[Su et~al.(2024{\natexlab{a}})Su, Hou, Gao, Tian, and Tang]{su2024d}
Duo Su, Junjie Hou, Weizhi Gao, Yingjie Tian, and Bowen Tang.
\newblock D\^{} 4: Dataset distillation via disentangled diffusion model.
\newblock In \emph{Proceedings of the IEEE/CVF Conference on Computer Vision and Pattern Recognition}, pp.\  5809--5818, 2024{\natexlab{a}}.

\bibitem[Su et~al.(2024{\natexlab{b}})Su, Hou, Li, Togo, Song, Ogawa, and Haseyama]{su2024generative}
Duo Su, Junjie Hou, Guang Li, Ren Togo, Rui Song, Takahiro Ogawa, and Miki Haseyama.
\newblock Generative dataset distillation based on diffusion model.
\newblock \emph{arXiv preprint arXiv:2408.08610}, 2024{\natexlab{b}}.

\bibitem[Sucholutsky \& Schonlau(2021)Sucholutsky and Schonlau]{sucholutsky2021soft}
Ilia Sucholutsky and Matthias Schonlau.
\newblock Soft-label dataset distillation and text dataset distillation.
\newblock In \emph{2021 International Joint Conference on Neural Networks (IJCNN)}, pp.\  1--8. IEEE, 2021.

\bibitem[Sun et~al.(2024)Sun, Shi, Yu, and Lin]{sun2024diversity}
Peng Sun, Bei Shi, Daiwei Yu, and Tao Lin.
\newblock On the diversity and realism of distilled dataset: An efficient dataset distillation paradigm.
\newblock In \emph{Proceedings of the IEEE/CVF Conference on Computer Vision and Pattern Recognition}, pp.\  9390--9399, 2024.

\bibitem[Tancik et~al.(2020)Tancik, Srinivasan, Mildenhall, Fridovich-Keil, Raghavan, Singhal, Ramamoorthi, Barron, and Ng]{tancik2020fourier}
Matthew Tancik, Pratul Srinivasan, Ben Mildenhall, Sara Fridovich-Keil, Nithin Raghavan, Utkarsh Singhal, Ravi Ramamoorthi, Jonathan Barron, and Ren Ng.
\newblock Fourier features let networks learn high frequency functions in low dimensional domains.
\newblock \emph{Advances in neural information processing systems}, 33:\penalty0 7537--7547, 2020.

\bibitem[Wang et~al.(2020{\natexlab{a}})Wang, Wu, Huang, and Xing]{wang2020high}
Haohan Wang, Xindi Wu, Zeyi Huang, and Eric~P Xing.
\newblock High-frequency component helps explain the generalization of convolutional neural networks.
\newblock In \emph{Proceedings of the IEEE/CVF conference on computer vision and pattern recognition}, pp.\  8684--8694, 2020{\natexlab{a}}.

\bibitem[Wang et~al.(2023)Wang, Gu, Zhou, Zhu, Jiang, and You]{wang2023dim}
Kai Wang, Jianyang Gu, Daquan Zhou, Zheng Zhu, Wei Jiang, and Yang You.
\newblock Dim: Distilling dataset into generative model.
\newblock \emph{arXiv preprint arXiv:2303.04707}, 2023.

\bibitem[Wang et~al.(2018)Wang, Zhu, Torralba, and Efros]{wang2018dataset}
Tongzhou Wang, Jun-Yan Zhu, Antonio Torralba, and Alexei~A Efros.
\newblock Dataset distillation.
\newblock \emph{arXiv preprint arXiv:1811.10959}, 2018.

\bibitem[Wang et~al.(2020{\natexlab{b}})Wang, Chen, and Hoi]{wang2020deep}
Zhihao Wang, Jian Chen, and Steven~CH Hoi.
\newblock Deep learning for image super-resolution: A survey.
\newblock \emph{IEEE transactions on pattern analysis and machine intelligence}, 43\penalty0 (10):\penalty0 3365--3387, 2020{\natexlab{b}}.

\bibitem[Wang et~al.(2024)Wang, Xu, Lu, and Li]{wang2024dancing}
Ziyu Wang, Yue Xu, Cewu Lu, and Yong-Lu Li.
\newblock Dancing with still images: Video distillation via static-dynamic disentanglement.
\newblock In \emph{Proceedings of the IEEE/CVF Conference on Computer Vision and Pattern Recognition}, pp.\  6296--6304, 2024.

\bibitem[Warden(2018)]{warden2018speech}
Pete Warden.
\newblock Speech commands: A dataset for limited-vocabulary speech recognition.
\newblock \emph{arXiv preprint arXiv:1804.03209}, 2018.

\bibitem[Watson(1922)]{watson1922treatise}
George~Neville Watson.
\newblock \emph{A treatise on the theory of Bessel functions}, volume~2.
\newblock The University Press, 1922.

\bibitem[Wei et~al.(2024)Wei, Cao, Yang, and Ma]{wei2024sparse}
Xing Wei, Anjia Cao, Funing Yang, and Zhiheng Ma.
\newblock Sparse parameterization for epitomic dataset distillation.
\newblock \emph{Advances in Neural Information Processing Systems}, 36, 2024.

\bibitem[Wu et~al.(2015)Wu, Song, Khosla, Yu, Zhang, Tang, and Xiao]{wu20153d}
Zhirong Wu, Shuran Song, Aditya Khosla, Fisher Yu, Linguang Zhang, Xiaoou Tang, and Jianxiong Xiao.
\newblock 3d shapenets: A deep representation for volumetric shapes.
\newblock In \emph{Proceedings of the IEEE conference on computer vision and pattern recognition}, pp.\  1912--1920, 2015.

\bibitem[Xie et~al.(2022)Xie, Takikawa, Saito, Litany, Yan, Khan, Tombari, Tompkin, Sitzmann, and Sridhar]{xie2022neural}
Yiheng Xie, Towaki Takikawa, Shunsuke Saito, Or~Litany, Shiqin Yan, Numair Khan, Federico Tombari, James Tompkin, Vincent Sitzmann, and Srinath Sridhar.
\newblock Neural fields in visual computing and beyond.
\newblock In \emph{Computer Graphics Forum}, volume~41, pp.\  641--676. Wiley Online Library, 2022.

\bibitem[Xu et~al.(2019)Xu, Zhang, Luo, Xiao, and Ma]{xu2019frequency}
Zhi-Qin~John Xu, Yaoyu Zhang, Tao Luo, Yanyang Xiao, and Zheng Ma.
\newblock Frequency principle: Fourier analysis sheds light on deep neural networks.
\newblock \emph{arXiv preprint arXiv:1901.06523}, 2019.

\bibitem[Yang et~al.(2024)Yang, Cheng, Hong, Fan, Wei, and Liu]{yang2024neural}
Shaolei Yang, Shen Cheng, Mingbo Hong, Haoqiang Fan, Xing Wei, and Shuaicheng Liu.
\newblock Neural spectral decomposition for dataset distillation.
\newblock \emph{arXiv preprint arXiv:2408.16236}, 2024.

\bibitem[Yu et~al.(2023)Yu, Liu, and Wang]{yu2023dataset}
Ruonan Yu, Songhua Liu, and Xinchao Wang.
\newblock Dataset distillation: A comprehensive review.
\newblock \emph{IEEE Transactions on Pattern Analysis and Machine Intelligence}, 2023.

\bibitem[Zang et~al.(2021)Zang, Idoughi, Li, Wonka, and Heidrich]{zang2021intratomo}
Guangming Zang, Ramzi Idoughi, Rui Li, Peter Wonka, and Wolfgang Heidrich.
\newblock Intratomo: self-supervised learning-based tomography via sinogram synthesis and prediction.
\newblock In \emph{Proceedings of the IEEE/CVF International Conference on Computer Vision}, pp.\  1960--1970, 2021.

\bibitem[Zhao \& Bilen(2021)Zhao and Bilen]{zhao2021dataset}
Bo~Zhao and Hakan Bilen.
\newblock Dataset condensation with differentiable siamese augmentation.
\newblock In \emph{International Conference on Machine Learning}, pp.\  12674--12685. PMLR, 2021.

\bibitem[Zhao \& Bilen(2023)Zhao and Bilen]{zhao2023dataset}
Bo~Zhao and Hakan Bilen.
\newblock Dataset condensation with distribution matching.
\newblock In \emph{Proceedings of the IEEE/CVF Winter Conference on Applications of Computer Vision}, pp.\  6514--6523, 2023.

\bibitem[Zhao et~al.(2020)Zhao, Mopuri, and Bilen]{zhao2020dataset}
Bo~Zhao, Konda~Reddy Mopuri, and Hakan Bilen.
\newblock Dataset condensation with gradient matching.
\newblock \emph{arXiv preprint arXiv:2006.05929}, 2020.

\bibitem[Zheng et~al.(2023)Zheng, Sun, Wu, Kailkhura, Mao, Xiao, and Prakash]{zheng2023leveraging}
Haizhong Zheng, Jiachen Sun, Shutong Wu, Bhavya Kailkhura, Zhuoqing Mao, Chaowei Xiao, and Atul Prakash.
\newblock Leveraging hierarchical feature sharing for efficient dataset condensation.
\newblock \emph{arXiv preprint arXiv:2310.07506}, 2023.

\bibitem[Zhong et~al.(2024)Zhong, Fang, Chen, Gu, Dai, Qiu, and Xia]{zhong2024hierarchical}
Xinhao Zhong, Hao Fang, Bin Chen, Xulin Gu, Tao Dai, Meikang Qiu, and Shu-Tao Xia.
\newblock Hierarchical features matter: A deep exploration of gan priors for improved dataset distillation.
\newblock \emph{arXiv preprint arXiv:2406.05704}, 2024.

\end{thebibliography}
\bibliographystyle{iclr2025_conference}

\newpage
\appendix
\section{Proofs and More Analysis} \label{appendix: proofs}
\subsection{Proof of Proposition \ref{theory: proposition 1}} \label{appendix: proofs: proposition 1}
\prop*
\begin{proof}
    For $i=1,2$, let $g_i^*, Z_i^*=\argmin_{g_i\in\mathcal{G}, Z_i\in\mathcal{Z}_i^{M}} \mathcal{L}(\widehat{g_{i}}(Z_i))$. Note that $g_{1}^*(z_{1j}^*)\in g_1^*(\mathcal{Z}_1) \subseteq g_2(\mathcal{Z}_2)$ for $j=1,...,M$ and any $g_2\in\mathcal{G}_2$. It implies that there exists some $g_2\in\mathcal{G}_2,Z_2\in\mathcal{Z}_2^M$ such that $\widehat{g_2}(Z_2)=\widehat{g_{1}^*}(Z_1^*)$. By the definition of $g_2^*, Z_2^*$, for any $g_2\in\mathcal{G}_2, Z_2\in\mathcal{Z}_2^M$, $\mathcal{L}(\widehat{g_2^*}(Z_2^*)) \leq \mathcal{L}(\widehat{g_2}(Z_2))$. Therefore, $\mathcal{L}(\widehat{g_2^*}(Z_2^*)) \leq \mathcal{L}(\widehat{g_1^*}(Z_1^*))$. In conclusion, $\min_{g_1\in\mathcal{G}_1, Z_1\in\mathcal{Z}_1^{M}} \mathcal{L}(\widehat{g_{1}}(Z_1)) \geq \min_{g_2\in\mathcal{G}_2, Z_2\in\mathcal{Z}_2^{M}} \mathcal{L}(\widehat{g_{2}}(Z_2))$.
\end{proof}
The following Corollary \ref{theory: corollary} states that the relationship between the feasible space of synthetic instances from input-sized parameterization and the optimal value of the dataset distillation objective, when the number of synthetic instances is the same.
\begin{restatable}{corollary}{coro}
    Let two matrix variables $X_1\coloneqq\big[x_{11},...,x_{1M}\big]$ and $X_2\coloneqq\big[x_{21},...,x_{2M}\big]$ consisting of columns $x_{ij}\in\mathcal{X}_i\subseteq\R^{D}$ for $i=1,2$ and $j=1,...,M$.
    If $\mathcal{X}_1 \subseteq \mathcal{X}_2$, then $\min_{X_1\in\mathcal{X}_1^{M}} \mathcal{L}(X_1) \geq \min_{X_2\in\mathcal{X}_2^{M}} \mathcal{L}(X_2)$.
    \label{theory: corollary}
\end{restatable}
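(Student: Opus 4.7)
The plan is to derive Corollary A.1 as an immediate specialization of Proposition 3.1, by taking the decoding functions to be identities on the data spaces themselves. Concretely, I would set $\mathcal{Z}_i = \mathcal{X}_i$ and let $g_i : \mathcal{X}_i \to \mathbb{R}^D$ be the inclusion map $x \mapsto x$ for $i=1,2$, so that $\widehat{g_i}(Z_i) = Z_i = X_i$. Under this identification, the two minimizations in Proposition 3.1 reduce exactly to the two minimizations in Corollary A.1, and the hypothesis $g_1(\mathcal{Z}_1) \subseteq g_2(\mathcal{Z}_2)$ becomes verbatim $\mathcal{X}_1 \subseteq \mathcal{X}_2$. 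The conclusion then falls out by direct appeal to the proposition.

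For a self-contained alternative, I would give the one-line monotonicity-of-$\min$ argument. The inclusion $\mathcal{X}_1 \subseteq \mathcal{X}_2$ lifts componentwise to $\mathcal{X}_1^M \subseteq \mathcal{X}_2^M$, since each column of $X_1 \in \mathcal{X}_1^M$ lies in $\mathcal{X}_1 \subseteq \mathcal{X}_2$. If $X_1^\ast$ attains $\min_{X_1 \in \mathcal{X}_1^M} \mathcal{L}(X_1)$, then $X_1^\ast \in \mathcal{X}_2^M$ is a feasible point for the right-hand minimization, so $\min_{X_2 \in \mathcal{X}_2^M} \mathcal{L}(X_2) \leq \mathcal{L}(X_1^\ast) = \min_{X_1 \in \mathcal{X}_1^M} \mathcal{L}(X_1)$, which is the claimed inequality. (If the minima are not attained, the same argument goes through with infima in place of minima.)

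There is no substantive obstacle: the corollary is essentially a bookkeeping remark that input-sized parameterization corresponds to the trivial-decoder case of the general parameterization framework, and inherits the feasible-space monotonicity established in Proposition 3.1. In the write-up I would favor the direct monotonicity argument for clarity and simply note the specialization from Proposition 3.1 for conceptual completeness.
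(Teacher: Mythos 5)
Your proposal is correct and matches the paper's treatment: the paper gives no separate proof, presenting the corollary as an immediate consequence of Proposition \ref{theory: proposition 1} (the trivial-decoder specialization), which is exactly your first argument, and your direct monotonicity-of-$\min$ argument is just an unwound version of the same feasibility reasoning used in the proposition's proof. Both routes are sound, including your remark about replacing minima with infima when they are not attained.
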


\begin{wrapfigure}{r}{0.55\textwidth}
    \vspace{-2ex}
    \centering
    \begin{subfigure}[t]{0.49\linewidth}
        \centering
        \includegraphics[width=\textwidth]{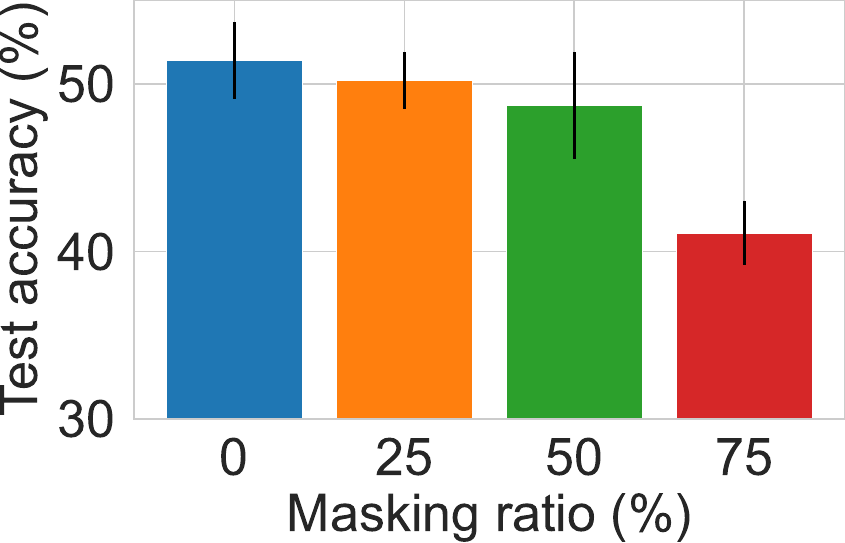}
    \caption{Dimension masking}
    \end{subfigure}
    \hfill
    \begin{subfigure}[t]{0.49\linewidth}
        \centering
        \includegraphics[width=\textwidth]{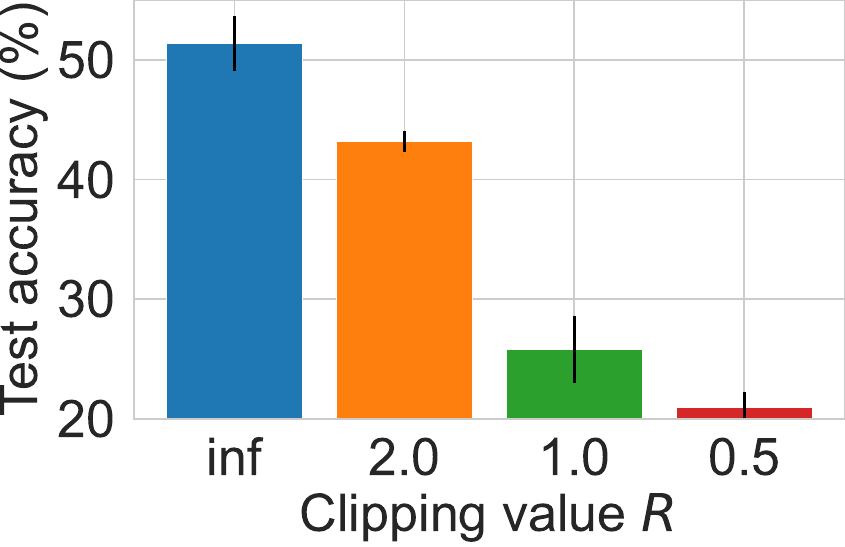}
    \caption{Value clipping}
    \end{subfigure}
    \caption{Test accuracies (\%) of input-sized parameterization. We utilize TM on ImageNette ($128\times128$) under IPC=1.}
    \label{fig: corollary evidences}
\end{wrapfigure}
To support Corollary \ref{theory: corollary}, we investigate the performance changes of input-sized parameterization while imposing constraints on the feasible space of synthetic instances under the fixed number of synthetic instances. We consider two types of constraints: 1) dimension masking and 2) value clipping. As shown in Figure \ref{fig: corollary evidences}, when the feasible space becomes smaller (i.e., as the restrictions are enforced more strongly), the dataset distillation performances decrease. These results serve as direct evidence of Corollary \ref{theory: corollary}.

\subsection{Derivation of \cref{eq: DDiF feasible space}} \label{appendix: proofs: DDiF feasible space}
To derive the feasible space of DDiF, we begin with the theorem from \cite{novello2022understanding}, which we state as follows:
\begin{theorem}\citep{novello2022understanding}
    Consider a neural network $F_{\psi}:\R\rightarrow\R$ with two hidden layers and width $d$. If $F_\psi$ utilizes a sine activation function, then $F_\psi$ represents a function which is the sum of sines and cosines:
    \begin{align} 
        F_{\psi}(x) = b^{(2)} + \sum_{k\in\mathbb{Z}^{d}} \alpha_{k}\cos{(\omega_k x)} + \beta_{k}\sin{(\omega_k x)}
    \label{eq: novello}
    \end{align} 
    where $\omega_k\coloneqq\langle k, W^{(0)} \rangle$, $\varphi_{k,i}\coloneqq\langle k, b^{(0)} \rangle + b_i^{(1)}$, $\alpha_{k}\coloneqq\sum_{i=1}^{d} A_{k,i}\sin{(\varphi_{k,i})}$ and $\beta_{k}\coloneqq\sum_{i=1}^{d} A_{k,i}\cos{(\varphi_{k,i})}$. Also, $A_{k,i}\coloneqq W_i^{(2)}\lambda_k(W_i^{(1)})$ and $\lambda_{k}(W_i^{(1)})\coloneqq\prod_{j=1}^{d} J_{k_j}W_{ij}^{(1)}$ where $J_{k_{j}}$ denotes Bessel function of the first kind of order $k_{j}$.
\end{theorem}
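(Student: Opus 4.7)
The plan is to convert the hypothesis into a combinatorial statement that DDiF has strictly more Fourier modes than FreD, then explicitly construct a $\psi$ that reproduces any prescribed FreD function on $\mathcal{C}_N$, and finally show strictness by instantiating a DDiF function outside the FreD span.

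First I would count DDiF parameters. A two-hidden-layer neural field of width $d$ uses $W^{(0)}\in\R^{d}$, $b^{(0)}\in\R^{d}$, $W^{(1)}\in\R^{d\times d}$, $b^{(1)}\in\R^{d}$, $W^{(2)}\in\R^{d}$, and $b^{(2)}\in\R$, giving $d^{2}+4d+1$ parameters. Imposing $d^{2}+4d+1\le B$ yields the maximal admissible width $d=\lfloor\sqrt{3+B}-2\rfloor$, and $B\ge 6$ is exactly the threshold that forces $d\ge 1$. Exponentiating the assumed lower bound on $\zeta$ rearranges to $(2\zeta+1)^{d}\ge 2B+1$, hence $|\mathcal{K}_\zeta|\ge 2B+1>B$. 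This is the structural inequality that makes the inclusion possible: DDiF has strictly more mode indices $k$ than FreD has basis coefficients.

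Next, given an arbitrary FreD instance $(\mathcal{U},\Gamma)$, I would construct a matching $\psi$. The construction separates the first-layer weight $W^{(0)}$, which selects the mode set $\{\omega_k=\langle k,W^{(0)}\rangle : k\in\mathcal{K}_\zeta\}$, from the remaining parameters, which control per-mode amplitudes and phases. Choosing $W^{(0)}$ with a balanced mixed-radix scaling such as $W^{(0)}_j\propto(2\zeta+1)^{j-1}$ makes $k\mapsto\omega_k$ injective with image a symmetric arithmetic progression of length $(2\zeta+1)^{d}$; after an overall rescaling this image can be arranged to contain the $B$ FreD frequencies $\{\pi u/N : u\in\mathcal{U}\}$, or their representatives modulo $2\pi$ on integer $x\in\mathcal{C}_N$, using the aliasing identity $\cos((\omega+2\pi m)x+\varphi)=\cos(\omega x+\varphi)$ valid on integer grids. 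Collecting each $k$-block of DDiF as $C_k\cos(\omega_k x)-S_k\sin(\omega_k x)$ with $C_k=\sum_i A_{k,i}\cos\varphi'_{k,i}$ and $S_k=\sum_i A_{k,i}\sin\varphi'_{k,i}$, the target becomes $(C_{k_u},S_{k_u})=(\gamma_u\cos\tfrac{\pi u}{2N},-\gamma_u\sin\tfrac{\pi u}{2N})$ for each $u\in\mathcal{U}$ and $(C_k,S_k)=0$ for the remaining $k\in\mathcal{K}_\zeta$; I would then tune $b^{(0)},W^{(1)},b^{(1)},W^{(2)},b^{(2)}$ to meet this system. Strictness follows immediately from $(2\zeta+1)^{d}\ge 2B+1>B$: there is always at least one $k^\star\in\mathcal{K}_\zeta$ whose $\omega_{k^\star}$ does not coincide with any FreD frequency even modulo aliasing on $\mathcal{C}_N$, and concentrating amplitude on $k^\star$ produces a function in $\tilde{F}_\psi$ linearly independent of the DCT basis vectors indexed by $\mathcal{U}$, so it cannot equal $g(\cdot;\Gamma)$ for any $\Gamma$.

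The main obstacle I anticipate is the amplitude-matching sub-step, because the amplitudes $A_{k,i}=W^{(2)}_i\prod_j J_{k_j}(W^{(1)}_{ij})$ are nonlinearly entangled across $k$ through the shared columns of $W^{(1)}$, so one cannot trivially prescribe $(C_{k_u},S_{k_u})$ on the selected modes while vanishing on the remaining ones. I would address this by choosing $W^{(1)}$ with a structured form, for instance block-diagonal or near-sparse, that decouples the Bessel products across the $B$ retained modes, reducing the matching problem to a consistent system that is linear in $W^{(2)}$ together with the phase-shift vectors $b^{(0)}$ and $b^{(1)}$. The identities $J_0(0)=1$ and $J_n(0)=0$ for $n\ne 0$ are the natural lever for isolating modes with $k_j=0$ from those with $k_j\ne 0$ along each coordinate axis, and I expect the final construction to exploit this sparsity pattern explicitly to close the argument.
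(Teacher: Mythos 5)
There is a fundamental mismatch here: your proposal does not prove the stated theorem at all, but rather a different result from the paper. The statement in question is the cited expansion theorem of Novello et al.\ (2022): that a two-hidden-layer, width-$d$ sine network $F_\psi:\R\to\R$ can be written as $b^{(2)}+\sum_{k\in\mathbb{Z}^d}\alpha_k\cos(\omega_k x)+\beta_k\sin(\omega_k x)$, with amplitudes built from Bessel factors $A_{k,i}=W^{(2)}_i\prod_{j=1}^{d}J_{k_j}(W^{(1)}_{ij})$. A proof of this must engage with the composition structure of the network: one expands each second-layer neuron $\sin\bigl(\sum_j W^{(1)}_{ij}\sin(W^{(0)}_j x+b^{(0)}_j)+b^{(1)}_i\bigr)$ using the Jacobi--Anger identity $e^{iz\sin\theta}=\sum_{n\in\mathbb{Z}}J_n(z)e^{in\theta}$ applied factor-by-factor, which is exactly what produces the multi-index sum over $k\in\mathbb{Z}^d$, the frequencies $\omega_k=\langle k,W^{(0)}\rangle$, the phases $\varphi_{k,i}=\langle k,b^{(0)}\rangle+b^{(1)}_i$, and the Bessel products; one then collects terms with angle-addition formulas and sums over neurons $i$ with the output weights. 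Nothing of this kind appears in your proposal. (In the paper this theorem is in fact cited rather than re-proven; the paper's own adjacent derivation only converts the sine/cosine sum into a pure cosine sum via a trigonometric identity.)

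What you wrote instead is a proof sketch of the paper's Theorem 3.3 (the feasible-space comparison $g(x;\Gamma)\subsetneq\tilde F_\psi(x)$ between FreD and DDiF): the parameter count $d^2+4d+1$, the width bound $d=\lfloor\sqrt{3+B}-2\rfloor$, the threshold $B\ge 6$, the rearrangement of the $\zeta$ inequality into $(2\zeta+1)^d\ge 2B+1$, the construction matching an arbitrary $(\mathcal{U},\Gamma)$, and the strictness argument via an unmatched frequency. Moreover, your sketch \emph{uses} the conclusion of the statement as an ingredient --- you take the form $A_{k,i}=W^{(2)}_i\prod_j J_{k_j}(W^{(1)}_{ij})$ and the expansion over $k\in\mathcal{K}_\zeta$ as given when setting up your mode-matching system --- so even reinterpreted charitably it is circular as a proof of the present statement. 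To close the gap you would need to discard the FreD comparison entirely and carry out the harmonic expansion of the two-layer sine composition described above.
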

Then, by using trigonometric identity, \cref{eq: novello} is represented by the sum of cosines as follows:
\allowdisplaybreaks
\begin{align*}
    F_{\psi}(x) &= b^{(2)} + \sum_{k\in\mathbb{Z}^{d}} \alpha_{k}\cos(\omega_k x) + \beta_{k}\sin(\omega_k x) \\
    &= b^{(2)} + \sum_{k\in\mathbb{Z}^{d}} \Biggl\{\sum_{i=1}^{d} A_{k,i}\sin(\varphi_{k,i})\cos(\omega_k x)
    + \sum_{i=1}^{d} A_{k,i}\cos(\varphi_{k,i})\sin(\omega_k x) \Biggr\} \\
    &= b^{(2)} + \sum_{k\in\mathbb{Z}^{d}} \sum_{i=1}^{d} A_{k,i}\sin\Bigl(\omega_k x + \varphi_{k,i}\Bigr) \\
    &= b^{(2)} + \sum_{k\in\mathbb{Z}^{d}} \sum_{i=1}^{d} A_{k,i}\cos\Bigl(\omega_k x + \varphi'_{k,i}\Bigr)
    \quad \text{where} \quad \varphi'_{k,i}=\varphi_{k,i}-\frac{\pi}{2}
\end{align*}

\subsection{Proof of Theorem \ref{theory: theorem 1}} \label{appendix: proofs: theorem 1}
\begin{lemma}
    For $d\in\mathbb{Z}^+$, there exists $x\in\R$ such that $\prod_{j=1}^{d}J_{k_{j}}(x)\neq0$ where $k_j\in\mathbb{Z}, j=1,2,...,d$ and $J_{k_{j}}$ denotes the Bessel function of the first kind of order $k_j$.
\label{appendix:lemma1}
\end{lemma}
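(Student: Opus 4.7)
The plan is to reduce this to the well-known fact that each Bessel function $J_{n}$ of integer order is a nontrivial real-analytic function on $\mathbb{R}$, and therefore has only countably many zeros with no finite accumulation point. First I would recall (or cite) that for integer $n$, $J_n$ satisfies Bessel's differential equation and is analytic on all of $\mathbb{R}$, with the series $J_n(x)=\sum_{m\geq 0}\frac{(-1)^m}{m!\,(m+n)!}(x/2)^{2m+n}$ (and the relation $J_{-n}(x)=(-1)^n J_n(x)$ taking care of negative orders), so in particular $J_{k_j}$ is not identically zero.

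Next I would use the identity theorem for real-analytic functions: since each $J_{k_j}$ is analytic and not identically zero, its zero set $Z_j:=\{x\in\mathbb{R}:J_{k_j}(x)=0\}$ is discrete, hence countable. The finite union $Z:=\bigcup_{j=1}^{d}Z_j$ is therefore also countable. Because $\mathbb{R}$ is uncountable, $\mathbb{R}\setminus Z$ is nonempty; any $x$ in this complement avoids every $Z_j$, so $J_{k_j}(x)\neq 0$ for all $j=1,\dots,d$, and hence $\prod_{j=1}^{d}J_{k_j}(x)\neq 0$.

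There is essentially no hard step here; the only point that requires a brief justification is that each $J_{k_j}$ is genuinely nontrivial, which one can verify directly from the power series (the leading term $(x/2)^{|k_j|}/|k_j|!$ is nonzero for $x\neq 0$ small, so $J_{k_j}\not\equiv 0$). An equivalent, slightly more concrete alternative I might use instead of the uncountability argument is to note that the positive zeros of $J_n$ form an increasing sequence $0<j_{n,1}<j_{n,2}<\cdots$ tending to $+\infty$; choosing any sufficiently small $x>0$ puts us below $\min_j j_{k_j,1}$ and hence outside every $Z_j$, which again yields the claim. Either route makes the lemma a short remark, which is appropriate since it is only invoked as a non-degeneracy ingredient inside the proof of Theorem~\ref{theory: theorem 1}.
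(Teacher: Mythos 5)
Your argument is correct and is essentially identical to the paper's own proof: both establish that each zero set $Z(J_{k_j})$ is discrete (hence countable) for integer order, take the finite union, and invoke the uncountability of $\mathbb{R}$ to find a point avoiding all of them. The only difference is that you justify the discreteness via real-analyticity and the power series rather than citing standard references, which is a harmless elaboration.
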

\begin{proof}
    For $j=1,2,...,d$, denote the zero set of $J_{k_{j}}$ on the real line by $Z(J_{k_{j}})\coloneqq \{ x\in\R | J_{k_{j}}(x)=0 \}$. Note that $Z(J_{k_{j}})$ is discrete set, when $k_j$ is integer, so it is countable set \citep{watson1922treatise,abramowitz1948handbook,kerimov2014studies}. Then, $\bigcup_{j=1}^{d}Z(J_{k_{j}})$ is also countable set. Since $\R$ is uncountable set, there exists some $x\in\R$ such that $x\notin\bigcup_{j=1}^{d}Z(J_{k_{j}})$. Equivalently, $J_{k_{j}}(x)\neq0$ for every $j=1,2,...,d$. Hence, for that choice of $x$, $\prod_{j=1}^{d}J_{k_{j}}(x)\neq0$
\end{proof}

\thm*
\begin{proof}
    Since we consider $\tilde{F}_{\psi}:\R\rightarrow\R$, the number of parameters of DDiF is $d^2+4d+1$. For a given budget constraint $B$, $d=\lfloor \sqrt{3+B} - 2 \rfloor$ is the maximum width of $\tilde{F}_{\psi}$ which satisfy $d^2+4d+1\leq B$. Since $B \geq 6$, DDiF is able to construct a valid neural network, i.e., $d \geq 1$.
    To prove $g(x;\Gamma) \subseteq \tilde{F}_{\psi}(x)$, it is sufficient to show that there exist neural network parameters $\psi=\{W^{(j)},b^{(j)}\}_{j=0}^{2}$ which satisfy $g(x;\Gamma)=\tilde{F}_\psi(x)$ for any $\Gamma$ and $x\in\mathcal{C}_{N}=\{0,...,N-1\}$. First, we decompose \cref{eq: DDiF feasible space} into a sum over $\mathcal{K}\subseteq\mathbb{Z}^{d}$ and the other terms: % other terms
    \begin{eqnarray*}
        \tilde{F}_\psi(x) &=& b^{(2)} + \sum_{k\in\mathcal{K}_\zeta} \sum_{i=1}^{d} A_{k,i}\cos{\big(\omega_k x + \varphi_{k,i}\big)} \\
        &=& b^{(2)} + \sum_{k\in\mathcal{K}} \sum_{i=1}^{d} A_{k,i}\cos{\big(\omega_k x + \varphi_{k,i}\big)} + \sum_{k\in\mathcal{K}_\zeta\setminus\mathcal{K}} \sum_{i=1}^{d} A_{k,i}\cos{\big(\omega_k x + \varphi_{k,i}\big)}
    \end{eqnarray*}
    Let $\mathcal{K}=\big\{k\in\mathcal{K}_\zeta | \sum_{i=1}^{d} k_i= u,  u\in\mathcal{U}\big\}$ and $1_N=[1,1,...,1]^{T}\in\R^N$ is the one-vector of size $N$. Set $W^{(0)}=\frac{\pi}{N}1_N$, $b^{(0)}=\frac{\pi}{2N}1_N$, $b^{(1)}=\frac{\pi}{2}1_N$, and $b^{(2)}=-\sum_{k\in\mathbb{Z}^{d}\setminus\mathcal{K}} \sum_{i=1}^{d} A_{k,i}\cos{\big(\omega_k x + \varphi'_{k,i}\big)}$. Note that the absolute value of the Bessel function of the first kind has a finite upper bound $|J_{p}(r)|<\frac{(\frac{|r|}{2})^p}{p!}$ \citep{paris1984inequality} for any $p,r>0$. Also, by Lemma \ref{appendix:lemma1}, $W^{(1)}$ and $W^{(2)}$ can be configured to satisfy $\sum_{i=1}^d A_{k,i} = \sum_{i=1}^d W_i^{(2)}\prod_{j=1}^{d} J_{k_j}(W_{ij}^{(1)})=\gamma_u$ for a fixed $u$. Under these parameters, $\tilde{F}_\psi(x)$ is same as $g(x;\Gamma)$: 
    \begin{eqnarray*}
        \tilde{F}_\psi(x) &=& b^{(2)} + \sum_{k\in\mathcal{K}} \sum_{i=1}^{d} A_{k,i}\cos{\big(\omega_k x + \varphi'_{k,i}\big)} + \sum_{k\in\mathcal{K}_\zeta\setminus\mathcal{K}} \sum_{i=1}^{d} A_{k,i}\cos{\big(\omega_k x + \varphi'_{k,i}\big)} \\
        &=& \sum_{u\in\mathcal{U}} \gamma_{u} \cos{\Big(\frac{\pi u}{N}x + \frac{\pi u}{2N}\Big)} = g(x;\Gamma)
    \end{eqnarray*}
    
    Next, we prove that there is no $\Gamma$ such that $g(x;\Gamma)=\tilde{F}_\psi(x)$ for some $\psi$ and $x\in\mathcal{C}_{N}$. Note that $\tilde{F}_\psi$ can represent up to $\frac{(2\zeta+1)^d-1}{2}$ \citep{novello2022understanding}. By recalling that $d=\lfloor \sqrt{3+B} - 2 \rfloor$, we can show the following equivalence from the inequality assumption of $\zeta$:
    \begin{eqnarray*}
        \zeta \geq \frac{1}{2}\Big(\exp{\big(\frac{\log(2B+1)}{\lfloor \sqrt{3+B} - 2 \rfloor}\big)}-1\Big) &\Leftrightarrow& \log(2\zeta+1)\geq\frac{\log(2B+1)}{\lfloor \sqrt{3+B} - 2 \rfloor} \\
        &\Leftrightarrow& \frac{(2\zeta+1)^{\lfloor \sqrt{3+B} - 2 \rfloor}-1}{2} \geq B 
    \end{eqnarray*}
    It implies that DDiF can cover more frequency than FreD. Consequently, it is possible to choose $k$ and $W^{(0)}$ which satisfy $\omega_k=\langle k, W^{(0)} \rangle \neq \frac{\pi u}{N}$ for any $u\in\mathcal{U}$. Due to the orthogonality of cosine functions having different frequencies, there is no $\Gamma$ that represents $\cos{\big(\omega_k x + \varphi'_{k,i}\big)}$ terms. It implies $g(x;\Gamma)$ cannot express $F_\psi(x)$ with some $\psi$ parameters.
\end{proof}

\section{Experimental Details} \label{appendix: experimental details}
\subsection{Datasets}
\paragraph{Image Domain.} We evaluate DDiF on various benchmark image datasets. 1) ImageNet-Subset \citep{howard2019smaller,cazenavette2022dataset} is a dataset consisting of a subset of similar characteristics in the ImageNet. In the experiment, we consider various types of subsets by following \citet{cazenavette2022dataset}: ImageNette (various objects), ImageWoof (dog breeds), ImageFruit (fruit category), ImageMeow (cats), ImageSquawk (birds), ImageYellow (yellowish objects). Each subset has 10 classes and more than 10,000 instances. We utilize two types of resolution: $128\times128$ and $256\times256$. 2) CIFAR-10 \citep{krizhevsky2009learning} consists of 60,000 RGB images in 10 classes. Each image has a $32\times32$ size. Each class contains 5,000 images for training and 1,000 images for testing. 3) CIFAR-100 \citep{krizhevsky2009learning} consists of 60,000 $32\times32$ RGB images of 100 categories. Each class is split into 500 for training and 100 for testing.

\paragraph{Video Domain.} We utilize MiniUCF \citep{wang2024dancing}, a subset of UCF101 \citep{soomro2012ucf101} which includes 50 classes. The videos are sampled to 16 frames, and the frames are cropped and resized to $112\times112$. Each data has $16\times3\times112\times112$ size.

\paragraph{Audio Domain.} We utilize Mini Speech Commands \citep{kim2022dataset}, a subset of the original Speech Commands dataset \citep{warden2018speech}. We follow the data processing of \citet{kim2022dataset}. The dataset consists of 8 classes, and each class has 875/125 data for training/testing, respectively. Each data is $64\times64$ log-scale magnitude spectrograms by short-time Fourier transform (STFT).
 
\paragraph{3D Domain.} We utilize a core version of ModelNet-10 \citep{wu20153d} and ShapeNet \citep{chang2015shapenet}, which are widely used in 3D. They include 10 classes and 16 classes, respectively. Each 3D point cloud data is converted into $32 \times 32 \times 32$ voxel.

\subsection{Network Architectures.}
\paragraph{ConvNet.} By following previous studies, we leverage the ConvNetD$n$ as a default network architecture for both distillation and evaluation of synthetic datasets. 
The ConvNetD$n$ is a convolutional neural network with $n$ duplicate blocks. Each $n$ blocks consist of a convolution layer with $3\times3$-shape 128 filters, an instance normalization layer, ReLU, and an average pooling with $2\times2$ kernel size with stride 2. Lastly, it contains a linear classifier, which outputs the logits. Depending on the resolution of the real dataset, we utilize different depth $n$. Specifically, ConvNetD3 for $32\times32$ CIFAR-10 and CIFAR-100, ConvNetD4 for $64\times64$ Audio spectrograms, ConvNetD5 for $128\times128$ ImageNet-Subset, ConvNetD6 for $256\times256$ ImageNet-Subset, and ConvNetD7 for $512\times512$ ImageNet-Subset.

\paragraph{AlexNet.} AlexNet is a basic convolutional neural network architecture suggested in \citep{krizhevsky2012imagenet}. It consists of 5 convolution layers, 3 max-pooling layers, 2 Normalized layers, 2 fully connected layers, and 1 SoftMax layer. In each convolution layer, the  ReLU activation function is utilized. We adopt this network to evaluate cross-architecture performance of DDiF.

\paragraph{VGG11.} VGG11 \citep{simonyan2014very} is also applied for evaluation, which attributes to 11 weighted layers. It consists of 8 convolution layers and 3 fully connected layers. Its design is straightforward yet powerful, providing a balance between depth and computational efficiency. The number of trainable parameters is around 132M, making it larger than earlier models but still suitable for medium-scale tasks. We adopt this network to evaluate the cross-architecture performance of DDiF.

\paragraph{ResNet18.} ResNet18 \citep{he2016deep} introduces residual connections, which help mitigate the vanishing gradient problem in deep networks by allowing gradients to bypass certain layers. It consists of 18 layers with 4 residual blocks, each composed of two convolutional layers followed by activation and normalization with around 11M trainable parameters. We utilize ResNet18 as one of the architecture for evaluating synthetic datasets.

\paragraph{ViT.} Vision Transformer \citep{dosovitskiy2020image} utilizes the transformer architecture, initially designed for sequence modeling tasks in NLP. For image classification, it divides images into non-overlapping patches and processes them as a sequence using self-attention mechanisms. ViT has around 10M trainable parameters in its base form and offers a competitive alternative to CNNs, demonstrating the effectiveness of transformers in vision tasks. We selected ViT as the final network to evaluate synthetic image datasets.

\paragraph{Conv3DNet.} For the 3D voxel domain, we utilize Conv3DNet \citep{shin2024frequency}, a 3D version of ConvNet. Conv3DNet consists of three repeated blocks, each containing a $3 \times 3 \times 3$ convolutional layer with 64 filters, 3D instance normalization, ReLU activation, and 3D average pooling with a $2 \times 2 \times 2$ filter, and a stride of 2. Lastly, it contains a linear classifier.

\vspace{-1ex}
\subsection{Baselines.} Since our main focus lies on the parameterization of dataset distillation, we compare DDiF with 1) static decoding, which are IDC \citep{kim2022dataset} and FreD \citep{shin2024frequency}; 2) parameterized decoding, which is RTP \citep{deng2022remember}, HaBa \citep{liu2022dataset}, SPEED \citep{wei2024sparse}, LatentDD \citep{duan2023dataset}, and NSD \citep{yang2024neural}; and 3) deep generative prior, which include GLaD \citep{cazenavette2023generalizing}, H-GLaD \citep{zhong2024hierarchical}, and LD3M \citep{moser2024latent}. We also demonstrate the performance improvement of DDiF compared to input-sized parameterization, denoted as Vanilla.

\vspace{-1ex}
\subsection{Implementation Settings.}
Although any loss can be adapted to DDiF, we utilize TM \citep{cazenavette2022dataset} for $\mathcal{L}$ as a default unless specified. Following previous studies, we use DSA \citep{zhao2021dataset}, which consists of color jittering, cropping, cutout, flipping, scaling, and rotation. We adopt ZCA whitening on CIFAR-10 (IPC=1, 10) and CIFAR-100 (IPC=1) with the Kornia \citep{riba2020kornia} implementation. We adopt SIREN \citep{sitzmann2020implicit} for synthetic field $F_\psi$ as a default. SIREN is a multilayer perceptron with a sinusoidal activation function, and it is widely used in the neural field area due to its simple structure. We use the same width across all layers in a synthetic neural field i.e. $d_l=d$ for all $l$. We utilize normalized coordinates defined on ${[-1,1]}^{n}$ for $n$-dimension data to enhance stability \citep{sitzmann2020implicit}, rather than using integer coordinates, which have a wide range. For cross-resolution experiments, we utilize the coordinate set $C$, which consists of evenly spaced points within the interval $[-1, 1]$ according to the target resolution. We provide the detailed configuration of the synthetic neural field, the resulting size of each neural field, the number of synthetic instances per class, and the total number of neural fields in Table \ref{appendix:tab:configuration}. We use Adam optimizer \citep{kingma2017adammethodstochasticoptimization} for all experiments. We fix the iteration number and learning rate for warm-up initialization of synthetic neural field as 5,000 and 0.0005. Without any description to distillation loss, we generally use matching training trajectory (TM) objective for dataset distillation loss $\mathcal{L}$. Following the previous studies, we utilize two types of default TM hyperparameters same as SPEED \citep{wei2024sparse} and FreD \citep{shin2024frequency}. We run 15,000 iterations for TM and 20,000 iterations for DM. We use a mixture of RTX 3090, L40S, and Tesla A100 to run our experiments. We follow the conventional evaluation procedure of the previous studies: train 5 randomly initialized networks with an optimized synthetic dataset and evaluate the classification performance. We provide the detailed hyperparameters in Table \ref{appendix:tab:hyperparams}.
\begin{table}[h]
    \centering
    \caption{Configuration of the synthetic neural field. In the case of Video, there is no increment of decoded instances because we experimented with the fixed number of decoded instances.}
    \resizebox{\textwidth}{!}{%
    \begin{tabular}{c l c cccccc}
        \toprule
        Modality & Dataset & IPC & $n$ & $L$ & $d$ & $m$ & size$(\psi)$& \makecell{Increment of\\decoded instances} \\
        \midrule
        \multirow{10}{*}{Image}&\multirow{3}{*}{CIFAR10} & 1     & 2      & 2   & 6   & 3      & 81      & $\times$37 \\
        && 10    & 2      & 2   & 6   & 3      & 81      & $\times$37.9\\
        && 50    & 2      & 2   & 20  & 3      & 543     & $\times$5.64 \\ 
        \cmidrule(lr){2-9}
        &\multirow{3}{*}{CIFAR100} & 1  & 2& 2& 10  & 3& 173  & $\times$17 \\
        && 10 & 2& 2& 15  & 3& 333  & $\times$9.2 \\
        && 50 & 2& 2& 30  & 3& 1113 & $\times$2.76\\ 
        \cmidrule(lr){2-9}
        &\multirow{3}{*}{ImageNet-Subset ($128\times128$)}& 1& 2& 3& 20  & 3& 963  & $\times$51 \\
        && 10 & 2& 3& 20  & 3& 963  & $\times$51\\
        && 50 & 2& 3& 40  & 3& 3523 & $\times$13.94 \\ 
        \cmidrule(lr){2-9}
        &ImageNet-Subset ($256\times256$) & 1& 2& 3& 40  & 3& 3523 & $\times$55\\\midrule
        \multirow{2}{*}{Video}&\multirow{2}{*}{MiniUCF} & 1  & 3& 6& 40  & 3& 8483 & $-$\\
        && 5  & 3& 6& 40  & 3& 8483 & $-$\\
        \midrule
        \multirow{2}{*}{Audio}&\multirow{2}{*}{Mini Speech Commands}& 10& 2& 3& 10  & 1& 261  & $\times$15.6 \\ 
        && 20 & 2& 3& 10  & 1& 261  & $\times$15.6  \\ \midrule
        \multirow{2}{*}{3D} & ModelNet & 1  & 3& 3& 20  & 1& 941  & $\times$34\\
        \cmidrule(lr){2-9}
        &ShapeNet & 1  & 3& 3& 20  & 1& 941  & $\times$34\\
        \bottomrule
    \end{tabular}}
    \label{appendix:tab:configuration}
\end{table}
\begin{table}[h]
    \caption{Configuration of hyperparameters for optimization.} \label{appendix:tab:hyperparams}
    \begin{subtable}[t]{0.49\textwidth}
        \centering
        \caption{Gradient matching (DC)}
        \adjustbox{max width=\textwidth}{%
        \begin{tabular}{c c cc}
            \toprule 
            Dataset & IPC & \makecell{Synthetic\\batch size} & \makecell{Learning rate\\\small(Neural field)} \\
            \midrule
            \makecell{ImageNet-Subset\\($128\times128$)} & 1 & - & $5 \times 10^{-5}$ \\
            \midrule
            \multirow{2}{*}{Mini Speech Commands} & 10 & 64 & $10^{-5}$ \\
                                      & 20 & 64 & $10^{-4}$ \\
            \midrule
            ModelNet & 1 & - & $10^{-4}$ \\
            \midrule
            ShapeNet & 1 & - & $10^{-4}$ \\
            \bottomrule
        \end{tabular}}
    \end{subtable}
    \hfill
    \begin{subtable}[t]{0.49\textwidth}
        \centering
        \caption{Distribution matching (DM)}
        \adjustbox{max width=\textwidth}{%
        \begin{tabular}{c c cc}
            \toprule 
            Dataset & IPC & \makecell{Synthetic\\batch size} & \makecell{Learning rate\\\small(Neural field)} \\
            \midrule
            \makecell{ImageNet-Subset\\($128\times128$)} & 1 & - & $5 \times 10^{-5}$ \\
            \midrule
            \makecell{ImageNet-Subset\\($256\times256$)} & 1 & - & $10^{-5}$ \\
            \midrule
            \multirow{2}{*}{MiniUCF} & 1 & - & $10^{-4}$ \\
                                      & 5 & - & $10^{-4}$ \\
            \midrule
            ModelNet & 1 & - & $10^{-4}$ \\
            \midrule
            ShapeNet & 1 & - & $10^{-4}$ \\
            \bottomrule
        \end{tabular}}
    \end{subtable}
    \begin{subtable}{\textwidth}
        \centering
        \caption{Trajectory matching (TM)}
        \adjustbox{max width=\textwidth}{%
        \begin{tabular}{c c ccccccc}
            \toprule
            Dataset & IPC & \makecell{Synthetic\\steps} & \makecell{Expert\\epochs} & \makecell{Max start\\epoch} & \makecell{Synthetic\\batch size} & \makecell{Learning rate\\\small(Neural field)} & \makecell{Learning rate\\\small(Step size)} & \makecell{Learning rate\\\small(Teacher)} \\
            \midrule
            \multirow{3}{*}{CIFAR-10} & 1   & 60 & 2 & 10 & 74 & $10^{-3}$ & $10^{-5}$ & $10^{-2}$  \\
                                      & 10   & 60 & 2 & 10 & 256 & $10^{-3}$ & $10^{-5}$ & $10^{-2}$  \\
                                      & 50   & 60 & 2 & 40 & 235 & $10^{-4}$ & $10^{-5}$ & $10^{-2}$  \\
            \midrule
            \multirow{3}{*}{CIFAR-100} & 1   & 60 & 2 & 40 & 170 & $10^{-3}$ & $10^{-5}$ & $10^{-2}$  \\
                                      & 10   & 60 & 2 & 40 & 230 & $10^{-3}$ & $10^{-5}$ & $10^{-2}$  \\
                                      & 50   & 60 & 2 & 40 & 276 & $10^{-3}$ & $10^{-5}$ & $10^{-2}$  \\
            \midrule
            \multirow{3}{*}{\makecell{ImageNet-Subset\\($128\times128$)}} & 1   & 20 & 2 & 10 & 102 & $10^{-4}$ & $10^{-6}$ & $10^{-2}$  \\
                                                                          & 10   & 40 & 2 & 20 & 30 & $10^{-4}$ & $10^{-5}$ & $10^{-2}$  \\
                                                                          & 50   & 40 & 2 & 20 & 30 & $10^{-4}$ & $10^{-5}$ & $10^{-2}$  \\
            \bottomrule
          \end{tabular}}
    \end{subtable}
\end{table}

\clearpage
\subsection{Algorithm of DDiF} \label{appendix: algorithm}
The main difference between DDiF and conventional parameterization methods is the decoding process for synthetic instances. Basically, the neural field takes coordinates as input and output quantities. To generate a single synthetic instance, each coordinate $c\in\mathcal{C}$ is input into the synthetic neural field $F_{\psi}$, and then the resulting value is assigned to the corresponding coordinate of the decoded synthetic instance $\tilde{x}_{j}^{(c)}$. Algorithms \ref{alg: training procedure} and \ref{alg: decoding process} specify a training procedure and decoding process of DDiF, respectively. In Algorithm \ref{alg: decoding process}, we include a loop over coordinates for clarity. However, it should be noted that in the actual implementation, the coordinate set is input to the neural network in a full-batch manner.
\vspace{-3ex}
\begin{figure*}[h]
    \centering
    \begin{minipage}[t]{0.51\textwidth}
        \begin{algorithm}[H]
            \caption{Training procedure of DDiF}
            \label{alg: training procedure}
            \begin{algorithmic}[1]
                \REQUIRE Original real dataset $\mathcal{T}$; Dataset distillation loss $\mathcal{L}$; Initialized $\Psi=\{\psi_j\}_{j=1}^{|\Psi|}$; Learning rate $\eta$
                \ENSURE Parameterized synthetic dataset $\{(\psi_j,\tilde{y}_j)\}_{j=1}^{|\Psi|}$
                
                \STATE Initialize coordinate set $\mathcal{C}$ from $x\in\mathcal{T}$
                \FOR{$j=1$ to $\vert\Psi\vert$}
                    \STATE Sample a real instance $(x,y)\sim\mathcal{T}$
                    \STATE $\tilde{y}_{j} \leftarrow y$
                    \STATE Optimize $\psi_{j}$ with \cref{eq: neural field objective}
                \ENDFOR
                \REPEAT    
                    \STATE Sample a real mini-batch $\mathcal{B}_{\mathcal{T}}\sim\mathcal{T}$
                    \STATE $\mathcal{B}_{\mathcal{S}} \leftarrow \{F_{\psi}(\mathcal{C})\,|\,\psi\in\Psi_{\mathcal{B}}\}$ from $\Psi_{\mathcal{B}}\sim\Psi$
                    \STATE $\Psi \leftarrow \Psi-\eta\nabla_{\Psi}\mathcal{L}(\mathcal{B}_{\mathcal{T}},\mathcal{B}_{\mathcal{S}})$
                \UNTIL{convergence}
            \end{algorithmic}
        \end{algorithm}
    \end{minipage}
    \hfill
    \begin{minipage}[t]{0.48\textwidth}
        \begin{algorithm}[H]
            \caption{Decoding process of Synthetic instances in DDiF}
            \label{alg: decoding process}
            \begin{algorithmic}[1]
                \REQUIRE Set of parameters of synthetic neural fields $\Psi$; Coordinate set $\mathcal{C}$
                \ENSURE Set of decoded synthetic instances $X_S$
    
                \STATE Initialize $X_S\leftarrow\emptyset$
                \FOR{$j=1$ to $\vert\Psi\vert$}
                    \STATE Initialize $\tilde{x}_{j} \in \R^{m\times N_1 \times \cdots \times N_n}$
                    \FOR{$c \in \mathcal{C}$}
                        \STATE $\tilde{x}_{j}^{(c)}\leftarrow F_{\psi_{j}}(c)$
                    \ENDFOR
                    \STATE $X_S\leftarrow X_S \cup \tilde{x}_j$
                \ENDFOR
            \end{algorithmic}
        \end{algorithm}
    \end{minipage}
\end{figure*}

\section{Additional Experimental Results} \label{appendix: additional results}
\subsection{Performance Comparison on Low-dimensional Datasets} \label{appendix: low dimensional}
To verify the wide applicability of DDiF, we conduct experiments on low-dimensional datasets, such as CIFAR-10 and CIFAR-100. In Table \ref{appendix:tab:low}, DDiF exhibits highly competitive performances with previous studies. These results demonstrate that DDiF is also properly applicable to low-dimensional datasets, while DDiF shows significant performance improvement in high-dimensional datasets.
\begin{table}[h!]
    \caption{Test accuracies (\%) on CIFAR-10 and CIFAR-100. \textbf{Bold} and \underline{Underline} means best and second-best performance of each column, respectively. ``$-$'' indicates no reported results.}
    \centering
    \resizebox{0.9\textwidth}{!}{%
    \begin{tabular}{cl ccc ccc}
        \toprule
        &Dataset & \multicolumn{3}{c}{CIFAR10} & \multicolumn{3}{c}{CIFAR100} \\ 
        \midrule
        &IPC&1&10&50&1&10&50 \\ 
        \midrule
        \multirow{2}{*}{Input-sized}&TM&46.3\scriptsize{$\pm0.8$}&65.3\scriptsize{$\pm0.7$}& 71.6\scriptsize{$\pm0.2$} &24.3\scriptsize{$\pm0.3$}&40.1\scriptsize{$\pm0.4$} & 47.7\scriptsize{$\pm0.2$} \\
        &FRePo&46.8\scriptsize{$\pm0.7$}&65.5\scriptsize{$\pm0.4$}& 71.7\scriptsize{$\pm0.2$} &28.7\scriptsize{$\pm0.1$}&42.5\scriptsize{$\pm0.2$} & 44.3\scriptsize{$\pm0.2$} \\ 
        \midrule
        \multirow{2}{*}{Static}&IDC&50.0\scriptsize{$\pm0.4$}&67.5\scriptsize{$\pm0.5$}& 74.5\scriptsize{$\pm0.2$} &$-$&$-$ & $-$ \\ 
        &FreD&60.6\scriptsize{$\pm0.8$}&70.3\scriptsize{$\pm0.3$}& 75.8\scriptsize{$\pm0.1$} &34.6\scriptsize{$\pm0.4$}&42.7\scriptsize{$\pm0.2$} & 47.8\scriptsize{$\pm0.1$} \\ 
        \midrule
        \multirow{5}{*}{Parameterized}&HaBa&48.3\scriptsize{$\pm0.8$}&69.9\scriptsize{$\pm0.4$}& 74.0\scriptsize{$\pm0.2$} &$-$&$-$& 47.0\scriptsize{$\pm0.2$}\\
        &RTP&66.4\scriptsize{$\pm0.4$}&71.2\scriptsize{$\pm0.4$}& 73.6\scriptsize{$\pm0.5$} & 34.0\scriptsize{$\pm0.4$}&42.9\scriptsize{$\pm0.7$} & $-$ \\
        &HMN&65.7\scriptsize{$\pm0.3$}&\underline{73.7}\scriptsize{$\pm0.1$}& 76.9\scriptsize{$\pm0.2$} &36.3\scriptsize{$\pm0.2$}&45.4\scriptsize{$\pm0.2$}& 48.5\scriptsize{$\pm0.2$} \\
        &SPEED&63.2\scriptsize{$\pm0.1$}&73.5\scriptsize{$\pm0.2$}& \textbf{77.7}\scriptsize{$\pm0.4$} & \underline{40.4}\scriptsize{$\pm0.4$}&45.9\scriptsize{$\pm0.3$}& \underline{49.1}\scriptsize{$\pm0.2$} \\        
        &NSD&\textbf{68.5}\scriptsize{$\pm0.8$}&73.4\scriptsize{$\pm0.2$}& 75.2\scriptsize{$\pm0.6$} & 36.5\scriptsize{$\pm0.3$}&\textbf{46.1}\scriptsize{$\pm0.2$}& $-$ \\ 
        \midrule
        Function & \gc DDiF & \gc \underline{66.5}\scriptsize{$\pm0.4$}& \gc \textbf{74.0}\scriptsize{$\pm0.4$} & \gc \underline{77.5}\scriptsize{$\pm0.3$} & \gc \textbf{42.1}\scriptsize{$\pm0.2$} & \gc \underline{46.0}\scriptsize{$\pm0.2$} & \gc \textbf{49.9}\scriptsize{$\pm0.2$} \\
        \bottomrule
    \end{tabular}}
    \label{appendix:tab:low}
\end{table}

\subsection{Additional Performance Comparison on High-dimensional Datasets}
\paragraph{Comparison under Large Budget (IPC=50).} In general, high-dimensional dataset distillation under a large budget is rarely addressed in previous studies due to their significant computational cost. To demonstrate the efficacy of DDiF even in a large budget setting, we conducted experiments on ImageNette ($128\times128$) under IPC=50. DDiF with TM achieves $75.2\%\small{\pm}1.3\%$ while vanilla with TM achieves $72.8\%\small{\pm}0.8\%$. It means that DDiF effectively improves the dataset distillation performance even with a larger storage budget for high resolution.

\vspace{-1ex}
\paragraph{Fixed DIPC under 256 resolution.} We further compare with input-sized parameterization on ImageNet-Subset ($256\times256$) under the fixed number of decoded synthetic instances. Under the IPC=1 setting, DDiF can decode 55 synthetic instances per class on 256 resolution. Similar to the main paper, we consider two settings: 1) varying the number of decoded synthetic instances from the baselines, and 2) varying the number of decoded synthetic instances from DDiF. Table \ref{appendix:tab:fixed number} presents mixed performances on ImageNet-Subset ($256\times256$). However, we emphasize that DDiF utilizes a much smaller budget. Moreover, it is worth noting that DDiF achieves higher performance than Vanilla in some cases. We believe that it is related to previous findings that input-sized parameterization includes superfluous or irrelevant information \citep{lei2023comprehensive,yu2023dataset,sachdeva2023data}. These experimental results support that DDiF involves sufficient representational power while using a much smaller budget compared to the input-sized parameterization.
\begin{table}[h]
    \vspace{-1ex}
    \centering
    \caption{Test accuracies (\%) on ImageNet-Subset ($256\times256$) with input-sized parameterization (Vanilla) and DDiF. ``DIPC'' denotes the number of decoded instances per class.}
    \vspace{-1ex}
    \label{appendix:tab:fixed number}
    \resizebox{\textwidth}{!}{
    \begin{tabular}{c c c cccccc}
        \toprule
        DIPC & Methods & Utilized Budget & Nette & Woof & Fruit & Yellow & Meow & Squawk \\ 
        \midrule
        \multirow{2}{*}{1} & Vanilla & 1,966.08k & \textbf{32.1} & 20.0  & 19.5 & 33.4 & \textbf{21.2} & 27.6 \\ 
                           & \gc DDiF & \gc 79.53k & \gc 31.2\scriptsize{$\pm$}0.8 & \gc \textbf{21.2}\scriptsize{$\pm$}0.9 & \gc \textbf{21.3}\scriptsize{$\pm$}1.5 & \gc \textbf{34.8}\scriptsize{$\pm$}1.0 & \gc 20.0\scriptsize{$\pm$}0.9 & \gc \textbf{27.9}\scriptsize{$\pm$}2.5 \\
        \midrule
        \multirow{2}{*}{55} & Vanilla & 55 $\times$ 1,966.08k & \textbf{70.1}\scriptsize{$\pm$}1.0 & 37.5\scriptsize{$\pm$}1.2 & 41.3\scriptsize{$\pm$}0.8 & \textbf{64.2}\scriptsize{$\pm$}1.6 & \textbf{47.1}\scriptsize{$\pm$}0.5 & 64.3\scriptsize{$\pm$}1.2 \\ 
                           & \gc DDiF & \gc 1,966.08k & \gc 67.8\scriptsize{$\pm$}1.0 & \gc \textbf{39.6}\scriptsize{$\pm$}1.6 & \gc \textbf{43.2}\scriptsize{$\pm$}1.7 & \gc 63.1\scriptsize{$\pm$}0.8 & \gc 44.8\scriptsize{$\pm$}1.1 & \gc \textbf{67.0}\scriptsize{$\pm$}0.9 \\ 
         \bottomrule
    \end{tabular}}
    \vspace{-2ex}
\end{table}

\subsection{Compatibility with Soft Label} \label{appendix: soft label}
\begin{wraptable}{r}{0.5\textwidth}
    \vspace{-3ex}
    \centering
    \caption{Test accuracies (\%) obtained by varying the synthetic label under IPC=50 and ConvNet.}
    \resizebox{0.5\textwidth}{!}{%
    \begin{tabular}{cccc}
        \toprule
        & Method & CIFAR-100 & \makecell{ImageNette \\ ($128\times128$)} \\
        \midrule
        \multirow{3}{*}{Fixed one-hot label} & RDED & 33.5\scriptsize{$\pm$}0.2 & 57.8\scriptsize{$\pm$}1.2 \\
        & DATM & \textbf{50.8} & $-$ \\
        & \gc DDiF & \gc 49.9\scriptsize{$\pm$}0.2 & \gc \textbf{75.2}\scriptsize{$\pm$}1.3 \\
        \midrule
        \multirow{3}{*}{Soft label} & RDED & 57.0\scriptsize{$\pm$}0.1 & 83.8\scriptsize{$\pm$}0.2 \\
        & DATM & 55.0\scriptsize{$\pm$}0.2 & $-$ \\
        & \gc DDiF & \gc \textbf{58.2}\scriptsize{$\pm$}0.2 & \gc \textbf{86.9}\scriptsize{$\pm$}1.0 \\
        \bottomrule
    \end{tabular}}
    \vspace{-3ex}
    \label{appendix: tab: soft label}
\end{wraptable}
Recently, various previous literatures have demonstrated that using soft labels instead of one-hot labels leads to significant performance improvements in dataset distillation \citep{cui2023scaling,qin2024label}. As mentioned in the main paper, DDiF can also utilize a soft label technique, but we employ a fixed one-hot label for a fair comparison with the previous parameterization methods. Therefore, we conduct additional experiments to validate the compatibility of DDiF with soft label. We consider RDED \citep{sun2024diversity} and DATM \citep{guo2023towards} as our baselines since they employ the soft label and exhibit state-of-the-art performances on CIFAR-10, CIFAR-100, and ImageNette ($128\times128$). To apply soft labels to DDiF, we utilize the region-level soft label technique from RDED on the decoded synthetic instances, which are already obtained through training with TM. Table \ref{appendix: tab: soft label} presents the results as follows:
\vspace{-1ex}
\begin{itemize}
    \item RDED and DATM using soft label exhibit significant performance improvements compared to their performance without soft labels, consistently demonstrating the previous findings.
    \item Applying RDED's soft label technique to DDiF results in notable performance improvement, consistent with previous studies.
    \item Under the soft label setting, DDiF achieves higher performance with a significant gap than RDED and DATM, establishing a new state-of-the-art (SOTA).
\end{itemize}
\vspace{-1ex}
In summary, we prioritize experiments under the fixed one-hot label setting to ensure a fair comparison with existing parameterization methods. However, our experiments confirm that DDiF can effectively utilize soft label, achieving substantial performance improvements when applied.

\subsection{Additional results on Cross-resolution Generalization} \label{appendix: additional results: cross resolution}
\begin{wrapfigure}{r}{0.55\textwidth}
    \centering
    \begin{subfigure}[t]{0.43\linewidth}
        \centering
        \includegraphics[width=\textwidth]{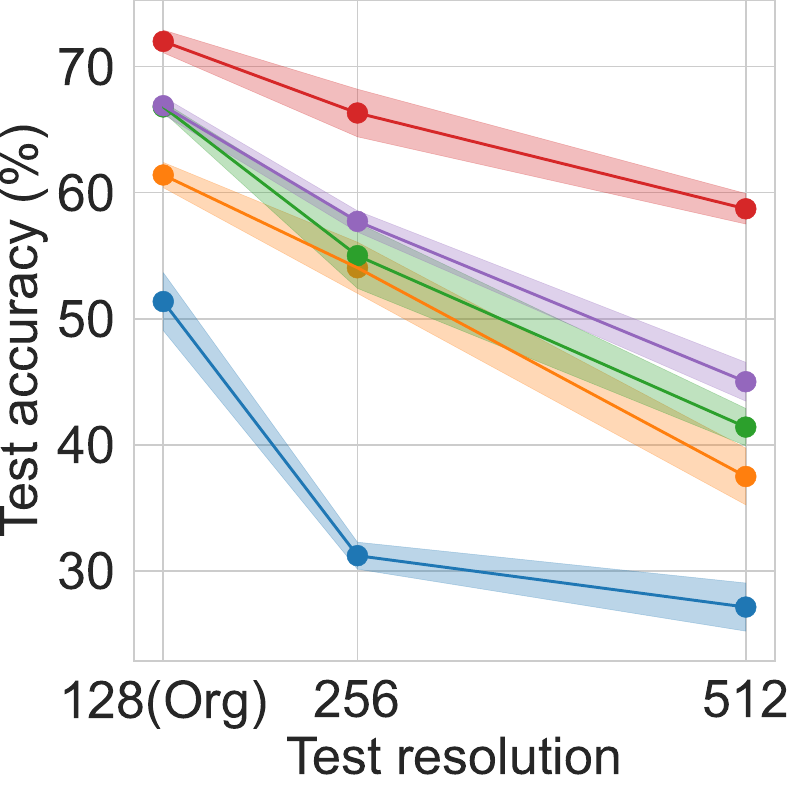}
        \caption{Accuracy}
        \label{appendix:fig:cross_resolution1}
    \end{subfigure}
    \begin{subfigure}[t]{0.43\linewidth}
        \centering
        \includegraphics[width=\textwidth]{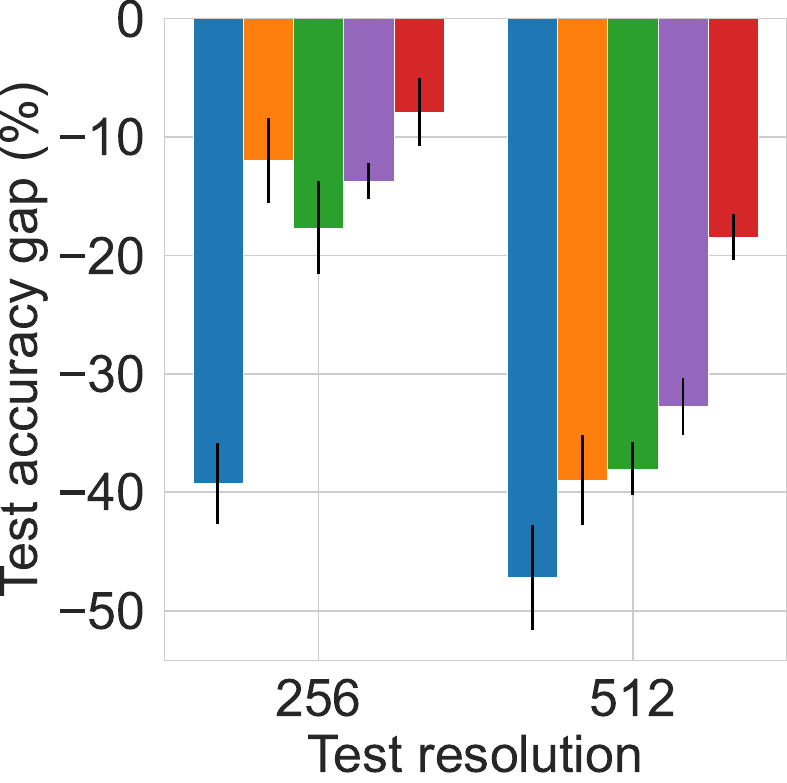}
        \caption{Accuracy degradation}
        \label{appendix:fig:cross_resolution2}
    \end{subfigure}
    \begin{subfigure}[t]{0.1\linewidth}
        \vspace{-13.1ex}
        \centering
        \includegraphics[width=\textwidth]{figures/experiments/cross_resolution_legend.pdf}
    \end{subfigure}
    \vspace{-1ex}
    \caption{(a) Test accuracies (\%) with different image resolutions. The original resolution is $128\times128$. (b) Test accuracy gap (\%) from original resolution.  We use bilinear interpolation for previous studies. We utilize ConvNetD5, which is the same architecture in the distillation stage.}
    \label{appendix:fig:cross_resolution}
\vspace{-4ex}
\end{wrapfigure}
\paragraph{Detailed experimental results.} We apply the spatial-domain upsampling methods, such as nearest, bilinear, and bicubic, for the optimized synthetic instances of previous parameterization methods. Particularly, FreD can also utilize frequency-domain upsampling since it stores masked frequency coefficients. The most widely used for frequency-domain upsampling is zero-padding the frequency coefficients before inverse frequency transform, which means assigning zeros to the high-frequency components \citep{dugad2001fast}. For example, in the case of DCT, which is the default setting of FreD, the process of upsampling an $N$-resolution frequency coefficient $F$ to an $M (>N)$-resolution image can be described as $\text{IDCT} \left(\begin{bmatrix} \lambda \times F & 0_{M-N} \\ 0_{M-N} & 0_{M-N} \end{bmatrix}\right)$ where $\lambda =\left(\frac{M}{N}\right)^2$ is the scaling factor. We refer to this frequency-domain upsampling as ``zero-padding''. For DDiF, we constructed a coordinate set suitable for the test resolution, and then input it into the optimized synthetic neural field to generate the upsampled synthetic instance. We refer to this method as ``coordinate interpolation''.

Table \ref{appendix:tab:cross_resolution} presents the detailed experimental results. We observe that the previous parameterizations show drastic performance degradation regardless of the interpolation method used. Whereas, DDiF still achieves the highest cross-resolution generalization performance. Also, Figure \ref{appendix:fig:cross_resolution} presents the cross-resolution generalization performance under the same network architecture in the distillation stage. As seen in Figure \ref{fig:cross_resolution}, DDiF achieves the best performance and shows the least performance degradation over all resolutions. These extensive results consistently demonstrate that DDiF is robust to resolution change, and this robustness is largely driven by the continuous nature of the synthetic neural field.
\begin{table}[h!]
\caption{Test accuracies (\%) with different resolutions and networks. The original resolution is $128\times128$. We denote the difference between ordinary and cross-resolution performance as $\text{Diff(\%)}=ACC_{org}-ACC_{test}$ and relative ratio as $\text{Ratio}=\frac{ACC_{org}-ACC_{test}}{ACC_{org}}$.}
    \centering
    \resizebox{0.85\textwidth}{!}{%
    \begin{tabular}{ccl cc cc}
    \toprule
    Test resolution&Test network&Method&Upsampling&Accuracy ($\uparrow$)&Diff ($\downarrow$)&Ratio ($\downarrow$) \\ \midrule
    \multirow{26}{*}{256}&\multirow{13}{*}{ConvNetD5}&\multirow{3}{*}{Vanilla}&nearest&29.5\scriptsize{$\pm1.6$}& 21.9 & 0.43 \\
    &&&bilinear&31.2\scriptsize{$\pm1.1$}& 20.2 & 0.39 \\
    &&&bicubic&30.7\scriptsize{$\pm2.0$}& 20.7 & 0.40 \\ \cmidrule(lr){3-7}
    &&\multirow{3}{*}{IDC}&nearest&54.8\scriptsize{$\pm1.2$}&6.6&0.11 \\
    &&&bilinear&54.0\scriptsize{$\pm2.0$}&7.4&0.12\\
    &&&bicubic&55.0\scriptsize{$\pm1.6$}&\underline{6.4}&\underline{0.10}\\ \cmidrule(lr){3-7}
    &&\multirow{3}{*}{SPEED}&nearest&\underline{58.8}\scriptsize{$\pm1.4$}&8.1&0.12 \\
    &&&bilinear&57.7\scriptsize{$\pm0.8$}&9.2&0.14\\
    &&&bicubic&58.1\scriptsize{$\pm1.0$}&8.8&0.13\\ \cmidrule(lr){3-7}
    &&\multirow{4}{*}{FreD}&nearest&55.2\scriptsize{$\pm2.2$}&11.6&0.17 \\
    &&&bilinear&55.0\scriptsize{$\pm2.6$}&11.8&0.16\\
    &&&bicubic&56.4\scriptsize{$\pm1.4$}&10.4&0.16\\
    &&&zero-padding&53.8\scriptsize{$\pm1.4$}&13.0&0.19\\ \cmidrule(lr){3-7}
    && \gc DDiF&\gc coord. interpolation&\gc \textbf{66.3}\scriptsize{$\pm1.9$}&\gc \textbf{5.7}&\gc \textbf{0.08}\\
    \cmidrule(lr){2-7}
    
    &\multirow{13}{*}{ConvNetD6}&\multirow{3}{*}{Vanilla}&nearest&44.0\scriptsize{$\pm1.7$}& 7.3 & 0.14 \\
    &&&bilinear&43.2\scriptsize{$\pm1.1$}& 8.2 & 0.16 \\
    &&&bicubic&43.9\scriptsize{$\pm1.8$}& 7.4 & 0.14 \\ \cmidrule(lr){3-7}
    &&\multirow{3}{*}{IDC}&nearest&54.7\scriptsize{$\pm1.6$}&6.7&0.11 \\
    &&&bilinear&54.5\scriptsize{$\pm1.6$}&6.9&0.11\\
    &&&bicubic&55.4\scriptsize{$\pm1.3$}&6.0&0.10\\ \cmidrule(lr){3-7}
    &&\multirow{3}{*}{SPEED}&nearest&62.0\scriptsize{$\pm1.0$}&4.9&0.07 \\
    &&&bilinear&61.8\scriptsize{$\pm1.8$}&5.1&0.08\\
    &&&bicubic&\underline{62.6}\scriptsize{$\pm1.1$}&\underline{4.3}&\underline{0.06}\\ \cmidrule(lr){3-7}
    &&\multirow{4}{*}{FreD}&nearest&60.9\scriptsize{$\pm0.8$}&5.9&0.09 \\
    &&&bilinear&60.1\scriptsize{$\pm0.3$}&6.7&0.10\\
    &&&bicubic&61.4\scriptsize{$\pm0.8$}&5.8&0.09\\
    &&&zero-padding&61.8\scriptsize{$\pm1.0$}&5.0&0.07\\ \cmidrule(lr){3-7}
    &&\gc DDiF&\gc coord. interpolation&\gc \textbf{70.6}\scriptsize{$\pm1.2$}&\gc \textbf{1.4}&\gc \textbf{0.02}\\
    \cmidrule(lr){2-7}
    
    \multirow{26}{*}{512}&\multirow{13}{*}{ConvNetD5}&\multirow{3}{*}{Vanilla}&nearest&27.4\scriptsize{$\pm1.4$}& 24.0 & 0.47 \\
    &&&bilinear&27.1\scriptsize{$\pm1.9$}& 24.2 & 0.47 \\
    &&&bicubic&27.1\scriptsize{$\pm1.0$}& 24.3 & 0.47 \\ \cmidrule(lr){3-7}
    &&\multirow{3}{*}{IDC}&nearest&38.6\scriptsize{$\pm2.7$}&22.8&0.37 \\
    &&&bilinear&37.5\scriptsize{$\pm2.3$}&23.9&0.39\\
    &&&bicubic&39.5\scriptsize{$\pm2.1$}&\underline{21.9}&0.36\\ \cmidrule(lr){3-7}
    &&\multirow{3}{*}{SPEED}&nearest&43.3\scriptsize{$\pm2.2$}&23.6&0.35 \\
    &&&bilinear&\underline{45.0}\scriptsize{$\pm1.5$}&\underline{21.9}&\underline{0.33}\\
    &&&bicubic&44.8\scriptsize{$\pm3.0$}&22.1&\underline{0.33}\\ \cmidrule(lr){3-7}
    &&\multirow{4}{*}{FreD}&nearest&42.5\scriptsize{$\pm2.5$}&24.3&0.36 \\
    &&&bilinear&41.4\scriptsize{$\pm1.5$}&25.4&0.38\\
    &&&bicubic&41.6\scriptsize{$\pm1.3$}&25.2&0.38\\
    &&&zero-padding&42.9\scriptsize{$\pm1.5$}&23.9&0.36\\ \cmidrule(lr){3-7}
    &&\gc DDiF&\gc coord. interpolation&\gc \textbf{58.7}\scriptsize{$\pm1.2$}&\gc \textbf{13.3}&\gc \textbf{0.18}\\
    \cmidrule(lr){2-7}
    
    &\multirow{13}{*}{ConvNetD7}&\multirow{3}{*}{Vanilla}&nearest&41.2\scriptsize{$\pm1.5$}& 10.1 & 0.20 \\
    &&&bilinear&40.6\scriptsize{$\pm2.6$}& 10.7 & 0.21\\
    &&&bicubic&40.4\scriptsize{$\pm1.9$}& 10.9 & 0.21 \\ \cmidrule(lr){3-7}
    &&\multirow{3}{*}{IDC}&nearest&51.5\scriptsize{$\pm1.8$}&9.9&0.16 \\
    &&&bilinear&50.7\scriptsize{$\pm2.1$}&10.7&0.17\\
    &&&bicubic&51.2\scriptsize{$\pm2.8$}&10.7&0.17\\ \cmidrule(lr){3-7}
    &&\multirow{3}{*}{SPEED}&nearest&59.6\scriptsize{$\pm2.0$}&7.3&0.11 \\
    &&&bilinear&59.5\scriptsize{$\pm2.0$}&7.4&0.11\\
    &&&bicubic&\underline{60.1}\scriptsize{$\pm1.7$}&\underline{6.8}&\underline{0.10}\\ \cmidrule(lr){3-7}
    &&\multirow{4}{*}{FreD}&nearest&55.1\scriptsize{$\pm1.2$}&11.7&0.18 \\
    &&&bilinear&53.8\scriptsize{$\pm1.0$}&13.0&0.19\\
    &&&bicubic&54.4\scriptsize{$\pm0.9$}&12.4&0.19\\
    &&&zero-padding&56.3\scriptsize{$\pm0.8$}&10.5&0.16\\ \cmidrule(lr){3-7}
    &&\gc DDiF&\gc coord. interpolation&\gc \textbf{69.0}\scriptsize{$\pm1.0$}&\gc \textbf{3.0}&\gc \textbf{0.04}\\
    \bottomrule
    \end{tabular}}
    \label{appendix:tab:cross_resolution}
\end{table}

\vspace{-3ex}
\paragraph{Comparison with Full dataset downsampling.}
The most intuitive and straightforward way to reduce the budget of a dataset is by downsampling, which reduces the budget size of each instance. It means that it is possible to downsample the full dataset to a specific resolution for storage and then upsample it to the test resolution. One may doubt that this simple method can show high cross-resolution generalization performance. To verify the superiority of DDiF on cross-resolution generalization, we additionally conduct performance comparison with full dataset downsampling.

Table \ref{appendix:tab:downsampling} shows the experiment results where the full dataset was downsampled and then upsampled to the original resolution during the test phase. When the budget is similar (when the downsampled resolution is $4\times 4$), DDiF outperforms the full dataset downsampling method. We also experimented when the downsampled resolution was the same as the resolution of the decoded synthetic instance. In this case, the downsampled dataset achieved better performance, as expected. However, this setting requires 1,289 times more budget than DDiF since the number of instances is not reduced. Such a setup deviates from the core purpose of dataset distillation, which aims to optimize performance under strict budget constraints.

Furthermore, we emphasize that the full dataset downsampling has two limitations on cross-resolution generalization. First, it requires a slightly different assumption. Cross-resolution generalization experiment, which we proposed, is modeled to evaluate the dataset distillation ability to generalize to higher resolution settings. It involves training on a low-resolution ($128\times128$) synthetic dataset and testing on high-resolution ($256\times256$) data. However, full dataset downsampling diverges from this cross-resolution setting since it assumes the availability of a high-resolution dataset ($256\times256$). Second, the training time during the evaluation stage increases since the number of data points remains the same as the full dataset. While the memory budget may be comparable, this increased time cost is undesirable, especially in scenarios where efficiency is critical.
\begin{table}[h]
    \caption{Performance comparison when the test resolution is $256\times256$. We utilize bicubic interpolation for full dataset resizing. The relative budget ratio indicates the ratio of full dataset downsampling over DDiF.}
    \vspace{-1ex}
    \centering
    \resizebox{\textwidth}{!}{%
    \begin{tabular}{c ccccc}
        \toprule
        Test network & Method & Original resolution & Downsampled resolution & Relative budget ratio & Accuracy  \\ 
        \midrule
        & &$256\times256$ & 4$\times$4 & 1.3 & 45.2\scriptsize{$\pm$} 2.3 \\
        & \multirow{-2}{*}{Downsample} & $256\times256$ & $128\times128$ & 1,289.4 & 91.3\scriptsize{$\pm$} 0.5 \\\cmidrule(lr){2-6}
        \multirow{-3}{*}{ConvNetD5} & \gc DDiF & \gc $128\times128$ & \gc $-$ & \gc 1.0 & \gc 66.3\scriptsize{$\pm$} 1.9 \\ 
        \midrule
        &  & $256\times256$ & 4$\times$4 & 1.3 & 44.7\scriptsize{$\pm$} 0.4 \\
        & \multirow{-2}{*}{Downsample} & $256\times256$ & $128\times128$ & 1,289.4 & 91.2\scriptsize{$\pm$} 0.0 \\\cmidrule(lr){2-6}
        \multirow{-3}{*}{ConvNetD6} & \gc DDiF & \gc $128\times128$ & \gc $-$ & \gc 1.0 & \gc 70.6\scriptsize{$\pm$} 1.2 \\
        \bottomrule
    \end{tabular}}
    \vspace{-3.5ex}
    \label{appendix:tab:downsampling}
\end{table}

\subsection{Robustness to Corruption}
\vspace{-1ex}
We further investigate the robustness against corruption of DDiF in the trained synthetic datasets by evaluating on ImageNet-Subset-C. This subset is designed specifically to assess robustness across varying corruption types and severity levels. We report the average test accuracies over 15 corruption types, each evaluated across 5 levels of severity, for each class in ImageNet-Subset-C. Table \ref{appendix:tab:imagenet128_1} summarizes the performance of DDiF and baseline models. The results in Table \ref{appendix:tab:imagenet128_1} clearly demonstrate that DDiF has the same substantial robustness to resolution change and corruption.
\begin{table}[h!]
    \vspace{-1ex}
    \centering
    \caption{Test accuracies (\%) on ImageNet-Subset-C under IPC=1. Accuracy on ImageSquawk-C of TM is not reported in previous works.}
    \vspace{-1ex}
    \label{appendix:tab:imagenet128_1}
    \adjustbox{max width=\textwidth}{%
    \begin{tabular}{lcccccc}
    \toprule
    Method & ImageNette-C & ImageWoof-C & ImageFruit-C & ImageYellow-C & ImageMeow-C & ImageSquawk-C \\
    \midrule
    TM& 38.0\scriptsize{$\pm1.6$} & 23.8\scriptsize{$\pm1.0$} & 22.7\scriptsize{$\pm1.1$} & 35.6\scriptsize{$\pm1.7$} & 23.2\scriptsize{$\pm1.1$} & - \\
     IDC & 34.5\scriptsize{$\pm0.6$} & 18.7\scriptsize{$\pm0.4$} & 28.5\scriptsize{$\pm0.9$} & 36.8\scriptsize{$\pm1.4$} & 22.2\scriptsize{$\pm1.2$} & 26.8\scriptsize{$\pm0.5$}\\
     FreD & \underline{51.2}\scriptsize{$\pm0.6$} & \underline{31.0}\scriptsize{$\pm0.9$} & \underline{32.3}\scriptsize{$\pm1.4$} & \textbf{48.2}\scriptsize{$\pm1.0$} & \textbf{30.3}\scriptsize{$\pm0.3$} & \underline{45.9}\scriptsize{$\pm0.6$} \\
     \gc DDiF & \gc \textbf{54.5}\scriptsize{$\pm0.6$} & \gc \textbf{34.0}\scriptsize{$\pm0.4$} & \gc \textbf{36.6}\scriptsize{$\pm0.4$} & \gc \underline{47.2}\scriptsize{$\pm0.7$} & \gc \textbf{30.3}\scriptsize{$\pm0.8$} & \gc \textbf{53.8}\scriptsize{$\pm0.5$} \\
    \bottomrule
  \end{tabular}}
  \vspace{-4ex}
\end{table}

\subsection{Time Complexity}
\begin{wraptable}{r}{0.35\textwidth}
    \vspace{-2.5ex}
    \centering
    \caption{Wall-clock time (ms) of the decoding process for a single synthetic instance. ``ms'' indicates the millisecond.}
    \vspace{-1ex}
    \resizebox{0.35\textwidth}{!}{%
    \begin{tabular}{c cc}
        \toprule
         & $128\times128$ & $256\times256$\\ 
         \midrule
         Vanilla & 0.31& 0.31 \\
         IDC&0.40&2.83\\
         FreD&0.46&1.20\\ 
         \midrule
         HaBa&2.81&$-$\\
         SPEED&2.20&$-$\\
         \midrule
         GLaD&31.33&$-$\\
         \midrule
         \gc DDiF&\gc 2.49&\gc 3.25\\
         \bottomrule
    \end{tabular}}
    \label{appendix:tab:time}
    \vspace{-4ex}
\end{wraptable}
\vspace{-1ex}
As mentioned earlier, the neural field takes coordinates as input and produces quantities as output. This distinct characteristic of the neural field offers the advantage of being resolution-invariant but may raise concerns regarding the decoding process time. We admit that the decoding time of DDiF indeed increases as resolution grows due to the need to forward a larger number of coordinates through the neural field. To address this concern, we conducted an experiment measuring the wall-clock time for decoding a single instance.

Table \ref{appendix:tab:time} shows the results with 128 image resolution demonstrate that while the time cost of DDiF is larger than methods relying on non-parameterized decoding functions, such as IDC and FreD, it remains comparable to methods that use parameterized decoding functions, such as HaBa and SPEED, and exhibits a lower time cost than methods that utilize pre-trained generative models, such as GLaD. 
As the resolution increases to 256, the decoding time of DDiF also increases and it is slightly larger than non-parameterized decoding functions.
In conclusion, although the decoding process time of DDiF increases as the resolution increases, it does not differ significantly from that of conventional parameterization methods. We attribute this to 1) the use of a small neural network structure for the synthetic neural field and 2) the full-batch forwarding of the coordinate set in the implementation.

\subsection{Full table with standard deviation and Additional Visualization} \label{appendix: additional results: full table}
\vspace{-1ex}
For improved layout, we have positioned full tables with standard deviation and additional example figures at the end of the paper. Please refer to Table \ref{appendix tab:imagenet128_1} for ImageNet-Subset ($128\times128$) under IPC=1; Table \ref{appendix tab:imagenet256_1} for ImageNet-Subset ($256\times256$) under IPC=1; Table \ref{appendix tab:imagenet128_10} for ImageNet-Subset under ($128\times128$) IPC=10; Table \ref{appendix tab:cross architecture} for cross-architecture; and Table \ref{appendix tab:compatibility} for robustness to the loss. In addition, please refer to Figures \ref{appendix:fig:vis_nette} to \ref{appendix:fig:vis_squawk} for Image domain; Figure \ref{appendix:fig:vis_modelnet} for 3D domain; and Figure \ref{appendix:fig:vis_video} for video domain.

\section{Discussions}
\subsection{More comparison with FreD} \label{appendix:compare FreD}
As seen in \cref{eq: DDiF feasible space,eq: FreD feasible space}, the functions represented by DDiF and FreD are similar, both being the sum of cosine functions. 
Therefore, we can perform a term-by-term comparison of both equations.
\begin{itemize}
    \item DDiF enables to change the amplitudes $A_{k,i}$, frequencies $\omega_k$, phases $\varphi'_{k,i}$, and shift $b^{(0)}$, while FreD only allows the amplitudes. It indicates that DDiF has higher representation ability than FreD.
    \item Although FreD is a finite sum of cosine functions with a fixed frequency, DDiF represents an infinite sum of cosine functions with tunable frequency. It means that DDiF can cover a wide range of frequencies from low to high by selecting various $k$. According to the empirical findings in \citet{wang2020high}, it has been demonstrated that datasets with a larger number of frequencies exhibit improved generalization performance.     
    \item DDiF is a continuous function, whereas FreD is a discrete function. Due to this characteristic, FreD cannot encode information for coordinates that were not provided during the distillation stage. In contrast, since DDiF operates over a continuous domain, it inherently stores information for coordinates that were not supplied during the distillation stage.
\end{itemize}
In summary, DDiF has a more flexible and expressive function than FreD.

\subsection{Discussion about Theoretical Analysis}
We provide the theoretical analysis to propose a framework for comparing parameterization methods, which have traditionally been evaluated solely based on performance, through expressiveness, i.e. the size of the feasible space. Proposition \ref{theory: proposition 1} indicates that a larger feasible space for decoded synthetic instances through parameterization methods results in lower dataset distillation loss. Theorem \ref{theory: theorem 1} claims that DDiF has greater expressiveness than prior work (FreD) when the utilized parameters for a single synthetic instance are the same. We believe that the proposed theoretical analysis framework will serve as a cornerstone for future theoretical comparisons of parameterization methods in dataset distillation areas.

However, this theoretical analysis still has room for further improvement. In the theoretical analysis, we compare the expressiveness under the fixed number of decoded instances and/or the same utilized parameters for a single synthetic instance, not the fixed entire storage budget. We empirically demonstrate the superiority of DDiF in each of the aforementioned cases: 1) the fixed number of decoded instances (see Table \ref{tab: fixed_DIPC}), 2) the same utilized parameters for a single synthetic instance (see Figure \ref{fig: theorem 1 evidences}), and 3) the fixed entire storage budget (see Table \ref{tab: imagenet128_256_ipc1} and others). Even though the proposed theoretical analysis in this paper is experimentally verified through extensive results, this framework has the limitation of not primarily focusing on the fixed entire storage budget scenario, which is the most basic setting in dataset distillation. 

We believe that constructing a theoretical framework to compare the expressiveness of parameterization methods under a fixed entire storage budget is necessary, and the proposed theoretical analysis in this paper can serve as a foundational background for such efforts.

\subsection{Comparison with DiM}
As mentioned in Section \ref{sec: methodology}, the synthetic function has several possible forms. DiM \citep{wang2023dim} employs a probability density function as a synthetic function and utilizes a deep generative model to parameterize it. Specifically, the decoded synthetic instance of DiM is the sampled output of a deep generative model by inputting random noise:
\begin{eqnarray*}
    \min_{\phi} \mathcal{L}(\mathcal{T},\mathcal{S}) \,\,\, \text{where} \,\,\, \mathcal{S}=g_\phi(\mathcal{Z}), \,\,\, \mathcal{Z}\sim\mathcal{N}(0,I)
    \label{appendix:eq:DiM comp}
\end{eqnarray*}
DDiF and DiM have in common that they store the distilled information in the synthetic function and only store the parameters of the function without any additional codes.
However, there are several structural differences.
\begin{itemize}
    \item The output of DiM still depends on the data dimension. As aforementioned, this type of decoding function requires a more complicated structure and storage budget as the data dimension grows larger. Actually, DiM has not been extensively tested on high-resolution datasets. On the contrary, DDiF stores information regardless of data dimension, which indicates broader applicability across various resolutions.
    \item The decoding process of DiM is stochastic. Due to the stochasticity, DiM can sample the diverse decoded synthetic instances and save the redeployment cost. However, at the same time, DiM carries the risk of generating less informative synthetic instances. Consequently, it leads to instability in training on downstream tasks. Furthermore, DiM might suffer from redundant sampling due to mode collapse, a well-known issue of the generative model. Meanwhile, since the decoding process of DDiF is deterministic, DDiF has an advantage in stability.
\end{itemize}

\subsection{Limitation} \label{appendix:sec:limitation}
\paragraph{Less efficiency on Low-dimensional datasets.}
One of the main ideas in parameterization is expanding the trade-off curve between quantity and quality: reducing the utilized budget of each synthetic instance, while maximally preserving the expressiveness of it. While DDiF has an extensive feature coverage theoretically and shows competitive performances with previous studies experimentally, it might be less efficient for some low-dimensional datasets due to the structural features of the neural field. This is because, given the low-dimensional instance, it can be difficult to design a neural field that is sufficiently expressive with fewer parameters. In spite of this issue, we repeatedly highlight that DDiF has a significant performance improvement on high-dimensional datasets. Furthermore, we speculate that a deeper analysis of the neural field structure could be an interesting direction for future research in dataset distillation.

\paragraph{Individual Parameterization.}
Several studies have claimed that storing intra- and inter-class information in a shared component is effective. From this perspective, our proposed method, which has a one-to-one correspondence between synthetic instances and synthetic neural fields, does not have a component to store shared information. We believe that it can be extended by adding modulation or conditional code, and this paper may serve as a good starting point.

\newpage
\begin{table}[h!]
    \centering
    \caption{Test accuracies (\%) on ImageNet-Subset ($128 \times 128$) with regard to various parameterization methods under IPC=1.}
    \label{appendix tab:imagenet128_1}
    \resizebox{\textwidth}{!}{%
    \begin{tabular}{cl cccccc}
        \toprule
         &Subset& Nette & Woof & Fruit & Yellow & Meow & Squawk \\ 
         \midrule
        \multirow{2}{*}{Input-sized}& TM & 51.4\scriptsize{$\pm2.3$} & 29.7\scriptsize{$\pm0.9$} & 28.8\scriptsize{$\pm1.2$} & 47.5\scriptsize{$\pm1.5$} & 33.3\scriptsize{$\pm0.7$} & 41.0\scriptsize{$\pm1.5$} \\
        & FRePo & 48.1\scriptsize{$\pm0.7$} & 29.7\scriptsize{$\pm0.6$} & - & - & - & - \\ 
        \midrule
        
        \multirow{2}{*}{Static}&IDC & 61.4\scriptsize{$\pm1.0$} & 34.5\scriptsize{$\pm1.1$} & 38.0\scriptsize{$\pm1.1$} & 56.5\scriptsize{$\pm1.8$} & 39.5\scriptsize{$\pm1.5$} & 50.2\scriptsize{$\pm1.5$}\\
        &FreD & 66.8\scriptsize{$\pm0.4$} & 38.3\scriptsize{$\pm1.5$} & 43.7\scriptsize{$\pm1.6$} & 63.2\scriptsize{$\pm1.0$} & 43.2\scriptsize{$\pm0.8$} & 57.0\scriptsize{$\pm0.8$} \\
        \midrule
        
        \multirow{3}{*}{Parameterized}&HaBa & 51.9\scriptsize{$\pm1.7$} & 32.4\scriptsize{$\pm0.7$} & 34.7\scriptsize{$\pm1.1$} & 50.4\scriptsize{$\pm1.6$} & 36.9\scriptsize{$\pm0.9$} & 41.9\scriptsize{$\pm1.4$}\\
        &SPEED & 66.9\scriptsize{$\pm0.7$} & 38.0\scriptsize{$\pm0.9$} & 43.4\scriptsize{$\pm0.6$} & 62.6\scriptsize{$\pm1.3$} & 43.6\scriptsize{$\pm0.7$} & 60.9\scriptsize{$\pm1.0$} \\
        &TM+RTP & \underline{69.6}\scriptsize{$\pm0.4$} & \underline{38.8}\scriptsize{$\pm1.1$} & \underline{45.2}\scriptsize{$\pm1.7$} & \underline{66.4}\scriptsize{$\pm0.5$} & \underline{46.5}\scriptsize{$\pm1.8$} & \underline{63.2}\scriptsize{$\pm1.0$} \\
        &NSD & 68.6\scriptsize{$\pm0.8$} & 35.2\scriptsize{$\pm0.4$} & 39.8\scriptsize{$\pm0.2$} & 61.0\scriptsize{$\pm0.5$} & 45.2\scriptsize{$\pm0.1$} & 52.9\scriptsize{$\pm0.7$} \\ \cmidrule(lr){1-2}\cmidrule(lr){3-8}
        \multirow{2}{*}{DGM Prior} &GLaD  & 38.7\scriptsize{$\pm1.6$} & 23.4\scriptsize{$\pm1.1$} & 23.1\scriptsize{$\pm0.4$} & $-$ & 26.0\scriptsize{$\pm1.1$} & 35.8\scriptsize{$\pm1.4$} \\
        &H-GLaD  & 45.4\scriptsize{$\pm1.1$} & 28.3\scriptsize{$\pm0.2$} & 25.6\scriptsize{$\pm0.7$} & $-$ & 29.6\scriptsize{$\pm1.0$} & 39.7\scriptsize{$\pm0.8$} \\ 
        \midrule
        Function& \gc DDiF & \gc \textbf{72.0}\scriptsize{$\pm0.9$} & \gc \textbf{42.9}\scriptsize{$\pm0.7$} & \gc \textbf{48.2}\scriptsize{$\pm1.2$} & \gc \textbf{69.0}\scriptsize{$\pm0.8$} & \gc \textbf{47.4}\scriptsize{$\pm1.3$} & \gc \textbf{67.0}\scriptsize{$\pm1.3$} \\
        \bottomrule
    \end{tabular}}
\end{table}
\hspace{1cm}

\begin{table}[h!]
    \centering
    \caption{Test accuracies (\%) on ImageNet-Subset ($256 \times 256$) with regard to various parameterizaiton methods under IPC=1.}
    \label{appendix tab:imagenet256_1}
    \resizebox{\textwidth}{!}{%
    \begin{tabular}{cl cccccc}
        \toprule
         &Subset& Nette & Woof & Fruit & Yellow & Meow & Squawk \\ 
         \midrule
        Input-sized& DM & 32.1 & 20.0 & 19.5 & 33.4 & 21.2 & 27.6 \\
        \midrule
        \multirow{2}{*}{Static}&IDC & 53.7\scriptsize{$\pm1.2$} & 30.2\scriptsize{$\pm1.5$} & \underline{33.1}\scriptsize{$\pm1.5$} & \underline{52.2}\scriptsize{$\pm1.4$} & 34.6\scriptsize{$\pm1.8$} & 47.0\scriptsize{$\pm1.5$}\\
        &FreD & 54.2\scriptsize{$\pm1.1$} & \underline{31.2}\scriptsize{$\pm0.9$} & 32.5\scriptsize{$\pm1.9$} & 49.1\scriptsize{$\pm0.4$} & 34.0\scriptsize{$\pm1.2$} & 43.1\scriptsize{$\pm1.5$} \\
        \midrule
        Parameterized&SPEED& \underline{57.7}\scriptsize{$\pm0.9$} & $-$ &$-$&$-$&$-$&$-$ \\
        \midrule
        DGM Prior &LatentDM& 56.1 & 28.0 & 30.7 & $-$ & \underline{36.3} & \underline{47.1} \\
        \midrule
        Function& \gc DDiF & \gc \textbf{67.8}\scriptsize{$\pm1.0$} & \gc \textbf{39.6}\scriptsize{$\pm1.6$} & \gc \textbf{43.2}\scriptsize{$\pm1.7$} & \gc \textbf{63.1}\scriptsize{$\pm0.8$} & \gc \textbf{44.8}\scriptsize{$\pm1.1$} & \gc \textbf{67.0}\scriptsize{$\pm0.9$} \\
        \bottomrule
    \end{tabular}}
\end{table}
\hspace{1cm}

\begin{table}[h!]
    \centering
    \caption{Test accuracies (\%) on ImageNet-Subset ($128 \times 128$) with regard to various parameterization methods under IPC=10.}
    \label{appendix tab:imagenet128_10}
    \resizebox{\textwidth}{!}{%
    \begin{tabular}{cl cccccc}
        \toprule
         &Subset& Nette & Woof & Fruit & Yellow & Meow & Squawk \\ 
         \midrule
        \multirow{2}{*}{Input-sized}& TM & 63.0\scriptsize{$\pm1.3$}& 35.8\scriptsize{$\pm1.8$} & 40.3\scriptsize{$\pm1.3$} & 60.0\scriptsize{$\pm1.5$} & 40.4\scriptsize{$\pm2.2$} & 52.3\scriptsize{$\pm1.0$} \\ 
        & FRePo & 66.5\scriptsize{$\pm0.8$}& 42.2\scriptsize{$\pm0.9$} &$-$ &$-$&$-$&$-$  \\
        \midrule
        \multirow{2}{*}{Static}&IDC & 70.8\scriptsize{$\pm0.5$}& 39.8\scriptsize{$\pm0.9$} & 46.4\scriptsize{$\pm1.4$} & 68.7\scriptsize{$\pm0.8$} & 47.9\scriptsize{$\pm1.4$} & 65.4\scriptsize{$\pm1.2$}\\
        &FreD & 72.0\scriptsize{$\pm0.8$}& 41.3\scriptsize{$\pm1.2$} & 47.0\scriptsize{$\pm1.1$} & 69.2\scriptsize{$\pm0.6$} & 48.6\scriptsize{$\pm0.4$} & 67.3\scriptsize{$\pm0.8$} \\
        \midrule
        \multirow{2}{*}{Parameterized}&HaBa & 64.7\scriptsize{$\pm1.6$}& 38.6\scriptsize{$\pm1.3$} & 42.5\scriptsize{$\pm1.6$} & 63.0\scriptsize{$\pm1.6$} & 42.9\scriptsize{$\pm0.9$} & 56.8\scriptsize{$\pm1.0$}\\
        &SPEED & \underline{72.9}\scriptsize{$\pm1.5$}& \underline{44.1}\scriptsize{$\pm1.4$} & \textbf{50.0}\scriptsize{$\pm0.8$} & \textbf{70.5}\scriptsize{$\pm1.5$} & \textbf{52.0}\scriptsize{$\pm1.3$} & \underline{71.8}\scriptsize{$\pm1.3$} \\
        \midrule
        Function& \gc DDiF & \gc \textbf{74.6}\scriptsize{$\pm0.7$}& \gc \textbf{44.9}\scriptsize{$\pm0.5$} & \gc \underline{49.8}\scriptsize{$\pm0.8$} & \gc \textbf{70.5}\scriptsize{$\pm1.8$} & \gc \underline{50.6}\scriptsize{$\pm1.1$} & \gc \textbf{72.3}\scriptsize{$\pm1.3$} \\
        \bottomrule
    \end{tabular}}
\end{table}
\hspace{1cm}

\begin{table}[h!]
    \caption{Test accuracies (\%) on cross architecture networks with ImageNet Subsets ($128 \times 128$) under IPC=1}
    \label{appendix tab:cross architecture}
    \begin{tabular}{cl cccccc}
        \toprule
        Test Net&Method&Nette&Woof&Fruit&Yellow&Meow&Squawk\\
        \midrule
        \multirow{4}{*}{AlexNet}&TM&13.2\scriptsize{$\pm0.6$}&10.0\scriptsize{$\pm0.0$}&10.0\scriptsize{$\pm0.0$}&11.0\scriptsize{$\pm0.2$}&9.8\scriptsize{$\pm0.0$}&$-$\\
        &IDC&17.4\scriptsize{$\pm0.9$}&16.5\scriptsize{$\pm0.7$}&\underline{17.9}\scriptsize{$\pm0.7$}&\underline{20.6}\scriptsize{$\pm0.9$}&\underline{16.8}\scriptsize{$\pm0.5$} & 20.7\scriptsize{$\pm1.0$}\\
        &FreD&\underline{35.7}\scriptsize{$\pm0.4$}&\underline{23.9}\scriptsize{$\pm0.7$}&15.8\scriptsize{$\pm0.7$}&19.8\scriptsize{$\pm1.2$}&14.4\scriptsize{$\pm0.5$} & \underline{36.3}\scriptsize{$\pm0.3$}\\
        &\gc DDiF&\gc \textbf{60.7}\scriptsize{$\pm2.3$}&\gc \textbf{36.4}\scriptsize{$\pm2.3$}&\gc \textbf{41.8}\scriptsize{$\pm0.6$}&\gc \textbf{56.2}\scriptsize{$\pm0.8$}&\gc \textbf{40.3}\scriptsize{$\pm1.9$} & \gc \textbf{60.5}\scriptsize{$\pm0.4$}\\
        \midrule
        \multirow{4}{*}{VGG11}&TM&17.4\scriptsize{$\pm2.1$}&12.6\scriptsize{$\pm1.8$}&11.8\scriptsize{$\pm1.0$}&16.9\scriptsize{$\pm1.1$}&\underline{13.8}\scriptsize{$\pm1.3$}&$-$\\
        &IDC&19.6\scriptsize{$\pm1.5$}&16.0\scriptsize{$\pm2.1$}&\underline{13.8}\scriptsize{$\pm1.3$}&16.8\scriptsize{$\pm3.5$}&13.1\scriptsize{$\pm2.0$}&\underline{19.1}\scriptsize{$\pm1.2$}\\
        &FreD&\underline{21.8}\scriptsize{$\pm2.9$}&\underline{17.1}\scriptsize{$\pm1.7$}&12.6\scriptsize{$\pm2.6$}&\underline{18.2}\scriptsize{$\pm1.1$}&13.2\scriptsize{$\pm1.9$}&18.6\scriptsize{$\pm2.3$}\\
        &\gc DDiF&\gc \textbf{53.6}\scriptsize{$\pm1.5$}&\gc \textbf{29.9}\scriptsize{$\pm1.9$}&\gc \textbf{33.8}\scriptsize{$\pm1.9$}&\gc \textbf{44.2}\scriptsize{$\pm1.7$}&\gc \textbf{32.0}\scriptsize{$\pm1.8$}&\gc \textbf{37.9}\scriptsize{$\pm1.5$}\\
        \midrule
        \multirow{4}{*}{ResNet18}
        &TM&34.9\scriptsize{$\pm2.3$}&20.7\scriptsize{$\pm1.0$}&23.1\scriptsize{$\pm1.5$}&43.4\scriptsize{$\pm1.1$}&22.8\scriptsize{$\pm2.2$}&$-$\\
        &IDC&43.6\scriptsize{$\pm1.3$}&23.2\scriptsize{$\pm0.8$}&32.9\scriptsize{$\pm2.8$}&44.2\scriptsize{$\pm3.5$}&28.2\scriptsize{$\pm0.5$}&47.8\scriptsize{$\pm1.9$}\\
        &FreD&\underline{48.8}\scriptsize{$\pm1.8$}&\underline{28.4}\scriptsize{$\pm0.6$}&\underline{34.0}\scriptsize{$\pm1.9$}&\underline{49.3}\scriptsize{$\pm1.1$}&\underline{29.0}\scriptsize{$\pm1.8$}&\underline{50.2}\scriptsize{$\pm0.8$}\\
        &\gc DDiF&\gc \textbf{63.8}\scriptsize{$\pm1.8$}&\gc \textbf{37.5}\scriptsize{$\pm1.9$}&\gc \textbf{42.0}\scriptsize{$\pm1.9$}&\gc \textbf{55.9}\scriptsize{$\pm1.0$}&\gc \textbf{35.8}\scriptsize{$\pm1.8$}&\gc \textbf{62.6}\scriptsize{$\pm1.5$} \\
        \midrule
        \multirow{4}{*}{ViT}&TM&22.6\scriptsize{$\pm1.1$}&15.9\scriptsize{$\pm0.4$}&23.3\scriptsize{$\pm0.4$}&18.1\scriptsize{$\pm1.3$}&18.6\scriptsize{$\pm0.9$}&$-$\\
        &IDC&31.0\scriptsize{$\pm0.6$}&22.4\scriptsize{$\pm0.8$}&31.1\scriptsize{$\pm0.8$}&30.3\scriptsize{$\pm0.6$}&\underline{21.4}\scriptsize{$\pm0.7$}&32.2\scriptsize{$\pm1.2$}\\
        &FreD&\underline{38.4}\scriptsize{$\pm0.7$}&\underline{25.4}\scriptsize{$\pm1.7$}&\underline{31.9}\scriptsize{$\pm1.4$}&\underline{37.6}\scriptsize{$\pm2.0$}&19.7\scriptsize{$\pm0.8$}&\underline{44.4}\scriptsize{$\pm1.0$}\\
        &\gc DDiF&\gc \textbf{59.0}\scriptsize{$\pm0.4$}&\gc \textbf{32.8}\scriptsize{$\pm0.8$}&\gc \textbf{39.4}\scriptsize{$\pm0.8$}&\gc \textbf{47.9}\scriptsize{$\pm0.6$}&\gc \textbf{27.0}\scriptsize{$\pm0.6$}&\gc \textbf{54.8}\scriptsize{$\pm1.1$}\\
        \bottomrule
    \end{tabular}
\end{table}

\begin{table}[h!]
    \centering
    \caption{Compatibility on different dataset distillation loss with ImageNet Subsets ($128 \times 128$) under IPC=1}
    \label{appendix tab:compatibility}
    \begin{tabular}{cl ccccc}
        \toprule
        $\mathcal{L}$&Method&Nette&Woof&Fruit&Meow&Squawk\\
        \midrule
        \multirow{6}{*}{DC}&Vanilla & 34.2\scriptsize{$\pm1.7$} & 22.5\scriptsize{$\pm1.0$} & 21.0\scriptsize{$\pm0.9$} & 22.0\scriptsize{$\pm0.6$} & 32.0\scriptsize{$\pm1.5$}\\
        &IDC&45.4\scriptsize{$\pm0.7$}&25.5\scriptsize{$\pm0.7$}&26.8\scriptsize{$\pm0.4$}&25.3\scriptsize{$\pm0.6$}&34.6\scriptsize{$\pm0.5$}\\
        &FreD&\underline{49.1}\scriptsize{$\pm0.8$}&\underline{26.1}\scriptsize{$\pm1.1$}&\underline{30.0}\scriptsize{$\pm0.7$}&\underline{28.7}\scriptsize{$\pm1.0$}&\underline{39.7}\scriptsize{$\pm0.7$}\\
        &GLaD&35.4\scriptsize{$\pm1.2$}&22.3\scriptsize{$\pm1.1$}&20.7\scriptsize{$\pm1.1$}&22.6\scriptsize{$\pm0.8$}&33.8\scriptsize{$\pm0.9$}\\
        &H-GLaD&36.9\scriptsize{$\pm0.8$}&24.0\scriptsize{$\pm0.8$}&22.4\scriptsize{$\pm1.1$}&24.1\scriptsize{$\pm0.9$}&35.3\scriptsize{$\pm1.0$}\\
        &\gc DDiF&\gc \textbf{61.2}\scriptsize{$\pm1.0$}&\gc \textbf{35.2}\scriptsize{$\pm1.7$}&\gc \textbf{37.8}\scriptsize{$\pm1.1$}&\gc \textbf{39.1}\scriptsize{$\pm1.3$}&\gc \textbf{54.3}\scriptsize{$\pm1.0$}\\
        \midrule
        \multirow{6}{*}{DM}&Vanilla&30.4\scriptsize{$\pm2.7$}&20.7\scriptsize{$\pm1.0$}&20.4\scriptsize{$\pm1.9$}&20.1\scriptsize{$\pm1.2$}&26.6\scriptsize{$\pm2.6$}\\
        &IDC&48.3\scriptsize{$\pm1.3$}&27.0\scriptsize{$\pm1.0$}&29.9\scriptsize{$\pm0.7$}&30.5\scriptsize{$\pm1.0$}&38.8\scriptsize{$\pm1.4$}\\
        &FreD&\underline{56.2}\scriptsize{$\pm1.0$}&\underline{31.0}\scriptsize{$\pm1.2$}&\underline{33.4}\scriptsize{$\pm0.5$}&\underline{33.3}\scriptsize{$\pm0.6$}&\underline{42.7}\scriptsize{$\pm0.8$}\\
        &GLaD&32.2\scriptsize{$\pm1.7$}&21.2\scriptsize{$\pm1.5$}&21.8\scriptsize{$\pm1.8$}&22.3\scriptsize{$\pm1.6$}&27.6\scriptsize{$\pm1.9$}\\
        &H-GLaD&34.8\scriptsize{$\pm1.0$}&23.9\scriptsize{$\pm1.9$}&24.4\scriptsize{$\pm2.1$}&24.2\scriptsize{$\pm1.1$}&29.5\scriptsize{$\pm1.5$}\\
        &\gc DDiF&\gc \textbf{69.2}\scriptsize{$\pm1.0$}&\gc \textbf{42.0}\scriptsize{$\pm0.4$}&\gc \textbf{45.3}\scriptsize{$\pm1.8$}&\gc \textbf{45.8}\scriptsize{$\pm1.1$}&\gc \textbf{64.6}\scriptsize{$\pm1.1$}\\
        \bottomrule
    \end{tabular}
\end{table}

\newpage
\begin{figure}
\centering
\begin{subfigure}[t]{0.49\linewidth}
\includegraphics[width=\textwidth]{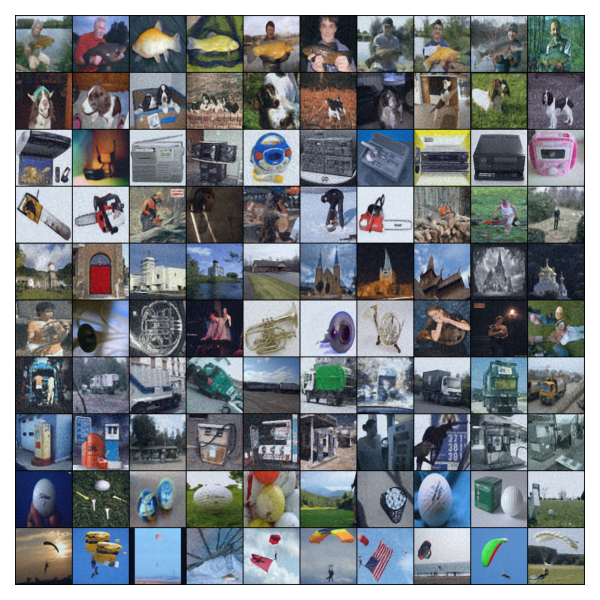}
\caption{Initialization}
\end{subfigure}
\begin{subfigure}[t]{0.49\linewidth}
\includegraphics[width=\textwidth]{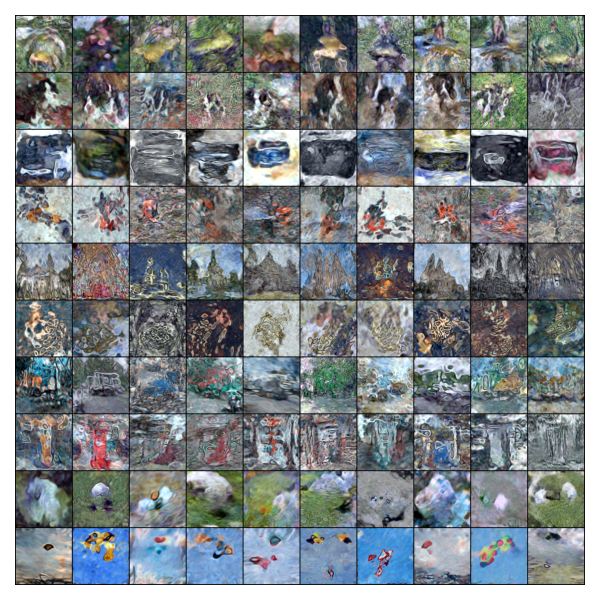}
\caption{Synthesized}
\end{subfigure}
\caption{(a) Warm-up initialized images on ImageNette with DDiF, (b) Best-performed synthetic dataset represented by DDiF. We visualize the first 10 images, while DDiF constructs 51 images per class under the same budget.}
\label{appendix:fig:vis_nette}
\end{figure}

\begin{figure}
\centering
\begin{subfigure}[t]{0.49\linewidth}
\includegraphics[width=\textwidth]{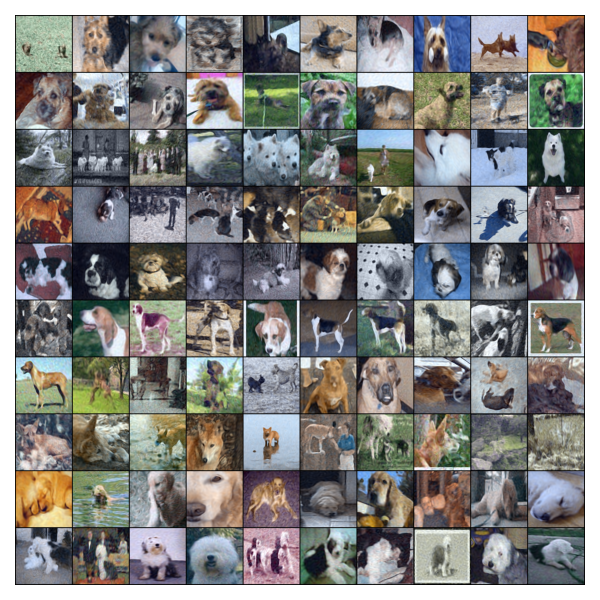}
\caption{Initialization}
\end{subfigure}
\begin{subfigure}[t]{0.49\linewidth}
\includegraphics[width=\textwidth]{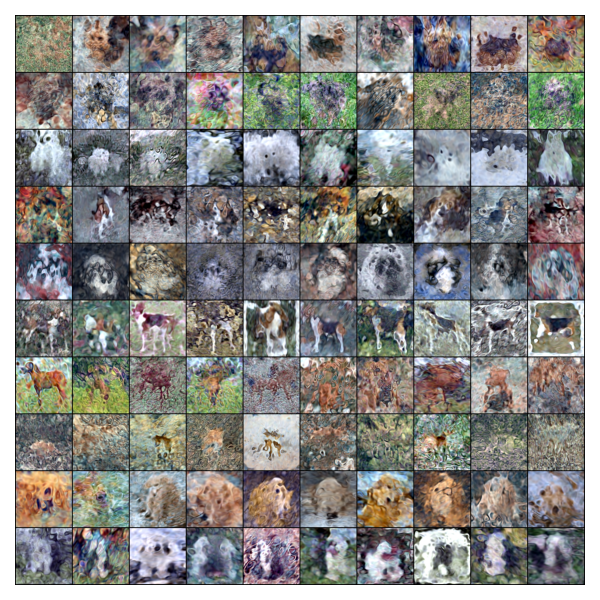}
\caption{Synthesized}
\end{subfigure}
\caption{(a) Warm-up initialized images on ImageWoof with DDiF, (b) Best-performed synthetic dataset represented by DDiF. We visualize the first 10 images, while DDiF constructs 51 images per class under the same budget.}
\label{appendix:fig:vis_woof}
\end{figure}

\begin{figure}
\centering
\begin{subfigure}[t]{0.49\linewidth}
\includegraphics[width=\textwidth]{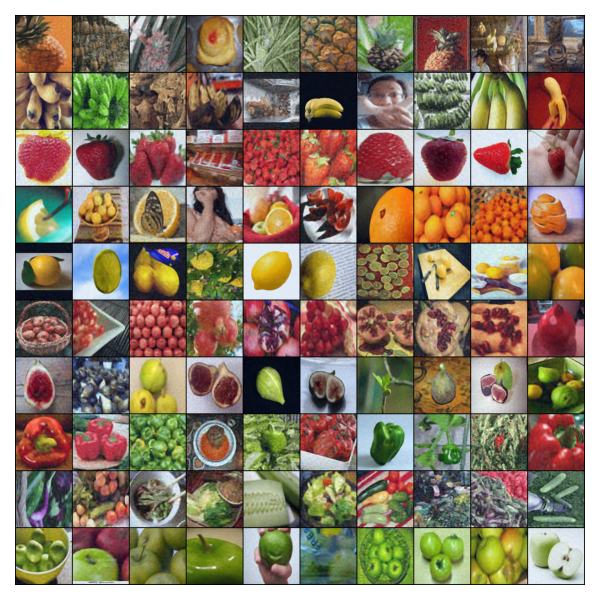}
\caption{Initialization}
\end{subfigure}
\begin{subfigure}[t]{0.49\linewidth}
\includegraphics[width=\textwidth]{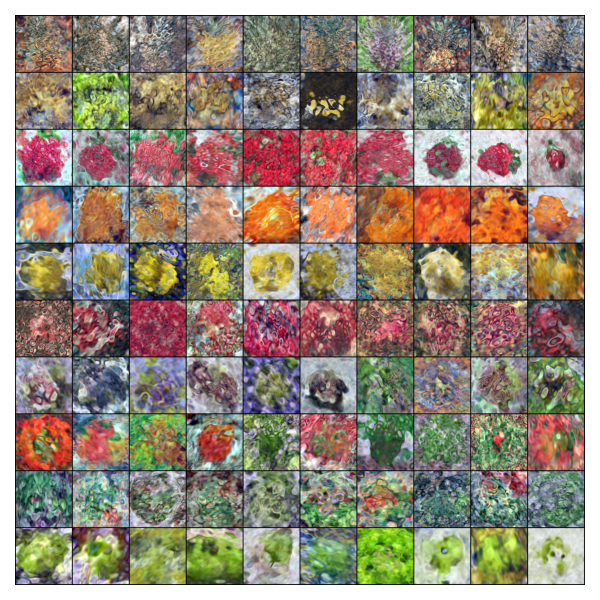}
\caption{Synthesized}
\end{subfigure}
\caption{(a) Warm-up initialized images on ImageFruit with DDiF, (b) Best-performed synthetic dataset represented by DDiF. We visualize the first 10 images, while DDiF constructs 51 images per class under the same budget.}
\label{appendix:fig:vis_fruit}
\end{figure}

\begin{figure}
\centering
\begin{subfigure}[t]{0.49\linewidth}
\includegraphics[width=\textwidth]{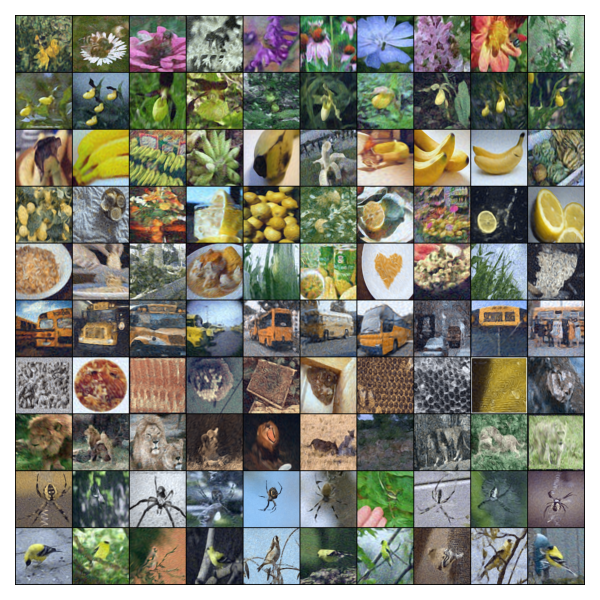}
\caption{Initialization}
\end{subfigure}
\begin{subfigure}[t]{0.49\linewidth}
\includegraphics[width=\textwidth]{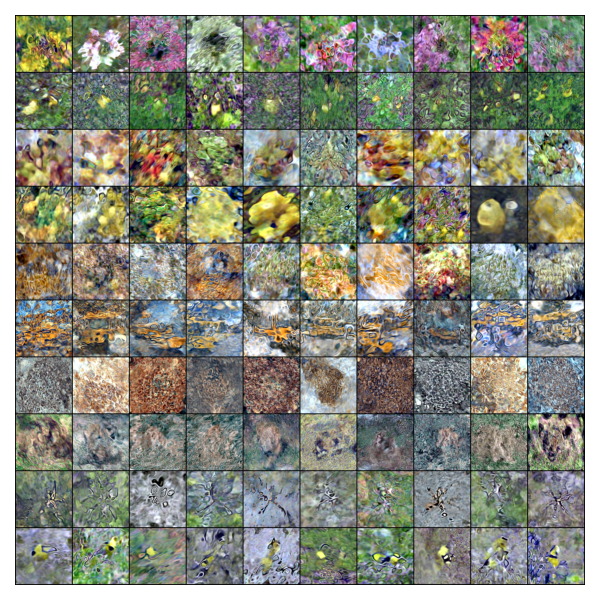}
\caption{Synthesized}
\end{subfigure}
\caption{(a) Warm-up initialized images on ImageYellow with DDiF, (b) Best-performed synthetic dataset represented by DDiF. We visualize the first 10 images, while DDiF constructs 51 images per class under the same budget.}
\label{appendix:fig:vis_yellow}
\end{figure}

\begin{figure}
\centering
\begin{subfigure}[t]{0.49\linewidth}
\includegraphics[width=\textwidth]{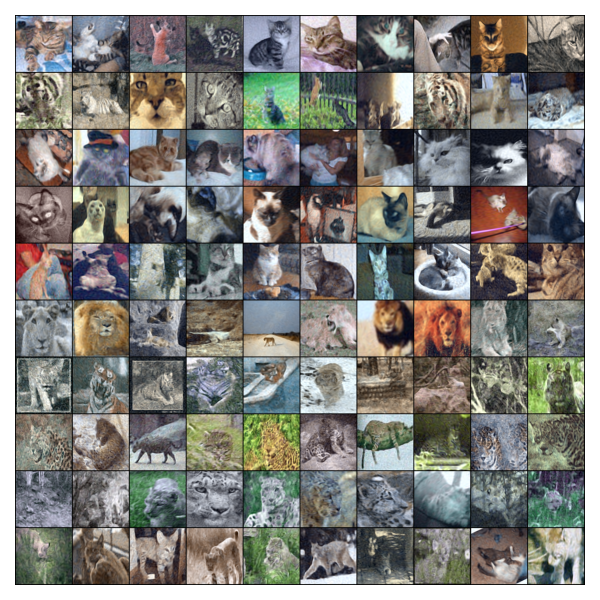}
\caption{Initialization}
\end{subfigure}
\begin{subfigure}[t]{0.49\linewidth}
\includegraphics[width=\textwidth]{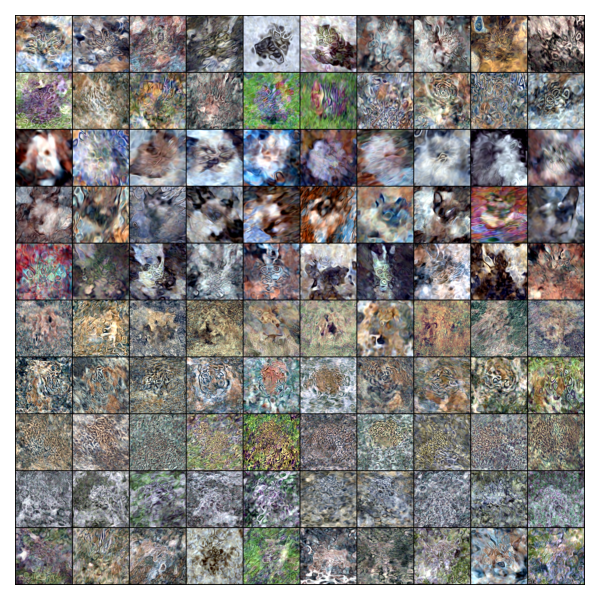}
\caption{Synthesized}
\end{subfigure}
\caption{(a) Warm-up initialized images on ImageMeow with DDiF, (b) Best-performed synthetic dataset represented by DDiF. We visualize the first 10 images, while DDiF constructs 51 images per class under the same budget.}
\label{appendix:fig:vis_meaw}
\end{figure}

\begin{figure}
\centering
\begin{subfigure}[t]{0.49\linewidth}
\includegraphics[width=\textwidth]{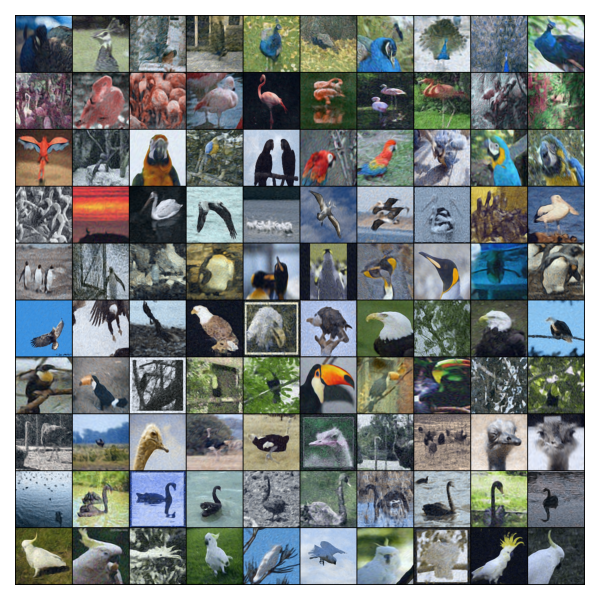}
\caption{Initialization}
\end{subfigure}
\begin{subfigure}[t]{0.49\linewidth}
\includegraphics[width=\textwidth]{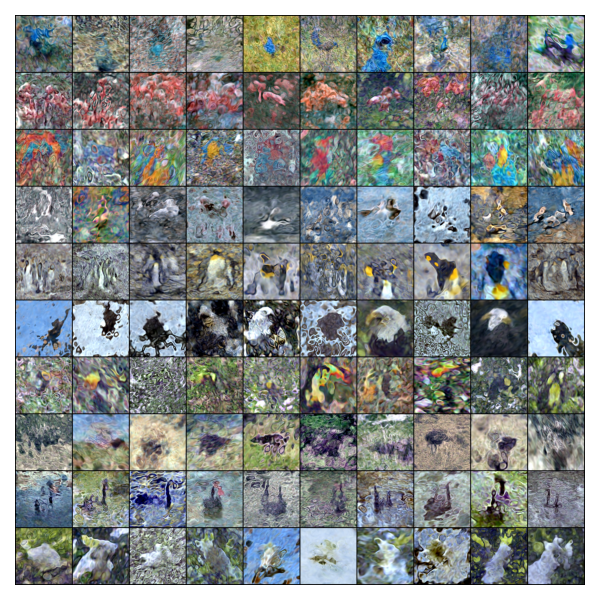}
\caption{Synthesized}
\end{subfigure}
\caption{(a) Warm-up initialized images on ImageSquawk with DDiF, (b) Best-performed synthetic dataset represented by DDiF. We visualize the first 10 images, while DDiF constructs 51 images per class under the same budget.}
\label{appendix:fig:vis_squawk}
\end{figure}

\begin{figure}
\centering
\begin{subfigure}[t]{0.42\linewidth}
\includegraphics[width=\textwidth]{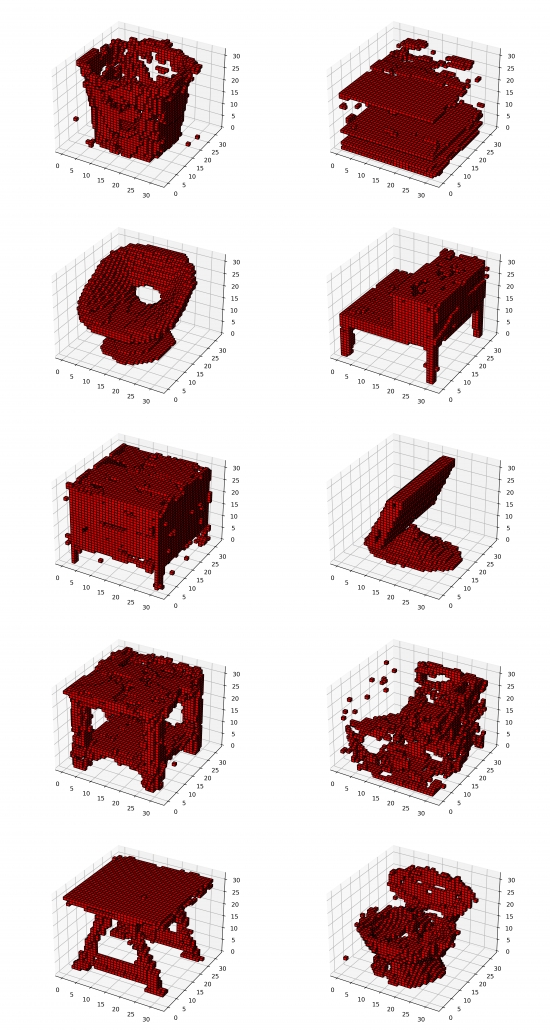}
\caption{Initialization}
\end{subfigure}
\begin{subfigure}[t]{0.42\linewidth}
\includegraphics[width=\textwidth]{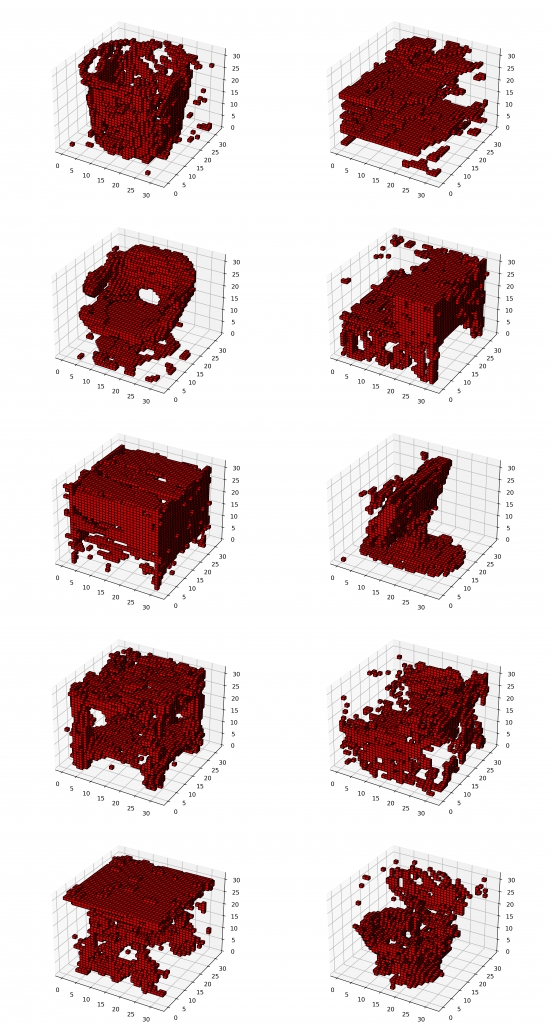}
\caption{Synthesized}
\end{subfigure}
\caption{(a) Warm-up initialization images and (b) Synthesized images of ModelNet-10. Start reading labels from the top and continue to right: 1) bathtub, 2) bed, 3) chair, 4) desk, 5) dresser, 6) monitor, 7) nightstand, 8) sofa, 9) table, 10) toilet}
\label{appendix:fig:vis_modelnet}
\end{figure}

\begin{figure}
\centering
\begin{subfigure}[t]{0.8\linewidth}
\includegraphics[width=\textwidth]{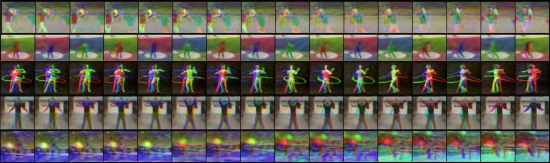}
\caption{Initialization}
\end{subfigure}
\begin{subfigure}[t]{0.8\linewidth}
\includegraphics[width=\textwidth]{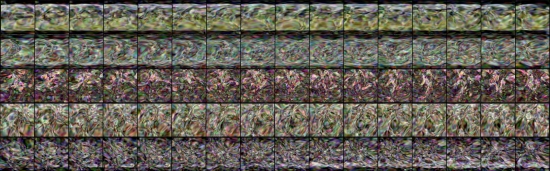}
\caption{Synthesized}
\end{subfigure}
\caption{(a) Warm-up initialization images and (b) Synthesized images of MiniUCF. Start reading labels from the top: 1) FrisbeeCatch, 2) HammerThrow, 3) HulaHoop, 4) JumpingJack, 5) ParallelBars}
\label{appendix:fig:vis_video}
\end{figure}

\end{document}